\let\c@author\relax
\DeclareMathOperator{\vecc}{\mathrm{vec}}
\newcommand{\diag}{\mathrm{diag}}
\newcommand{\bfA}{{\bf A}}
\newcommand{\bfB}{{\bf B}}
\newcommand{\bfC}{{\bf C}}
\newcommand{\bfF}{{\bf F}}
\newcommand{\bfK}{{\bf K}}
\newcommand{\bfO}{{\bf O}}
\newcommand{\bfP}{{\bf P}}
\newcommand{\bfS}{{\bf S}}
\newcommand{\bfT}{{\bf T}}
\newcommand{\bfW}{{\bf W}}
\newcommand{\bfY}{{\bf Y}}
\newcommand{\bfa}{{\bf a}}
\newcommand{\bfe}{{\bf e}}
\newcommand{\bfk}{{\bf k}}
\newcommand{\bfx}{{\bf x}}
\newcommand{\bfy}{{\bf y}}
\newcommand{\bff}{{\bf f}}
\newcommand{\bfv}{{\bf v}}
\newcommand{\hbf}{\hat{\bf F}}
\newcommand{\ga}{\mathcal{G}(\bfA)}
\pgfplotsset{compat=1.15}   
\pgfplotsset{cycle list/Set1,
             cycle multiindex* list={
                mark list*\nextlist
                Set1\nextlist},
                }
\theoremstyle{plain}
\newtheorem{thm}{Theorem}[section]
\theoremstyle{definition}
\newtheorem{de}[thm]{Definition}
\theoremstyle{remark}
\def\eqref#1{equation~\ref{#1}}
\def\1{\bm{1}}
\DeclareMathAlphabet{\mathsfit}{\encodingdefault}{\sfdefault}{m}{sl}
\SetMathAlphabet{\mathsfit}{bold}{\encodingdefault}{\sfdefault}{bx}{n}
\newcommand{\R}{\mathbb{R}}
\DeclareMathOperator*{\argmax}{arg\,max}
\newcommand{\BibTeX}{B\kern-.05em{\sc i\kern-.025em b}\kern-.08em\TeX}
\begin{document}

%%%%%%%%%%%%%%%%%%%%%%%%%%%%%%%%%%%%%%%%%%%%%%%%%%%%%%%%%%%%%%%%%%%%%%%%

\begin{frontmatter}

%%% Use this command to specify your submission number.
%%% In doubleblind mode, it will be printed on the first page.

\paperid{368} 

%%% Use this command to specify the title of your paper.

\title{Resilient Graph Neural Networks: A Coupled Dynamical Systems Approach}

%%% Use this combinations of commands to specify all authors of your 
%%% paper. Use \fnms{} and \snm{} to indicate everyone's first names 
%%% and surname. This will help the publisher with indexing the 
%%% proceedings. Please use a reasonable approximation in case your 
%%% name does not neatly split into "first names" and "surname".
%%% Specifying your ORCID digital identifier is optional. 
%%% Use the \thanks{} command to indicate one or more corresponding 
%%% authors and their email address(es). If so desired, you can specify
%%% author contributions using the \footnote{} command.

\author[A]{\fnms{Moshe}~\snm{Eliasof}\thanks{Email: me532@cam.ac.uk.}\footnote{Equal contribution.}}
\author[B]{\fnms{Davide}~\snm{Murari}\thanks{Email: davide.murari@ntnu.no.}\footnotemark}
\author[A]{\fnms{Ferdia}~\snm{Sherry}\thanks{Email: fs436@cam.ac.uk.}\footnotemark} 
\author[A]{\fnms{Carola-Bibiane}~\snm{Sch{\"o}nlieb}\thanks{Email: cbs31@cam.ac.uk.}}

\address[A]{Department of Applied Mathematics and Theoretical Physics, University of Cambridge}
\address[B]{Department of Mathematical Sciences, Norwegian University of Science and Technology}

%%% Use this environment to include an abstract of your paper.

\begin{abstract}
	Graph Neural Networks (GNNs) have established themselves as a key component in addressing diverse graph-based tasks. Despite their notable successes, GNNs remain susceptible to input perturbations in the form of adversarial attacks. This paper introduces an innovative approach to fortify GNNs against adversarial perturbations through the lens of coupled dynamical systems. Our method introduces graph neural layers based on differential equations with contractive properties, which, as we show, improve the robustness of GNNs. A distinctive feature of the proposed approach is the simultaneous learned evolution of both the node features and the adjacency matrix, yielding an intrinsic enhancement of model robustness to perturbations in the input features and the connectivity of the graph. We mathematically derive the underpinnings of our novel architecture and provide theoretical insights to reason about its expected behavior. We demonstrate the efficacy of our method through numerous real-world benchmarks, reading on par or improved performance compared to existing methods.
\end{abstract}

\end{frontmatter}

%%%%%%%%%%%%%%%%%%%%%%%%%%%%%%%%%%%%%%%%%%%%%%%%%%%%%%%%%%%%%%%%%%%%%%%%

\begin{refsection}

\section{Introduction}
\label{sec:intro}
In recent years, the emergence of Graph Neural Networks (GNNs) has revolutionized the field of graph machine learning, offering remarkable capabilities for modeling and analyzing complex graph-structured data. These networks have found applications in diverse domains and applications, from Network Analysis \cite{defferrard2016convolutional,kipf2016semi} and recommendation systems, Bioinformatics \cite{senior_improved_2020, corso2023diffdock}, Computer Vision \cite{wang2018dynamic}, and more. However, the increasing prevalence of GNNs in critical decision-making processes has also exposed them to new challenges, particularly in terms of vulnerability to adversarial attacks.

In particular, it has been shown that one can design small adversarial perturbations of the input graph and its node features, that result in vastly different GNN predictions \cite{waniek2018hiding, zugner2019adversarial}. Adversarial attacks received extensive attention in the context of Convolutional Neural Networks (CNNs) \cite{goodfellow_explaining_2015}, but graph data has an added degree of freedom compared to data on a regular grid: the connectivity of the graph can be altered by adding or removing edges. Also, in natural settings, such as social network graphs, connectivity perturbations may be more realistically implementable by a potential adversary, rather than perturbations of the node features. This gives rise to hard discrete optimization problems, which necessitates the study of adversarial robustness specialized to graph data and GNNs~\cite{gunnemannGraphNeuralNetworks2022}.

In this paper we propose a GNN architecture that jointly processes the adversarially attacked adjacency matrix and node features by a learnable neural dynamical system. Our approach extends the active research front that aims to design neural architectures that enjoy inherent properties and behavior, drawing inspiration from dynamical systems with similar properties \cite{haberStableArchitecturesDeep2017,E2017,chen_neural_2018,chamberlain2021grand,thorpe2022grand,eliasof2021pde,celledoni2023dynamical}. This approach has also been used to improve the robustness of CNNs, see \ref{app:relatedRobustness}. Specifically, the flow map of the coupled dynamical system under consideration in this work draws inspiration from the theory of non-Euclidean contractive systems developed in~\cite{FB-CTDS} to offer an adversarially robust GNN. We name our method CSGNN, standing for Coupled dynamical Systems GNN. 
 Notably, because adjacency matrices are not arbitrary matrices, their learnable neural dynamical system needs to be carefully crafted to ensure it is node-permutation equivariant, and that it yields a symmetric adjacency matrix. To the best of our knowledge, this is the first attempt at learning coupled dynamical systems that evolve both the node features and the adjacency matrix. 

\paragraph{Main contributions.} This paper offers the following advances in adversarial defense against poisoning attacks in GNNs:
(i) A novel architecture, CSGNN, that jointly evolves the node features and the adjacency matrix in a data-driven manner to improve GNN robustness to input perturbations, (ii) A theoretical analysis of our CSGNN, addressing the relevance of the architecture based on the theory of contractive dynamical systems and, (iii) Improved performance on various graph adversarial defense benchmarks.

\paragraph{Outline of the paper.} Section \ref{sec:related} presents a literature overview. Section \ref{sec:problem} introduces the problem of adversarial robustness. Section \ref{sec:method} presents and theoretically analyzes our methodology based on coupled contractive dynamical systems to improve the network's robustness. Section \ref{sec:experiments} includes a thorough  experimental analysis of our CSGNN and compares its performance with previous works. Section \ref{sec:summary} summarizes and discusses the obtained results. Proofs and further results are provided in the supplementary material.

\section{Related Work}
\label{sec:related}

\paragraph{Graph Neural Networks as Dynamical Systems.}
Drawing inspiration from dynamical systems models that admit desired behaviors, various GNN architectures have been proposed to take advantage of such characteristic properties. In particular, building on the interpretation of CNNs as discretizations of PDEs \cite{chen_neural_2018, RuthottoHaber2018}, there have been multiple works that view GNN layers as time integration steps of learnable non-linear diffusion equations. Such an approach allowed exploiting this connection to understand and improve GNNs \cite{chamberlain2021grand,eliasof2021pde}, for example by including energy-conserving terms alongside diffusion \cite{eliasof2021pde, rusch2022graph} or using reaction-diffusion equations \cite{wang2022acmp, choi2022gread, eliasof2024graph}, advection and convection systems \cite{zhao2023graph,eliasof2024feature}, and higher-order ODEs \cite{eliasof2024temporal}. These approaches to designing GNNs have been shown to be of significant benefit when trying to overcome common issues such as over-smoothing \cite{oono2020graph, cai2020note}
%, rusch2023survey} 
and over-squashing \cite{alon2021oversquashing, gravina2024tackling}. Recently, it was shown that neural diffusion GNNs are robust to graph attacks \cite{song2022robustness}. The design of GNNs via contractive dynamical systems was also considered in \cite{scarselli2008graph,gallicchio2010graph}. The studied architectures differ from those presented in this work, and were not applied in the context of adversarial robustness.

We note that deep learning architectures are also often harnessed to numerically solve ODEs and PDEs or discover such dynamical systems from data \cite{ long2018pdenet,Bar-Sinai15344,brandstetter2022message}. However, in this paper, we focus on drawing links between GNNs and contractive dynamical systems to improve GNN robustness to adversarial attacks.
Additionally, we remark that the use of coupled dynamical systems updating both the adjacency matrix and the feature matrix jointly has also been used in \cite{huang2021coupled}. However, this paper focuses on dynamic graphs for data-driven modeling purposes, while the focus of this paper it on adversarial robustness. Additionally, the parameterization of the adjacency matrix updates differs considerably between our CSGNN and \cite{huang2021coupled} both in structure and the number of required parameters.

\paragraph{Adversarial Defense in Graph Neural Networks.} Various adversarial attack algorithms have been designed for graph data, notably including nettack~\cite{zugner2018adversarial}, which makes local changes to targeted nodes' features and connectivity, and metattack~\cite{zugner2019adversarial}, which uses a meta-learning approach with a surrogate model, usually a graph convolutional network (GCN) \cite{kipf2016semi}, to generate a non-targeted global graph attack and, recently \cite{chen2022understanding} proposed a novel method to create graph injection attacks.

In response to these developments,  significant efforts were made to design methods that improve GNN robustness. The majority of these approaches focus on perturbations of the graph connectivity, as those are more likely and practical in social network graph datasets. Several of these methods preprocess the graph based on underlying assumptions or heuristics, for example, dropping edges where node features are not similar enough, under the assumption that the true, non-attacked, graph is homophilic \cite{jaccard}. Another approach, in \cite{gcnsvd},  suggests truncating the singular value decomposition of the adjacency matrix, effectively eliminating its high-frequency components, based
on the assumption that adversarial attacks add high-frequency perturbations to the true adjacency matrix. The aforementioned approaches are unsupervised, and are typically added to existing GNN architectures while training them for a specific downstream task, such as node classification. Additionally, there are defenses that clean the attacked graph in a supervised manner, such as Pro-GNN \cite{jin2020graph}, which solves a joint optimization problem for the GNN's learnable parameters, as well as for the adjacency matrix, with sparsity and low-rank regularization.

Besides methods for cleaning attacked adjacency matrices, there are also methods that aim to design robust GNN \emph{architectures}. An example of this is given in \cite{huang2023robust}, where the GCN architecture is modified to use a mid-pass filter, resulting in increased robustness. In this context, it is also natural to consider the use of Lipschitz constraints: given an upper bound on the Lipschitz constant of a classifier and a lower bound on its margin, we can issue robustness certificates \cite{tsuzuku2018lipschitz}. This has been studied to some extent in the context of GNNs \cite{jia2023enhancing}, although, in this case, and in contrast to our work,  the Lipschitz continuity is studied only with respect to the node features. In our work, we also consider the Lipschitz continuity with respect to the adjacency matrix. It is worth noting that the development of a defense mechanism should ideally be done in tandem with the development of an adaptive attack, although designing an appropriate adaptive attack is not generally a straightforward task \cite{mujkanovicAreDefensesGraph2022}. In \cite{mujkanovicAreDefensesGraph2022}, a set of attacked graphs are provided as ``unit tests'', which have been generated using adaptive attacks for various defenses. Therefore, in our experiments, we consider both standard, long-standing benchmarks, as well as recently proposed attacks in \cite{mujkanovicAreDefensesGraph2022}. In recent works like  \cite{song2022robustness, zhao2024adversarial}, the robustness of GNNs was studied through a \emph{node feature} dynamical systems point of view. However, in our CSGNN, we propose to learn a coupled dynamical system that involves \emph{node features as well as the adjacency matrix}.

\section{Preliminaries}
\label{sec:problem}
%\subsection{Graph  Adversarial Defense}
%\label{subsec:defense}
\textbf{Notations.} 
Let ${G} = ({V},{E})$ be a graph with $n$ nodes ${V}$ and $m$ edges ${E}$, also associated with the adjacency matrix $\bfA \in \mathbb{R}^{n \times n}$, such that $\bfA_{i,j} = 1$ if $(i,j) \in {E}$ and 0 otherwise, and let $\bff_i\in\mathbb{R}^{c_{\mathrm{in}}}$ be the input feature vector of the node $\bfv_i\in{V}$. In this paper, we focus on \emph{poisoning} attacks, and we assume two types of possible attacks (perturbations) of the true data before training the GNN:
	(i) The features $\bff_i$ are perturbed to $(\bff_*)_i$, and, (ii)
	 the adjacency matrix $\bfA$ of the graph is perturbed by adding or removing edges, inducing a perturbed adjacency matrix $\bfA_* \in \mathbb{R}^{n \times n}$. 
%The experiments in Section~\ref{sec:experiments} focus only on attacks to the connectivity matrix, to allow for a better comparison with the existing approaches.
We denote by ${G}_{*} = ({V}_*,{E}_*)$ the attacked graph with the same vertices, i.e.\ ${V}={V}_*$, by $\bfA_*\in\mathbb{R}^{n\times n}$ the perturbed adjacency matrix, and the perturbed node features are denoted by $(\bff_*)_i$. We also denote by $\bfF,\bfF_*\in\mathbb{R}^{n\times c_{\mathrm{in}}}$ the matrices collecting,  as rows, the individual node features $\bff_i,(\bff_*)_i$. A table with these symbols and the rest of the notation is provided in Appendix \ref{app:tableNotation}.

\paragraph{Measuring graph attacks.} To quantify the robustness of a GNN with respect to an adversarial attack, it is necessary to measure the impact of the attack. For node features, it is common to consider the Frobenius norm $\|\cdot\|_F$ to quantify the difference between the perturbed features $\bfF_{*}$ from the clean ones $\bfF$. However, the Frobenius norm is not a natural metric for adjacency attacks, see \cite{bojchevski2019adversarial,gunnemannGraphNeuralNetworks2022} for example. Instead, it is common to measure the $\ell^0$ distance between the true and attacked adjacency matrices, as follows:
\begin{equation}
	\label{eq:ell0Norm}
	\ell^0(\bfA,\bfA_*) = |\mathcal{I}(\bfA,\bfA_*)|,
\end{equation}
where $\mathcal{I}(\bfA,\bfA_*) = \{i,j\in \{1,\ldots,n\}:\,\,\bfA_{ij}\neq(\bfA_*)_{ij}\}$. For brevity, we refer to $\mathcal{I}(\bfA,\bfA_*)$ as $\mathcal{I}$, and by $|\mathcal{I}|$ we refer to the cardinality of $\mathcal{I}(\bfA,\bfA_*)$, as in \Cref{eq:ell0Norm}. The $\ell^0$ distance measures how many entries of $\bfA$ need to be modified to obtain $\bfA_*$, and is typically used to measure budget constraints in studies of adversarial robustness of GNNs \cite{gunnemannGraphNeuralNetworks2022, mujkanovicAreDefensesGraph2022}. For binary matrices, the $\ell^0$ norm coincides with the $\ell^1$ norm of the flattened version of the matrix $\mathrm{vec}\left(\bfA\right)$. We thus work with the $\ell^1$ norm since it allows us to build contractive networks based on dynamical systems, and present some additional remarks on the connection it has with $\ell^0$ in Appendix \ref{app:ell0}.
Throughout this paper, we denote the perturbed node features by $\bfF_* = \bfF + \delta\bfF$, and the perturbed adjacency matrix by $\bfA_* = \bfA + \delta\bfA$, where $\|\delta\bfF\|_F\leq\varepsilon_1$, and $\|\mathrm{vec}(\delta\bfA)\|_1\leq \varepsilon_2$.

In adversarial defense, the goal is to design a mechanism such that the output of the neural network is stable with respect to the perturbations $\delta\bfF$ and $\delta\bfA$, as formally described in the following definition.
\begin{de}[$(\varepsilon_1,\varepsilon_2)-$robust GNN]
Let $\mathcal{N}:\mathbb{R}^{n\times c_{\text{in}}}\times\mathbb{R}^{n\times n}\to\mathbb{R}^{n\times c_{\text{out}}}$ be a GNN. We say $\mathcal{N}$ to be $(\varepsilon_1,\varepsilon_2)-$robust if for every $j=1,\cdots,n$
\[
\argmax_{i=1,...,c_{\text{out}}}\left(\mathcal{N}(\bfF+\delta \bfF,\bfA + \delta \bfA)\right)_j = \argmax_{i=1,...,c_{\text{out}}}\left(\mathcal{N}(\bfF,\bfA)\right)_j,
\]
for every $(\delta \bfF,\delta \bfA)\in\mathbb{R}^{n\times c_{\text{in}}}\times\mathbb{R}^{n\times n}$ with $\|\delta \bfF\|_F\leq \varepsilon_1$ and $\|\mathrm{vec}\left(\delta \bfA\right)\|_1\leq \varepsilon_2$, where $\left(\mathcal{N}(\bfF,\bfA)\right)_j\in\mathbb{R}^{c_{\text{out}}}$ is the $j-$th row of $\mathcal{N}(\bfF,\bfA)$.
\end{de}
As discussed in Section~\ref{sec:related}, this goal is typically met either by modified architectures, training schemes, as well as their combinations.
In Section~\ref{sec:method}, we present CSGNN - a defense mechanism based on a dynamical system perspective. This approach aims to reduce the sensitivity to input perturbations of the neural network and is based on the theory of contractive dynamical systems \cite{FB-CTDS}. We will refer to a map as contractive with respect to a norm $\|\cdot\|$ if it is $1-$Lipschitz in such norm. Furthermore, we define contractive dynamical systems as those whose solution map is contractive with respect to the initial conditions. For completeness, in Appendix~\ref{app:contractiveSystemsDefinition}, we mathematically define and discuss contractive systems.

\emph{We start from the assumption that the best training accuracy on a given task corresponds to the clean inputs $(\bfA,\bfF)$. The main idea of CSGNN is to jointly evolve the features $\bfF_*$ and the adjacency matrix $\bfA_*$, so that even if their clean versions $\bfF$ and $\bfA$ are not known, the network would output a vector measurably similar to the one corresponding to $(\bfA,\bfF)$, as we formulate in the following section. Instead of using heuristics to clean or modify the adjacency matrix, hence needing more knowledge on the attack, we process both the matrices in a data-driven manner.}

\section{Method}
\label{sec:method}

\subsection{Graph Neural Networks Inspired by Coupled Contractive Systems}
We now present our CSGNN, focused on the task of robust node classification, where we wish to predict the class of each node in the graph, given attacked input data $(\bfF_{*}, \bfA_{*})$. The goal is therefore to design and learn a map $\mathcal{D}:\mathbb{R}^{n\times c}\times \mathbb{R}^{n\times n}\rightarrow \mathbb{R}^{n\times c}\times \mathbb{R}^{n\times n}$, that evolves the node features, as well as the adjacency matrix. To the best of our knowledge, this is the first attempt at learning a \emph{coupled dynamical system} that considers both the node features and the adjacency matrix in the context of graph-node classification. As discussed in Section \ref{sec:related}, utilizing dynamical systems--perspective in GNNs was shown to provide strong inductive bias and more predictable behavior. However, existing defence methods often limit this interpretation to \emph{node feature updates}, while using heuristics to pre-process the adjacency matrix, if desired. These heuristics assume further knowledge of the attack, hence motivating the use of a data-driven approach to design these updates, as we present in this section. Here, we advocate for the coupling of \emph{node features and adjacency matrix} updates, in a principled, data-driven and dynamical system--based fashion. We focus on undirected graphs, but the proposed methodology can be adapted to directed ones with some modifications that we discuss throughout.

We implement the map $\mathcal{D}$ as a composition of learnable dynamical systems inspired by contractivity theory, that simultaneously update $\bfF_{*}$ and $\bfA_{*}$.  
Specifically, we model $\mathcal{D}$ as an approximation of the solution, at the final time $T$, of the continuous dynamical system:
\begin{equation}\label{eq:dynamics}
	\begin{cases}
		\dot{F}(t) = X(t,F(t),A(t))\in\mathbb{R}^{n\times c} \\
		\dot{A}(t) = Y(t,A(t))\in\mathbb{R}^{n\times n},     \\
		(F(0),A(0))=(\mathcal{K}(\bfF_*),\bfA_*),    
	\end{cases}
\end{equation}
where $\dot{F} = \mathrm dF/\mathrm dt$ denotes the first order derivative in time, and $\mathcal{K}: \mathbb{R}^{c_{\mathrm{in}}} \rightarrow \mathbb{R}^{c}$ is a linear embedding layer. %Throughout, we will mostly consider $(F(0),A(0))=(\bfF_*,\bfA_*)$ to lighten the notation since the focus is on the system in \Cref{eq:dynamics}, i.e. on $\mathcal{D}$. 
Similarly to \cite{haberStableArchitecturesDeep2017, eliasof2021pde, chamberlain2021grand}, we assume both $X$ and $Y$ to be piecewise constant in time, i.e.,  that on a given time interval $[0,T]$, there is a partition $0=\tau_0<\tau_1<\ldots <\tau_{L}=T$, $h_l=\tau_l-\tau_{l-1}$ for $l=1,\ldots,L$, such that $X(t,\bfF,\bfA) = X_l(\bfF,\bfA)$, $Y(t,\bfA)=Y_l(\bfA)$, $\bfF\in\R^{n\times c}$, $\bfA\in\R^{n\times n},$ $ t\in [\tau_{l-1},\tau_l)$, for a pair of functions $X_l:\mathbb{R}^{n\times c}\times\mathbb{R}^{n\times n}\rightarrow\mathbb{R}^{n\times c}$, $Y_l:\mathbb{R}^{n\times n}\rightarrow\mathbb{R}^{n\times n}$. 
 When referring to the approximation of $(F(\tau_l),A(\tau_l))$, we use $(\bfF^{(l)},\bfA^{(l)})$ when we start with the clean pair, and $(\bfF^{(l)}_*,\bfA^{(l)}_*)$ with the perturbed one.
 To obtain a neural network, we consider the solution of \Cref{eq:dynamics} at time $T$, which is approximated using the explicit Euler method. 
 More explicitly, we compose $L$ explicit Euler layers each defined as $D_l((\bfF,\bfA)):=(\Psi_{X_l}^{h_l}(\bfF,\bfA),\Psi_{Y_l}^{h_l}(\bfA))$, $l=1,\ldots ,L$, where $\Psi^{h_l}_{X_l}(\bfF,\bfA):=\bfF+h_lX_l(\bfF,\bfA)$, and $\Psi^{h_l}_{Y_l}(\bfA):=\bfA+h_lY_l(\bfA)$ are the explicit Euler steps for $X_l$ and $Y_l$, respectively. The map $\mathcal{D}$ is then defined as the composition of $L$ layers:
  \begin{equation}
    \label{eq:D}
    \mathcal{D} := D_L\circ \ldots \circ D_1.
\end{equation}
    The coupled dynamical system encapsulated in $\mathcal{D}$ evolves both the hidden node features and the adjacency matrix for $L$ layers. We denote the output of $\mathcal{D}$ by $(\bfF^{(L)}_*, \bfA^{(L)}_*) = \mathcal{D}((\mathcal{K}(\bfF_*), \bfA_*))$. To obtain node-wise predictions from the network to solve the downstream task, we feed the final GNN node features $\bfF^{(L)}_*$ to a classifier $\mathcal{P}:\mathbb{R}^{c} \rightarrow \mathbb{R}^{c_{\mathrm{out}}}$, which is implemented by a linear layer. 
To better explain the structure of CSGNN, we provide an illustration in Figure~\ref{fig:diagramNet} and a detailed feed-forward description in Appendix~\ref{appendix:architecture}.

  	\begin{figure*}[t]
		\centering
		\includegraphics[width=.7\textwidth]{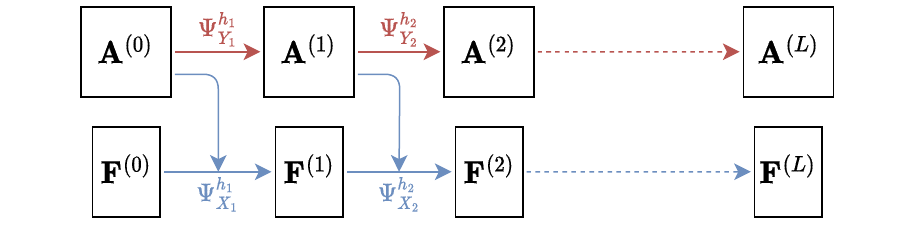}
            \vspace{\baselineskip}
		\caption{The coupled dynamical system $\mathcal{D}$ in CSGNN, as formulated in \Cref{eq:D}.}
          \vspace{\baselineskip}
		\label{fig:diagramNet}

	\end{figure*}

In what follows, we describe how to characterize the functions $\Psi_{X_l}^{h_l}$ and $\Psi_{Y_l}^{h_l}$ from \Cref{eq:D}.
First, in Section~\ref{subsec:stableFeatures}, we derive the node feature dynamical system governed by $X$. We show, that under mild conditions, contractivity can be achieved, allowing us to derive a bound on the influence of the attacked node features $\bfF_{*}$ on the GNN output.
Second, in Section~\ref{subsec:stableAdjacencyMat}, we develop and propose a novel contractive dynamical system for the adjacency matrix, which is guided by $Y$. 

Our motivation in designing such a coupled system stems from the nature of our considered adversarial settings. That is, we assume, that the adjacency matrix is perturbed. We note, that the adjacency matrix controls the propagation of node features. Therefore, leaving the input attacked adjacency unchanged may result in sub-par performance, as we show experimentally in Appendix \ref{appendix:experiments}. While some methods employ a pre-processing step of the attacked matrix $\bfA_*$ \cite{gcnsvd, jaccard}, it has been shown that joint optimization of the node features and the adjacency matrix can lead to improved performance \cite{jin2020graph}. Therefore, we develop and study novel, coupled dynamical systems that evolve both the node features and the adjacency matrix and are learned in a data-driven manner. This perspective allows to obtain favorable properties such as adjacency matrix contractivity, thereby reducing the sensitivity to adversarial adjacency matrix attacks.

\subsection{Contractive Node Feature Dynamical System}
\label{subsec:stableFeatures}
We now describe the learnable functions $\Psi_{X_l}^{h_l}, \ l=1,\ldots,L$, that determine the node feature dynamics of our CSGNN. We build upon a diffusion-based GNN layer, \cite{chamberlain2021grand, eliasof2021pde}, that is known to be stable and, under certain assumptions, is contractive. More explicitly, our proposed $\Psi_{X_l}^{h_l}$ is characterized as follows:
	\begin{align}
			&\Psi_{X_l}^{h_l}(\bfF^{(l-1)},\bfA^{(l-1)}):=\bfF^{(l)} \nonumber \\
   &:= \bfF^{(l-1)} + h_l  X_l(\bfF^{(l-1)},\bfA^{(l-1)}) \label{eq:nodeFeatureDynamics} \\
			&= \bfF^{(l-1)} - h_l \left(\mathcal{G}^{(l-1)}\right)^{\top}\sigma\left(\mathcal{G}^{(l-1)}\bfF^{(l-1)}\bfW_l\right)\bfW_{l}^{\top}\tilde{\bfK}_l,\nonumber
		\end{align}
	where $\mathcal{G}^{(l-1)}:=\mathcal{G}(\bfA^{(l-1)})$, while $\bfW_l\in\R^{c\times c}$ and $\tilde{\bfK}_l = (\bfK_l+\bfK_l^{\top})/2\in\mathbb{R}^{c\times c}$ are learnable parameters which allows a gradient flow interpretation of our system, as in \cite{digiovanni2023understanding}. The time steps $h_1,...,h_L>0$ are hyperparameters that are suitably constrained to ensure the network's contractivity according to our theoretical analysis. For each layer, we use the same step-size for the feature and adjacency updates. Also, as in \cite{eliasof2021pde}, the map
 $\mathcal{G}(\bfA^{(l-1)}):\mathbb{R}^{n \times c}\rightarrow\mathbb{R}^{n\times n\times c}$ is the gradient operator of $\bfA^{(l-1)}$ defined in Appendix~\ref{app:featContractive}, and we set $\sigma=\rm{LeakyReLU}$.
 \begin{thm}
    [\Cref{eq:nodeFeatureDynamics}
    can induce stable node dynamics]\label{thm:stable}	Assume $\sigma$ is a monotonically increasing 1-Lipschitz non-linear function. There are choices of $(\bfW_l,\bfK_l)\in\R^{c\times c}\times \R^{c\times c}$, for which the explicit Euler step in \Cref{eq:nodeFeatureDynamics} is stable for a small enough $h_l>0$, i.e.\ there is a convex energy $\mathcal{E}_{\bfA}$ for which
    \begin{equation}\label{eq:lyap}
    \mathcal{E}_{\bfA}(\Psi_{X_l}^{h_l}(\bfF^{(l-1)},\bfA))\leq \mathcal{E}_{\bfA}(\bfF^{(l-1)}),\,\,l=1,\ldots,L.
    \end{equation}
\end{thm}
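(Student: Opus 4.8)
The plan is to exhibit an explicit choice of parameters for which the layer \Cref{eq:nodeFeatureDynamics} becomes the explicit Euler step of a gradient flow of a \emph{fixed} convex energy, and then to invoke the classical descent lemma for gradient descent on a smooth function. First I would choose $\bfK_l$ symmetric so that $\tilde{\bfK}_l=\bfI$ (for instance $\bfK_l=\bfI$), leaving $\bfW_l$ arbitrary, and I would use that $\sigma=\mathrm{LeakyReLU}$ with slope $\alpha\in(0,1)$ is the derivative of the convex $C^1$ function $\rho$ given by $\rho(x)=\tfrac12 x^2$ for $x\geq 0$ and $\rho(x)=\tfrac{\alpha}{2}x^2$ for $x<0$, which satisfies $0\leq\rho''\leq 1$ almost everywhere. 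I would then propose the candidate energy
\begin{equation*}
\mathcal{E}_{\bfA}(\bfF) := \sum_{i,j}\rho\big((\mathcal{G}(\bfA)\bfF\bfW_l)_{ij}\big),
\end{equation*}
which is convex in $\bfF$ since it is a sum of convex functions composed with the affine map $\bfF\mapsto\mathcal{G}(\bfA)\bfF\bfW_l$ (here $\bfA$, hence $\mathcal{G}(\bfA)$, is held fixed, as in \Cref{eq:lyap}).

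Second, a chain-rule computation gives $\nabla_{\bfF}\mathcal{E}_{\bfA}(\bfF)=\mathcal{G}(\bfA)^{\top}\sigma(\mathcal{G}(\bfA)\bfF\bfW_l)\bfW_l^{\top}$, which for $\tilde{\bfK}_l=\bfI$ is exactly $-X_l(\bfF,\bfA)$ from \Cref{eq:nodeFeatureDynamics}; hence $\Psi_{X_l}^{h_l}(\cdot,\bfA)$ is one step of gradient descent on $\mathcal{E}_{\bfA}$ with step size $h_l$. Third, I would bound the smoothness of $\mathcal{E}_{\bfA}$: writing $\vecc(\mathcal{G}(\bfA)\bfF\bfW_l)=(\bfW_l^{\top}\otimes\mathcal{G}(\bfA))\vecc(\bfF)$, the Hessian equals $(\bfW_l^{\top}\otimes\mathcal{G}(\bfA))^{\top}\diag(\rho''(\cdot))(\bfW_l^{\top}\otimes\mathcal{G}(\bfA))$, so using $0\leq\rho''\leq 1$ we get $0\preceq\nabla^2\mathcal{E}_{\bfA}\preceq(\bfW_l\bfW_l^{\top})\otimes(\mathcal{G}(\bfA)^{\top}\mathcal{G}(\bfA))$; in particular $\nabla_{\bfF}\mathcal{E}_{\bfA}$ is $\beta$-Lipschitz with $\beta:=\|\bfW_l\|_2^2\,\|\mathcal{G}(\bfA)\|_2^2$, a finite constant depending only on the (fixed) parameters and $\bfA$.

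Finally, the descent lemma for $\beta$-smooth functions yields, for every $0<h_l\leq 1/\beta$,
\begin{equation*}
\mathcal{E}_{\bfA}\big(\bfF-h_l\nabla_{\bfF}\mathcal{E}_{\bfA}(\bfF)\big)\leq\mathcal{E}_{\bfA}(\bfF)-h_l\Big(1-\tfrac{\beta h_l}{2}\Big)\big\|\nabla_{\bfF}\mathcal{E}_{\bfA}(\bfF)\big\|_F^2\leq\mathcal{E}_{\bfA}(\bfF),
\end{equation*}
which is precisely \Cref{eq:lyap} for the chosen parameters; if $\beta=0$ the layer is the identity and the inequality is trivial. If one wants a slightly more general statement, any symmetric positive semidefinite $\tilde{\bfK}_l$ also works, since then $X_l(\bfF,\bfA)=-\nabla_{\bfF}\mathcal{E}_{\bfA}(\bfF)\tilde{\bfK}_l$ still satisfies $\langle\nabla_{\bfF}\mathcal{E}_{\bfA}(\bfF),X_l(\bfF,\bfA)\rangle\leq 0$ and a descent estimate goes through with $\beta$ replaced by $\|\bfW_l\|_2^2\,\|\mathcal{G}(\bfA)\|_2^2\,\|\tilde{\bfK}_l\|_2$.

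I expect the only delicate points — and hence the main obstacle — to be twofold: (i) matching the exact placement of $\bfW_l$, $\bfW_l^{\top}$ and the operator $\tilde{\bfK}_l$ in \Cref{eq:nodeFeatureDynamics} with the gradient of an energy of the above product form, which is what forces the choice $\tilde{\bfK}_l=\bfI$ (or at least $\tilde{\bfK}_l\succeq 0$); and (ii) verifying that the nonsmoothness of $\mathrm{LeakyReLU}$ at $0$ is harmless, i.e.\ that $\rho\in C^1$ with a bounded, a.e.\ second derivative, so that $\nabla_{\bfF}\mathcal{E}_{\bfA}$ is genuinely globally Lipschitz and the descent lemma applies without modification.
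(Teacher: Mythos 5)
Your proof is correct and follows essentially the same route as the paper's: both define the convex energy $\mathcal{E}_{\bfA}(\bfF)=\sum\rho\big((\mathcal{G}(\bfA)\bfF\bfW_l)_{\cdot}\big)$ with $\rho'=\sigma$, identify the layer as a (preconditioned) gradient-descent step on this energy, and conclude monotone decrease of $\mathcal{E}_{\bfA}$ for small enough $h_l$. Your version is in fact slightly sharper, since you make the admissible step size $h_l\leq 1/\beta$ explicit via the descent lemma with $\beta=\|\bfW_l\|_2^2\,\|\mathcal{G}(\bfA)\|_2^2$, where the paper only asserts descent ``for small enough $h_l$'' from the descent-direction property.
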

 \begin{thm}
    [\Cref{eq:nodeFeatureDynamics} induces contractive node dynamics]\label{thm:featContractive}	Assume $\sigma$ is a monotonically increasing 1-Lipschitz non-linear function. There are choices of $(\bfW_l,\bfK_l)\in\R^{c\times c}\times \R^{c\times c}$, for which the explicit Euler step in \Cref{eq:nodeFeatureDynamics} is contractive for a small enough $h_l>0$, i.e. 
    \begin{equation}\label{eq:1LipThm}
			\|\Psi_{X_l}^{h_l}(\bfF+\delta \bfF, \bfA) - \Psi_{X_l}^{h_l}(\bfF,\bfA)\|_F \leq \|\delta\bfF\|_F,
		\end{equation}
    for $\delta\bfF\in\mathbb{R}^{n\times c}$.
	\end{thm}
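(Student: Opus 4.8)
The plan is to exploit the gradient-flow structure behind \Cref{thm:stable}: with an appropriate choice of parameters, $X_l(\cdot,\bfA)$ is minus the gradient of a convex function, and a classical fact — an explicit Euler step of a gradient flow of a convex $L$-smooth function is non-expansive whenever the step size is at most $2/L$ — yields the claim. Concretely, I would choose $\bfK_l$ so that $\tilde{\bfK}_l=\bfI$ (e.g.\ $\bfK_l=\bfI$) and leave $\bfW_l$ arbitrary. Fixing $\bfA$ and abbreviating $G:=\mathcal{G}(\bfA)$, the update in \Cref{eq:nodeFeatureDynamics} becomes $\bfF\mapsto\bfF-h_l\,G^{\top}\sigma(G\bfF\bfW_l)\bfW_l^{\top}$, which is exactly a gradient-descent step with step size $h_l$ on the energy $\mathcal{E}_{\bfA}(\bfF)=\sum_{i,j}\Phi\big((G\bfF\bfW_l)_{ij}\big)$, where $\Phi'=\sigma$; this $\mathcal{E}_{\bfA}$ is convex because $\mathrm{LeakyReLU}$ is nondecreasing, and it is the same energy underlying \Cref{thm:stable}.

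First I would vectorize. With $M:=\bfW_l^{\top}\otimes G$ and $z:=\vecc(\bfF)$, the identities $\vecc(G\bfF\bfW_l)=Mz$ and $\vecc(G^{\top}Z\bfW_l^{\top})=M^{\top}\vecc(Z)$ turn the update into $z\mapsto z-h_lM^{\top}\sigma(Mz)$, with $\sigma$ acting entrywise. Since the Frobenius norm on $\R^{n\times c}$ is the Euclidean norm of the vectorization, it suffices to prove this map is $1$-Lipschitz on $\R^{nc}$, and one reads off $\|M\|_2=\|\bfW_l\|_2\,\|\mathcal{G}(\bfA)\|_2$.

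The one nontrivial ingredient is co-coercivity of $\sigma$: because every secant slope of $\mathrm{LeakyReLU}$ lies in $[\alpha,1]\subseteq[0,1]$, for scalars $a,b$ we have $(\sigma(a)-\sigma(b))(a-b)\ge(\sigma(a)-\sigma(b))^2\ge 0$, hence, summing coordinatewise, $\langle\sigma(u)-\sigma(v),u-v\rangle\ge\|\sigma(u)-\sigma(v)\|^2$ for all $u,v\in\R^{nc}$. Expanding $\big\|(z-h_lM^{\top}\sigma(Mz))-(z'-h_lM^{\top}\sigma(Mz'))\big\|^2$ and writing $u=Mz$, $v=Mz'$, the cross term equals $2h_l\langle u-v,\sigma(u)-\sigma(v)\rangle$ and the quadratic term is bounded by $h_l^2\|M\|_2^2\|\sigma(u)-\sigma(v)\|^2\le h_l^2\|M\|_2^2\langle u-v,\sigma(u)-\sigma(v)\rangle$. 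Therefore the squared-norm difference is at most $-(2h_l-h_l^2\|M\|_2^2)\langle u-v,\sigma(u)-\sigma(v)\rangle\le 0$ provided $h_l\le 2/\|M\|_2^2=2/(\|\bfW_l\|_2^2\,\|\mathcal{G}(\bfA)\|_2^2)$ (and any $h_l>0$ works when $\bfW_l$ or $\mathcal{G}(\bfA)$ vanishes). Undoing the vectorization gives \Cref{eq:1LipThm}.

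The main obstacle is a matter of care rather than depth: $\mathrm{LeakyReLU}$ is not differentiable at $0$, so a naive mean-value bound on $I-h_lM^{\top}\mathrm{diag}(\sigma')M$ is not immediately legitimate; the algebraic expansion above avoids this since it uses only that $\sigma$ is monotone and $1$-Lipschitz. A secondary modeling point worth stating explicitly is the choice $\tilde{\bfK}_l=\bfI$: it is what makes the relevant linearization symmetric positive semidefinite with spectral radius $\le\|\bfW_l\|_2^2\|\mathcal{G}(\bfA)\|_2^2$, whereas a general symmetric positive semidefinite $\tilde{\bfK}_l$ would only yield non-expansiveness in the $\tilde{\bfK}_l^{-1}$-weighted norm, not the Frobenius norm.
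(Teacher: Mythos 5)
Your proof is correct and follows essentially the same route as the paper: with $\tilde{\bfK}_l$ a positive multiple of the identity, the update is an explicit Euler (gradient-descent) step on the convex energy $\mathcal{E}_{\bfA}$ with Lipschitz gradient, and non-expansiveness follows for $h_l\le 2/\|\bfW_l^{\top}\otimes\mathcal{G}(\bfA)\|_2^2$. The only difference is that the paper delegates the final step to a citation (``a direct consequence of the properties of convex functions with Lipschitz gradient''), whereas you prove it explicitly via the coordinatewise co-coercivity of $\sigma$ — a self-contained and careful treatment that also sidesteps the non-differentiability of $\mathrm{LeakyReLU}$ at the origin.
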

        In Appendix~\ref{app:featContractive} we prove Theorems~\ref{thm:stable} and~\ref{thm:featContractive} for various parameterizations. 
        One parameterization for which both theorems are satisfied corresponds to $\bfK_l=I_c$. We have experimented with this configuration, learning only $\bfW_l\in\R^{c\times c}$. We found that this configuration improves several baseline results, showing the benefit of contractive node feature dynamics. We report those results in Appendix~\ref{appendix:experiments}. We also found, following recent interpretations of dissipative and expanding GNNs \cite{digiovanni2023understanding}, that  
        choosing the parameterization as $\bfW_l=I_c$, and training $\bfK_l\in\R^{c\times c}$ leads to further improved results, as we show in our experiments in Section~\ref{sec:experiments} and Appendix~\ref{appendix:experiments}. We note that this parameterization admits stable dynamical systems, in the sense of Theorem~\ref{thm:stable}, as discussed in Appendix~\ref{app:featContractive}.

	\subsection{Contractive Adjacency Matrix Dynamical System}
	\label{subsec:stableAdjacencyMat}
	As previously discussed, our CSGNN learns both node features and adjacency matrix dynamical systems to defend against adversarial attacks.  We now elaborate on the latter, aiming to design and learn dynamical systems with explicit Euler approximation of the solution $\Psi^{h_l}_{Y_l}:\mathbb{R}^{n\times n}\rightarrow\mathbb{R}^{n\times n}, \ l=1,\ldots,L$, such that:
	\begin{align}
	&\|\mathrm{vec}(\Psi^{h_l}_{Y_l}(\bfA^{(l-1)})) - \mathrm{vec}(\Psi^{h_l}_{Y_l}(\bfA^{(l-1)}_*))\|_1\\
    & \leq \|\mathrm{vec}(\bfA^{(l-1)})-\mathrm{vec}(\bfA^{(l-1)}_*)\|_1,\nonumber
	\end{align}
	where\begin{equation}\label{eq:isContractive}
		\Psi_{{Y}_l}^{h_l}(\bfA^{(l-1)}) = \bfA^{(l)}  = \bfA^{(l-1)} + h_l{Y}_l(\bfA^{(l-1)}).
	\end{equation}
 In other words, we wish to learn maps $\Psi^{h_l}_{Y_l}$ that \emph{decrease} the vectorized $\ell^1$ distance between the true and attacked adjacency matrices, thereby reducing the effect of the adjacency matrix attack.

	Since we are concerned with adjacency matrices, we need to pay attention to the structure of the designed map $Y_l$. Specifically, we demand that (i) the learned map $Y_{l}$ are \emph{node-permutation-equivariant}. That is, relabelling (change of order) of the graph nodes should not influence the dynamical system $\Psi_{Y_l}^{h^l}$ output up to its order, and, (ii) if the input graph is symmetric, then the updated adjacency matrix $\bfA^{(l)}$ should remain symmetric. Formally, the requirement (i) demands that:
	\begin{equation}
		\Psi_{Y_l}^{h_l}(\bfP \bfA \bfP^{\top}) = \bfP \Psi_{Y_l}^{h_l}(\bfA) \bfP^{\top}
	\end{equation}
	should hold for every permutation matrix $\bfP\in \{0,1\}^{n\times n}$. The symmetry condition (ii) implies that we want $(\Psi_{Y_l}^{h_l}(\bfA))^{\top}=\Psi_{Y_l}^{h_l}(\bfA)$ if $\bfA^{\top}=\bfA$. To this end, we adopt the derivations provided in \cite[Appendix A]{maron2018invariant}, that show that in order to make the map $\Psi^{h_l}_{Y_l}$ permutation-equivariant and also symmetry preserving, we can set $Y_l(\bfA)=\sigma(M(\bfA))$ in \Cref{eq:isContractive}, where $\sigma:\R\rightarrow\R$ is any non-linear activation function applied componentwise, and $M:\mathbb{R}^{n\times n}\rightarrow\mathbb{R}^{n\times n}$ is a suitably designed linear map depending on a learnable vector $\bfk = (k_1,\cdots, k_9)\in\mathbb{R}^9$ and defined in Appendix \ref{app:linearLayerA}. The design of $Y_l$ allows us not only to design adjacency matrix updates with the inductive bias of coming from a dynamical system, but also respecting the expected symmetry and invariance of the resulting matrix. In case of directed graphs, one could adapt the procedure by removing the symmetry assumption, hence adding some more free parameters to the map $M$, as we comment on in Appendix \ref{app:linearLayerA}.
We now provide a theorem that validates the contractivity of the proposed adjacency matrix dynamical system, with its proof in Appendix~\ref{app:contractivity}. 
 \begin{thm}[\Cref{eq:isContractive} defines contractive adjacency  dynamics]\label{thm:finalAdjUpdate}
		Let $\alpha\leq 0$, $\sigma:\mathbb{R}\rightarrow\mathbb{R}$ be a Lipschitz continuous function, with $\sigma'(s)\in [0,1]$ almost everywhere. If
		$
			0\leq h_l\leq 2/(2\sum_{i=2}^9|k_i|-\alpha),$
		 then the explicit Euler step
		\begin{equation}\label{eq:equivariantStep}
			\Psi_{Y_l}^{h_l}(\bfA^{(l-1)}):=\bfA^{(l-1)} + h_l\sigma\left(M(\bfA^{(l-1)})\right),
		\end{equation}
            where $M$ is as in \Cref{eq:equivariantMap} and $k_1 = \left(\alpha-\sum_{i=2}^9|k_i|\right)$,
		is contractive in the vectorized $\ell^1$ norm. 
	\end{thm}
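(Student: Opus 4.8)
The plan is to show that the explicit Euler map $\Psi_{Y_l}^{h_l}$ in \Cref{eq:equivariantStep} is $1$-Lipschitz in the vectorized $\ell^1$ norm by a direct estimate on the increment. Writing $\delta\bfA := \bfA^{(l-1)} - \bfA^{(l-1)}_*$, I would first expand
\begin{equation*}
\Psi_{Y_l}^{h_l}(\bfA^{(l-1)}) - \Psi_{Y_l}^{h_l}(\bfA^{(l-1)}_*) = \delta\bfA + h_l\bigl(\sigma(M(\bfA^{(l-1)})) - \sigma(M(\bfA^{(l-1)}_*))\bigr),
\end{equation*}
and bound the second term using the hypothesis $\sigma'(s)\in[0,1]$ a.e., which makes $\sigma$ act coordinatewise as a monotone nonexpansive map. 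The key point is that $M$ is linear, so $M(\bfA^{(l-1)}) - M(\bfA^{(l-1)}_*) = M(\delta\bfA)$, and everything reduces to understanding how $M$ and the scalar nonlinearity interact with the $\ell^1$ norm on $\vecc(\cdot)$.

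The main obstacle — and the crux of the argument — is controlling the $\ell^1\to\ell^1$ operator norm of the linear map $\vecc\circ M\circ \vecc^{-1}$, or rather controlling the increment $\vecc(\delta\bfA) + h_l\,\vecc(\sigma(M(\bfA^{(l-1)})) - \sigma(M(\bfA^{(l-1)}_*)))$ entrywise. I expect the cleanest route is a Gershgorin/diagonal-dominance style argument: because $\sigma$ is monotone with slope in $[0,1]$, for each entry $(i,j)$ there is a value $c_{ij}\in[0,1]$ (a mean-slope) with $\sigma(M(\bfA^{(l-1)}))_{ij} - \sigma(M(\bfA^{(l-1)}_*))_{ij} = c_{ij}\,M(\delta\bfA)_{ij}$. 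Splitting $M = k_1\,\mathrm{id} + R$ where $R$ collects the eight terms with coefficients $k_2,\dots,k_9$, the increment at entry $(i,j)$ becomes $(1 + h_l c_{ij}k_1)(\delta\bfA)_{ij} + h_l c_{ij}\,R(\delta\bfA)_{ij}$. With the choice $k_1 = \alpha - \sum_{i=2}^9|k_i|$ and $\alpha\le 0$, one has $k_1\le 0$; the step-size bound $h_l \le 2/((2\sum_{i\ge2}|k_i|) - \alpha)$ is exactly what guarantees $|1 + h_l c_{ij} k_1|\le 1 - h_l c_{ij}(\text{something}\ge \sum_{i\ge 2}|k_i|)$ after using $c_{ij}\le 1$, leaving enough slack to absorb the $\ell^1$-mass that $R$ can redistribute.

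The step I'd carry out most carefully is the bound $\sum_{i,j}|R(\delta\bfA)_{ij}| \le \bigl(2\sum_{i=2}^9|k_i|\bigr)\|\vecc(\delta\bfA)\|_1$. Each of the eight operators in \Cref{eq:equivariantMap} (diagonal extraction/embedding, row/column sums spread by $\boldsymbol 1_n$, the global sum, etc.) is a fixed linear map whose $\ell^1\to\ell^1$ norm I would compute by checking the maximum absolute column sum of its matrix representation acting on $\vecc(\delta\bfA)$; the normalizing factors $1/n$, $1/(2n)$, $1/n^2$ in the definition are precisely tuned so each such operator has $\ell^1\to\ell^1$ norm at most $1$ (some at most $2$, hence the factor $2$). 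Summing these with the triangle inequality gives the claimed constant $2\sum_{i\ge2}|k_i|$. Combining: $\|\vecc(\Psi_{Y_l}^{h_l}(\bfA^{(l-1)})) - \vecc(\Psi_{Y_l}^{h_l}(\bfA^{(l-1)}_*))\|_1 \le \sum_{i,j}|1+h_lc_{ij}k_1|\,|(\delta\bfA)_{ij}| + h_l\sum_{i,j}c_{ij}|R(\delta\bfA)_{ij}| \le \|\vecc(\delta\bfA)\|_1$, where the last inequality is forced termwise by the stated bound on $h_l$ together with $c_{ij}\in[0,1]$; this is the contractivity claim. I would finish by noting the argument is uniform in $l$, so it applies at every layer, and remark that a slightly sharper per-operator accounting could replace the crude constant $2\sum_{i=2}^9|k_i|$ if one tracks which operators genuinely have norm $2$ versus $1$.
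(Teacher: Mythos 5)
Your route is essentially the paper's own: the entrywise mean--slope factorization $\sigma(M(\bfA))_{ij}-\sigma(M(\bfA_*))_{ij}=c_{ij}\,M(\delta\bfA)_{ij}$ is exactly the Jacobian $D\Psi^{h_l}_{\hat Y_l}(\bfa)=I_{n^2}+h_l\,\diag(\sigma'(\bfT\bfa))\bfT$ of the paper's Appendix~\ref{app:contractivity} in integrated form, and your split $M=k_1\,\mathrm{id}+R$ is the paper's $\bfT=k_1I_{n^2}+\bfS$. However, your constant for $R$ is off: each of the eight non-identity operators in \Cref{eq:equivariantMap}, \emph{with} its built-in normalization $1/n$, $1/(2n)$, $1/n^2$, has vectorized $\ell^1\to\ell^1$ norm at most $1$ (the $1/(2n)$ factors already pay for the symmetrized pairs), so the correct bound is $\|R\|_{1\to1}\le\sum_{i=2}^9|k_i|$, not $2\sum_{i=2}^9|k_i|$; the factor $2$ in the admissible step size arises elsewhere in the paper's argument (the case where the diagonal Jacobian entry is negative, where $\|\bfS\|_1$ and $d_i\sum_{i=2}^9|k_i|$ are added), not from individual operators having norm $2$. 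With your constant, the absorption you need would require $|k_1|=\sum_{i=2}^9|k_i|-\alpha\ge 2\sum_{i=2}^9|k_i|$, i.e.\ $-\alpha\ge\sum_{i=2}^9|k_i|$, which is not assumed; so as written your accounting does not close even before the next issue.

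The more serious gap is the final ``forced termwise'' step. In the regime $1+h_lc_{ij}k_1\ge 0$ your chain of inequalities reduces to requiring $\sum_{i,j}c_{ij}|R(\delta\bfA)_{ij}|\le |k_1|\sum_{i,j}c_{ij}|(\delta\bfA)_{ij}|$, and here the slopes $c_{ij}$ weight the \emph{gain} at the entry $(i,j)$ itself but weight the \emph{loss} at the entries where $R$ deposits the mass of $(\delta\bfA)_{pq}$ --- and these need not match. Concretely, take $\delta\bfA=\bfe_p\bfe_q^{\top}$ with $c_{pq}=0$ and $c_{ij}=1$ elsewhere (realizable for $\sigma$ with $\sigma'\in\{0,1\}$): the right-hand side vanishes while the left-hand side is positive whenever $R$ moves mass off $(p,q)$, as the $k_3$ term does, so the asserted inequality fails. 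This is why the paper does the bookkeeping column-by-column on the Jacobian, pairing the diagonal gain $h_ld_i|k_1|$ against the off-diagonal mass $h_l\sum_{j\ne i}d_j|\bfS_{ji}|$ \emph{within the same column}; you would need to reproduce that structure rather than sum globally. Be aware that even in that formulation the varying slopes are the delicate point: the paper's displayed sufficient condition replaces $d_ik_1$ by $k_1$, which needs separate justification when $d_i<1$, so if you rework this, track the slope at the diagonal entry explicitly (e.g.\ use a lower bound $\sigma'\ge d_{\min}>0$, as holds for $\mathrm{LeakyReLU}$) instead of discarding it.
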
	
	In our experiments, $\alpha\leq 0$ is a non-positive hyperparameter of the network. 
 
In this Section, we presented the contractive node and adjacency updates in our CSGNN that allow for reduced sensitivity to adversarial perturbed inputs in a learnable fashion with respect to the downstream performance of the task at hand, which is node classification in this paper. Notably, it is important to distinguish between extreme cases of contractivity, such as in the case of multi-layer perceptron (MLP), which holds no sensitivity to the adjacency matrix by design, and a network with \emph{learnable} sensitivity through a contractive behavior, as presented in our CSGNN.  

 \begin{figure*}[t]
    \vspace{\baselineskip}
    \centering
\begin{subfigure}[t]{.33\linewidth}
    \centering
    \begin{tikzpicture}
      \begin{axis}[
          width=1.0\linewidth, 
          height=0.8\linewidth,
          grid=major,
          grid style={dashed,gray!30},
          %xlabel=Perturbation Rate (\%),
          ylabel=Accuracy (\%),
          ylabel near ticks,
        legend style={at={(2,1.425)},anchor=north,scale=1.0, cells={anchor=west}, fill=none},
          legend columns=5,
          xtick={0,1,2,3,4,5},
          xticklabels = {0,1,2,3,4,5},
          yticklabel style={
            /pgf/number format/fixed,
            /pgf/number format/precision=3
          },
          ticklabel style = {font=\tiny},
              y tick label style={
        /pgf/number format/.cd,
        fixed,
        fixed zerofill,
        precision=0,
        /tikz/.cd
    },
          scaled y ticks={real:0.01},
          every axis plot post/.style={thick},
          ytick scale label code/.code={}
        ]
        \addplot
        table[x=ptb_rate
,y=cora_mean,col sep=comma]{data/csi_gnn/nettack_attack_csi_results.csv};

        \addplot table[x=ptb_rate,y=cora_mean,col sep=comma]{data/gcn/nettack_attack_gcn_results.csv};

        \addplot
        table[x=ptb_rate,y=cora_mean,col sep=comma]{data/gcnjaccard/nettack_attack_gcnjaccard_results.csv};

        \addplot
        table[x=ptb_rate,y=cora_mean,col sep=comma]{data/prognn/nettack_attack_prognn_results.csv};

        \addplot
        table[x=ptb_rate,y=cora_mean,col sep=comma]{data/prognnfs/nettack_attack_prognnfs_results.csv};

        \addplot
        table[x=ptb_rate,y=cora_mean,col sep=comma]{data/gat/nettack_attack_gat_results.csv};

        \addplot
        table[x=ptb_rate,y=cora_mean,col sep=comma]{data/gcnsvd/nettack_attack_gcnsvd_results.csv};
        \addplot
        table[x=ptb_rate,y=cora_mean,col sep=comma]{data/rgcn/nettack_attack_rgcn_results.csv};
        
        \addplot
        table[x=ptb_rate,y=cora_mean,col sep=comma]{data/midgcn/nettack_attack_midgcn_results.csv};

        \legend{CSGNN,GCN,GCN-Jaccard,Pro-GNN,Pro-GNN-fs,GAT,GCN-SVD,RGCN, Mid-GCN}        
        \end{axis}
    \end{tikzpicture}
    %\begin{center}
                \caption{Cora}
        \label{fig:cora_nettack}
    %\end{center}

    \end{subfigure}
    \hfill
    \begin{subfigure}[t]{.33\linewidth}
    \centering
    \begin{tikzpicture}
      \begin{axis}[
          width=1.0\linewidth, 
          height=0.8\linewidth,
          grid=major,
          grid style={dashed,gray!30},
          %xlabel=Perturbation Rate (\%),
          %ylabel=Accuracy (\%),
          ylabel near ticks,
          legend style={at={(0.8,0.65)},anchor=north,scale=1.0, draw=none, cells={anchor=west}, font=\tiny, fill=none},
          legend columns=1,
          xtick={0,1,2,3,4,5},
          xticklabels = {0,1,2,3,4,5},
          yticklabel style={
            /pgf/number format/fixed,
            /pgf/number format/precision=3
          },
              y tick label style={
        /pgf/number format/.cd,
        fixed,
        fixed zerofill,
        precision=0,
        /tikz/.cd
    },
          ticklabel style = {font=\tiny},
          %xmode=log,
          scaled y ticks={real:0.01},
          every axis plot post/.style={thick},
          ytick scale label code/.code={}
        ]
        \addplot
        table[x=ptb_rate
,y=citeseeer_mean,col sep=comma]{data/csi_gnn/nettack_attack_csi_results.csv};

        \addplot
        table[x=ptb_rate,y=citeseer_mean,col sep=comma]{data/gcn/nettack_attack_gcn_results.csv};

        \addplot
        table[x=ptb_rate,y=citeseer_mean,col sep=comma]{data/gcnjaccard/nettack_attack_gcnjaccard_results.csv};

        \addplot
        table[x=ptb_rate,y=citeseer_mean,col sep=comma]{data/prognn/nettack_attack_prognn_results.csv};

        \addplot
        table[x=ptb_rate,y=citeseer_mean,col sep=comma]{data/prognnfs/nettack_attack_prognnfs_results.csv};

        \addplot
        table[x=ptb_rate,y=citeseer_mean,col sep=comma]{data/gat/nettack_attack_gat_results.csv};

        \addplot
        table[x=ptb_rate,y=citeseer_mean,col sep=comma]{data/gcnsvd/nettack_attack_gcnsvd_results.csv};
        
        \addplot
        table[x=ptb_rate,y=citeseer_mean,col sep=comma]{data/rgcn/nettack_attack_rgcn_results.csv};
        
        \addplot
        table[x=ptb_rate,y=citeseer_mean,col sep=comma]{data/midgcn/nettack_attack_midgcn_results.csv};
        
        \end{axis}

    \end{tikzpicture}
    %\begin{center}
            \caption{Citeseer}
        \label{fig:citeseer_nettack}
        %\end{center}
    \end{subfigure}%
            \begin{subfigure}[t]{.33\linewidth}
    \centering
    \begin{tikzpicture}
      \begin{axis}[
          width=1.0\linewidth, 
          height=0.8\linewidth,
          grid=major,
          grid style={dashed,gray!30},
          %xlabel=Perturbation Rate (\%),
          %ylabel=Accuracy (\%),
          ylabel near ticks,
          legend style={at={(1.2,0.95)},anchor=north,scale=1.0, draw=none, cells={anchor=west}, font=\tiny, fill=none},
          legend columns=1,
          xtick={0,1,2,3,4,5},
          xticklabels = {0,1,2,3,4,5},
          yticklabel style={
            /pgf/number format/fixed,
            /pgf/number format/precision=3
          },
          ticklabel style = {font=\tiny},
          scaled y ticks={real:0.01},
          every axis plot post/.style={thick},
          ytick scale label code/.code={}
        ]
        \addplot
        table[x=ptb_rate
,y=polblogs_mean,col sep=comma] {data/csi_gnn/nettack_attack_csi_results.csv};

        \addplot
        table[x=ptb_rate,y=polblogs_mean,col sep=comma]{data/gcn/nettack_attack_gcn_results.csv};

        \addplot
        table[x=ptb_rate,y=polblogs_mean,col sep=comma]{data/gcnjaccard/nettack_attack_gcnjaccard_results.csv};

        \addplot
        table[x=ptb_rate,y=polblogs_mean,col sep=comma]{data/prognn/nettack_attack_prognn_results.csv};

        \addplot
        table[x=ptb_rate,y=polblogs_mean,col sep=comma]{data/prognnfs/nettack_attack_prognnfs_results.csv};

        \addplot
        table[x=ptb_rate,y=polblogs_mean,col sep=comma]{data/gat/nettack_attack_gat_results.csv};

        \addplot
        table[x=ptb_rate,y=polblogs_mean,col sep=comma]{data/gcnsvd/nettack_attack_gcnsvd_results.csv};

        \addplot
        table[x=ptb_rate,y=polblogs_mean,col sep=comma]{data/rgcn/nettack_attack_rgcn_results.csv};

        \end{axis}
    \end{tikzpicture}
    %\begin{center}        
         \caption{Polblogs}
        \label{fig:polblogs_nettack}
            %\end{center}
    \end{subfigure}%
    \vspace{\baselineskip}
\caption{Node classification accuracy (\%) under nettack. The horizontal axis describes the number of perturbations per node.}
\label{fig:nettack_attack}
        \vspace{\baselineskip}
\end{figure*}
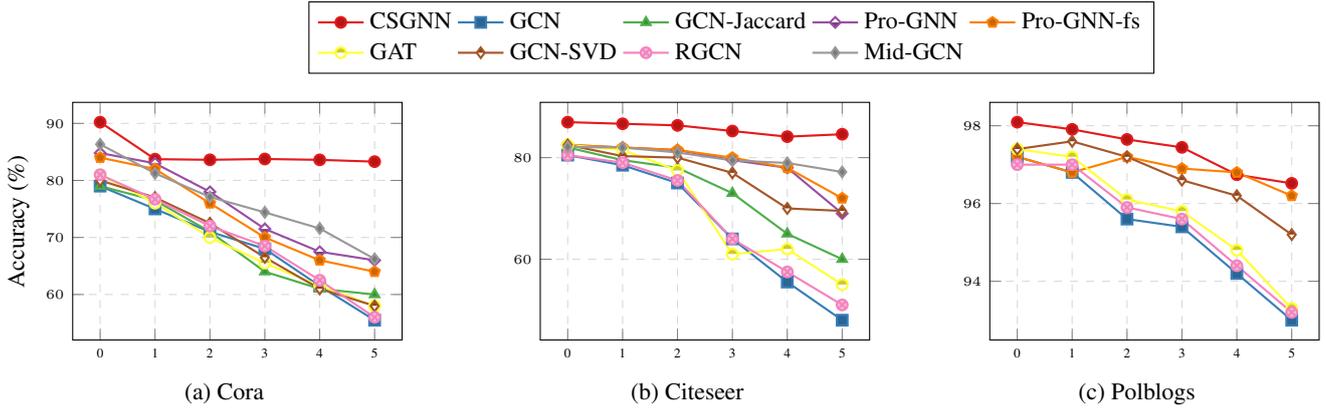
 				    
	\section{Experiments}
	\label{sec:experiments}

	We now study the effectiveness of CSGNN against different graph adversarial attacks. 
	In Section~\ref{sec:settings} we discuss our experimental settings. Our choice of datasets and attacks comes from the interest in comparing our proposed methodology with previous papers presenting similar procedures to improve the network's robustness. The experimental analysis we propose focuses on poisoning based on modifying the true structure of the graph, by adding/removing edges between existent nodes or perturbing their node features. The presented mathematical setup is not limited to this class of attacks, but we focus on them so as to compare our performances to similar techniques for improving the network robustness. In Section~\ref{sec:exp_results}, we report our experimental results and observations on several benchmarks, with additional results and an ablation study in Appendix~\ref{appendix:experiments}. The results presented in Section~\ref{sec:exp_results} focuses on adjacency matrix attacks, which are the most popular in the literature \cite{xu2024attacks}. To provide a comprehensive evaluation of our CSGNN, we also perform a set of experiments that include attacked node features with nettack.
 
Since we follow the attacks evaluated in the literature, which utilize different training/validation/test splits for different types of attacks, the reported results for perturbation rate 0 can be different under different attacks.

	\begin{table*}[t]
		
		\centering
          \vspace{\baselineskip}
		\caption{Node classification performance (accuracy$\pm$std) under a non-targeted attack (metattack) with varying perturbation rates.}
		\label{table:metattack}
		\begin{center}    
  \setlength{\tabcolsep}{3pt}
			\begin{tabular}{c|c|cccccc}
				\toprule
				Dataset                   & Ptb Rate (\%)          & 0                       & 5                       & 10                      & 15                      & 20                      & 25                      \\ 
				\midrule
				\multirow{7}{*}{Cora}     & GCN                    & 83.50$\pm$0.44          & 76.55$\pm$0.79          & 70.39$\pm$1.28          & 65.10$\pm$0.71          & 59.56$\pm$2.72          & 47.53$\pm$1.96          \\ 
                      & GAT                    & 83.97$\pm$0.65 & 80.44$\pm$0.74          & 75.61$\pm$0.59          & 69.78$\pm$1.28          & 59.94$\pm$0.92          & 54.78$\pm$0.74          \\  
                      & RGCN                   & 83.09$\pm$0.44          & 77.42$\pm$0.39          & 72.22$\pm$0.38          & 66.82$\pm$0.39          & 59.27$\pm$0.37          & 50.51$\pm$0.78          \\  
                      & GCN-Jaccard            & 82.05$\pm$0.51          & 79.13$\pm$0.59          & 75.16$\pm$0.76          & 71.03$\pm$0.64          & 65.71$\pm$0.89          & 60.82$\pm$1.08          \\  
                      & GCN-SVD                & 80.63$\pm$0.45          & 78.39$\pm$0.54          & 71.47$\pm$0.83          & 66.69$\pm$1.18          & 58.94$\pm$1.13          & 52.06$\pm$1.19          \\  
                      & Pro-GNN-fs             & 83.42$\pm$0.52          & 82.78$\pm$0.39 & 77.91$\pm$0.86          & 76.01$\pm$1.12          & 68.78$\pm$5.84          & 56.54$\pm$2.58          \\  
                      & Pro-GNN                & 82.98$\pm$0.23          & 82.27$\pm$0.45          & 79.03$\pm$0.59 & 76.40$\pm$1.27 & 73.32$\pm$1.56 & 69.72$\pm$1.69 \\
                    
                    %& LipReLU                & 84.11$\pm$N/A          & 80.50$\pm$N/A         & 77.49$\pm$N/A & 76.43$\pm$N/A & 75.22$\pm$N/A & 74.59$\pm$N/A \\
                    & Mid-GCN                & \textbf{84.61$\pm$0.46}          & \textbf{82.94$\pm$0.59}         & 80.14$\pm$0.86 & 77.77$\pm$0.75 & 76.58$\pm$0.29 & 72.89$\pm$0.81 \\
& CSGNN   & 84.12$\pm$0.31 & 82.20$\pm$0.65 & \textbf{80.43$\pm$0.74} & \textbf{79.32$\pm$1.04} & \textbf{77.47$\pm$1.22} & \textbf{74.46$\pm$0.99}
\\ \midrule
				\multirow{7}{*}{Citeseer} & GCN                    & 71.96$\pm$0.55          & 70.88$\pm$0.62          & 67.55$\pm$0.89          & 64.52$\pm$1.11          & 62.03$\pm$3.49          & 56.94$\pm$2.09          \\ 
                  & GAT                    & 73.26$\pm$0.83          & 72.89$\pm$0.83          & 70.63$\pm$0.48          & 69.02$\pm$1.09          & 61.04$\pm$1.52          & 61.85$\pm$1.12          \\  
                  & RGCN                   & 71.20$\pm$0.83          & 70.50$\pm$0.43          & 67.71$\pm$0.30          & 65.69$\pm$0.37          & 62.49$\pm$1.22          & 55.35$\pm$0.66          \\  
                  & GCN-Jaccard            & 72.10$\pm$0.63          & 70.51$\pm$0.97          & 69.54$\pm$0.56          & 65.95$\pm$0.94          & 59.30$\pm$1.40          & 59.89$\pm$1.47          \\  
                  & GCN-SVD                & 70.65$\pm$0.32          & 68.84$\pm$0.72          & 68.87$\pm$0.62          & 63.26$\pm$0.96          & 58.55$\pm$1.09          & 57.18$\pm$1.87          \\  
                  & Pro-GNN-fs             & 73.26$\pm$0.38          & 73.09$\pm$0.34 & 72.43$\pm$0.52          & 70.82$\pm$0.87          & 66.19$\pm$2.38          & 66.40$\pm$2.57          \\  
                  & Pro-GNN                & 73.28$\pm$0.69 & 72.93$\pm$0.57          & 72.51$\pm$0.75 & 72.03$\pm$1.11 & 70.02$\pm$2.28 & 68.95$\pm$2.78 \\
                & Mid-GCN                & 74.17$\pm$0.28 & 74.31$\pm$0.42          & 73.59$\pm$0.29 & 73.69$\pm$0.29 & 71.51$\pm$0.83 & 69.12$\pm$0.72 \\
				& CSGNN & \textbf{74.93$\pm$0.52} & \textbf{74.91$\pm$0.33}   & \textbf{73.95$\pm$0.35} & \textbf{73.82$\pm$0.61} & \textbf{73.01$\pm$0.77} & \textbf{72.94$\pm$0.56} \\  \midrule
    				\multirow{7}{*}{Polblogs} & GCN                    & 95.69$\pm$0.38 & 73.07$\pm$0.80          & 70.72$\pm$1.13          & 64.96$\pm$1.91          & 51.27$\pm$1.23          & 49.23$\pm$1.36          \\  
                  & GAT                    & 95.35$\pm$0.20          & 83.69$\pm$1.45          & 76.32$\pm$0.85          & 68.80$\pm$1.14          & 51.50$\pm$1.63          & 51.19$\pm$1.49          \\  
                  & RGCN                   & 95.22$\pm$0.14          & 74.34$\pm$0.19          & 71.04$\pm$0.34          & 67.28$\pm$0.38          & 59.89$\pm$0.34          & 56.02$\pm$0.56          \\  
                  %& GCN-Jaccard\footnote{} & -                       & -                       & -                       & -                       & -                       & -                       \\  
                  & GCN-SVD                & 95.31$\pm$0.18          & 89.09$\pm$0.22          & 81.24$\pm$0.49          & 68.10$\pm$3.73          & 57.33$\pm$3.15          & 48.66$\pm$9.93          \\  
                  & Pro-GNN-fs             & 93.20$\pm$0.64          & 93.29$\pm$0.18 & 89.42$\pm$1.09 & 86.04$\pm$2.21 & 79.56$\pm$5.68 & 63.18$\pm$4.40  \\  
                  %& Pro-GNN \footnote{}    & -                       & -                       & -                       & -                       & -                       & -                       \\
																	    
				& CSGNN   & \textbf{95.87$\pm$0.26} & \textbf{95.79$\pm$0.15} & \textbf{93.21$\pm$0.16} & \textbf{92.08$\pm$0.39} & \textbf{90.10$\pm$0.37} & \textbf{87.37$\pm$0.66} \\ 
				\bottomrule
			\end{tabular}
		\end{center}
	\end{table*}

 %%%%%%%%%%%% Nettack figure: %%%%%%%%%%%%
%\definecolorseries{foo}{rgb}{grad}{orange}{blue}
\definecolorseries{foo}{rgb}{last}{green}{blue}
\resetcolorseries[3]{foo}

% Now use that color series in a cycle list.
\pgfplotscreateplotcyclelist{cyclelist}{
  % based on cycle list "color"
  {foo!![0]}, mark=none\\
  {foo!![1]}, mark=none\\
  {foo!![2]},  mark=none\\
  {foo!![3]}, mark=none\\
  {foo!![0]}, dashed, mark=none\\
  {foo!![1]}, dashed, mark=none\\
  {foo!![2]}, dashed, mark=none\\
  {foo!![3]}, dashed, mark=none\\  
  {foo!![0]}, dotted, mark=none\\
}

 %%%%%%%%%%% END OF NETTACK FIGURE %%%%%%%%%%%%

%%%%%%%%%%%%% Random attack figure: %%%%%%%%%%%%%%%
 \begin{figure*}[t]
                \vspace{\baselineskip}

    \centering
\begin{subfigure}[t]{.33\linewidth}
    \centering
    \begin{tikzpicture}
      \begin{axis}[
          width=1.0\linewidth, 
          height=0.8\linewidth,
          grid=major,
          grid style={dashed,gray!30},
          %xlabel=Perturbation Rate (\%),
          ylabel=Accuracy (\%),
          ylabel near ticks,
        legend style={at={(1.85,1.425)},anchor=north,scale=1.0, cells={anchor=west}, fill=none},
          legend columns=4,
          xtick={0, 0.2,0.4,0.6,0.8,1.0},
          xticklabels = {0, 20,40,60,80,100},
          yticklabel style={
            /pgf/number format/fixed,
            /pgf/number format/precision=3
          },
          ticklabel style = {font=\tiny},
              y tick label style={
        /pgf/number format/.cd,
        fixed,
        fixed zerofill,
        precision=0,
        /tikz/.cd
    },
                    scaled y ticks={real:0.01},
          every axis plot post/.style={thick},
          ytick scale label code/.code={}
        ]
        \addplot
        table[x=ptb_rate
,y=cora_mean,col sep=comma]{data/csi_gnn/random_attack_csi_results.csv};

        \addplot
        table[x=ptb_rate,y=cora_mean,col sep=comma]{data/gcn/random_attack_gcn_results.csv};

        \addplot
        table[x=ptb_rate,y=cora_mean,col sep=comma]{data/gcnjaccard/random_attack_gcnjaccard_results.csv};

        \addplot
        table[x=ptb_rate,y=cora_mean,col sep=comma]{data/prognn/random_attack_prognn_results.csv};

        \addplot
        table[x=ptb_rate,y=cora_mean,col sep=comma]{data/prognnfs/random_attack_prognnfs_results.csv};

        \addplot
        table[x=ptb_rate,y=cora_mean,col sep=comma]{data/gat/random_attack_gat_results.csv};

        \addplot
        table[x=ptb_rate,y=cora_mean,col sep=comma]{data/gcnsvd/random_attack_gcnsvd_results.csv};
        
        \addplot
        table[x=ptb_rate,y=cora_mean,col sep=comma]{data/rgcn/random_attack_rgcn_results.csv};

        \legend{CSGNN,GCN,GCN-Jaccard,Pro-GNN,Pro-GNN-fs,GAT,GCN-SVD,RGCN}   
        
        \end{axis}
    \end{tikzpicture}
    %\begin{center}
                \caption{Cora}
        \label{fig:cora_random}
            %\end{center}
    \end{subfigure}
    \hfill
    \begin{subfigure}[t]{.33\linewidth}
    \centering
    \begin{tikzpicture}
      \begin{axis}[
          width=1.0\linewidth, 
          height=0.8\linewidth,
          grid=major,
          grid style={dashed,gray!30},
          %xlabel=Perturbation Rate (\%),
          %ylabel=Accuracy (\%),
          ylabel near ticks,
          legend style={at={(0.8,0.65)},anchor=north,scale=1.0, draw=none, cells={anchor=west}, font=\tiny, fill=none},
          legend columns=1,
          xtick={0, 0.2,0.4,0.6,0.8,1.0},
          xticklabels = {0, 20,40,60,80,100},
          yticklabel style={
            /pgf/number format/fixed,
            /pgf/number format/precision=3
          },
              y tick label style={
        /pgf/number format/.cd,
        fixed,
        fixed zerofill,
        precision=0,
        /tikz/.cd
    },
          ticklabel style = {font=\tiny},
          %xmode=log,
                    scaled y ticks={real:0.01},
          every axis plot post/.style={thick},
          ytick scale label code/.code={}
        ]
        \addplot
        table[x=ptb_rate
,y=citeseeer_mean,col sep=comma]{data/csi_gnn/random_attack_csi_results.csv};

        \addplot
        table[x=ptb_rate,y=citeseer_mean,col sep=comma]{data/gcn/random_attack_gcn_results.csv};

\pgfplotsset{cycle list shift=+4}
        \addplot
        table[x=ptb_rate,y=citeseer_mean,col sep=comma]{data/gcnsvd/random_attack_gcnsvd_results.csv};
\pgfplotsset{cycle list shift=+0}
        \addplot
        table[x=ptb_rate,y=citeseer_mean,col sep=comma]{data/prognn/random_attack_prognn_results.csv};

        \addplot
        table[x=ptb_rate,y=citeseer_mean,col sep=comma]{data/prognnfs/random_attack_prognnfs_results.csv};
        
        \addplot
        table[x=ptb_rate,y=citeseer_mean,col sep=comma]{data/gat/random_attack_gat_results.csv};
        
\pgfplotsset{cycle list shift=-4}
        \addplot
    table[x=ptb_rate,y=citeseer_mean,col sep=comma]{data/gcnjaccard/random_attack_gcnjaccard_results.csv};

\pgfplotsset{cycle list shift=+0}
        \addplot
        table[x=ptb_rate,y=citeseer_mean,col sep=comma]{data/rgcn/random_attack_rgcn_results.csv};

        \end{axis}
    \end{tikzpicture}
    %\begin{center}
            \caption{Citeseer}
        \label{fig:citeseer_random}
    %\end{center}

    \end{subfigure}%
            \begin{subfigure}[t]{.33\linewidth}
    \centering
    \begin{tikzpicture}
      \begin{axis}[
          width=1.0\linewidth, 
          height=0.8\linewidth,
          grid=major,
          grid style={dashed,gray!30},
          %xlabel=Perturbation Rate (\%),
          %ylabel=Accuracy (\%),
          ylabel near ticks,
          legend style={at={(1.2,0.95)},anchor=north,scale=1.0, draw=none, cells={anchor=west}, font=\tiny, fill=none},
          legend columns=1,
          xtick={0.0, 0.2, 0.4, 0.6, 0.8,1.0},
          xticklabels = {0, 20,40,60,80,100},
          yticklabel style={
            /pgf/number format/fixed,
            /pgf/number format/precision=3
          },
              y tick label style={
        /pgf/number format/.cd,
        fixed,
        fixed zerofill,
        precision=0,
        /tikz/.cd
    },
          ticklabel style = {font=\tiny},
          scaled y ticks={real:0.01},
          every axis plot post/.style={thick},
          ytick scale label code/.code={}
        ]
        \addplot
        table[x=ptb_rate,y=polblogs_mean,col sep=comma] {data/csi_gnn/random_attack_csi_results.csv};

        \addplot
        table[x=ptb_rate,y=polblogs_mean,col sep=comma]{data/gcn/random_attack_gcn_results.csv};

        \addplot
        table[x=ptb_rate,y=polblogs_mean,col sep=comma]{data/gcnjaccard/random_attack_gcnjaccard_results.csv};

        \addplot
        table[x=ptb_rate,y=polblogs_mean,col sep=comma]{data/prognn/random_attack_prognn_results.csv};

        \addplot
        table[x=ptb_rate,y=polblogs_mean,col sep=comma]{data/prognnfs/random_attack_prognnfs_results.csv};

        \addplot
        table[x=ptb_rate,y=polblogs_mean,col sep=comma]{data/gat/random_attack_gat_results.csv};
        
        \addplot
        table[x=ptb_rate,y=polblogs_mean,col sep=comma]{data/gcnsvd/random_attack_gcnsvd_results.csv};
        \addplot
        table[x=ptb_rate,y=polblogs_mean,col sep=comma]{data/rgcn/random_attack_rgcn_results.csv};

                        %\legend{CSGNN,GAT,GCN,GCN-Jaccard,GCN-SVD,Pro-GNN,Pro-GNN-fs,RGCN}   

        \end{axis}
    \end{tikzpicture}
    %\begin{center}
                        \caption{Polblogs}
        \label{fig:polblogs_random}
        %\end{center}
    \end{subfigure}%
        \vspace{\baselineskip}
\caption{Node classification accuracy (\%) under a random adjacency matrix attack. The horizontal axis describes the attack percentage.}
\label{fig:random_attack}
        \vspace{\baselineskip}

\end{figure*}
%%%%%% END OF RANDOM ATTACK FIGURE %%%%%%%%%%

%%%%%%% RELATIVE POISONING ATTACK %%%%%%%
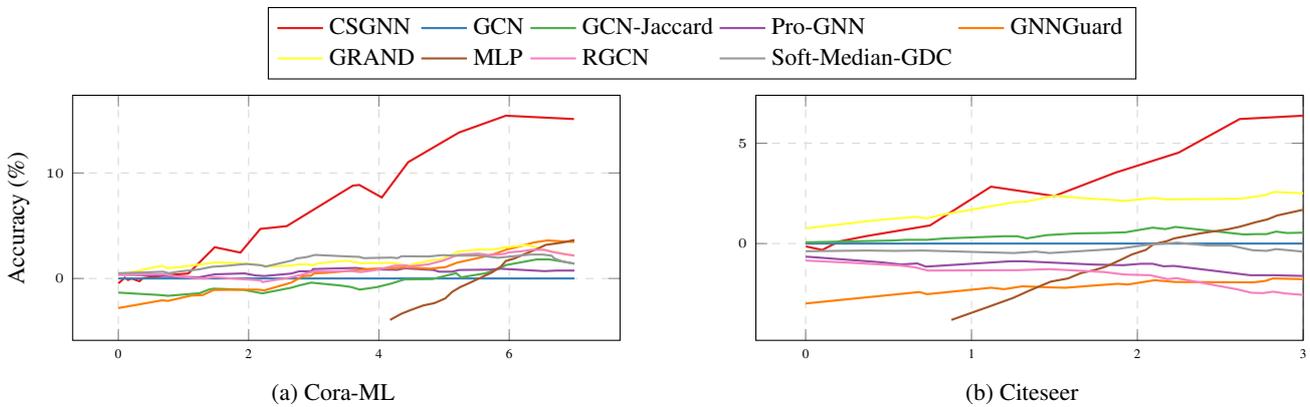
\begin{figure*}[t]
               \vspace{\baselineskip}
    
    \centering
\begin{subfigure}[t]{.49\linewidth}
    \centering
    \begin{tikzpicture}
      \begin{axis}[
          width=1.0\linewidth, 
          height=0.55\linewidth,
          grid=major,
          grid style={dashed,gray!30},
          %xlabel=Perturbation Rate (\%),
          ylabel=Accuracy (\%),
          ylabel near ticks,
        legend style={at={(1.15,1.36)},anchor=north,scale=1.0, cells={anchor=west}, fill=none},
          legend columns=5,
          xtick={0,2,4,6},
          xticklabels = {0,2,4,6},
          yticklabel style={
            /pgf/number format/fixed,
            /pgf/number format/precision=3
          },
          ticklabel style = {font=\tiny},
          scaled y ticks=false,
          every axis plot post/.style={thick},
        ]
        \addplot +[mark=none] table[x=ptb_rate,y=rel_acc,col sep=comma]{data/csi_gnn/unit_test_cora_new.csv};
        
        \addplot +[mark=none] table[x=ptb_rate,y=accuracy,col sep=comma]{data/relative_coraml_poisoning/gcn.csv};
        \addplot +[mark=none] table[x=ptb_rate,y=accuracy,col sep=comma]{data/relative_coraml_poisoning/jaccardgcn.csv};
        \addplot +[mark=none] table[x=ptb_rate,y=accuracy,col sep=comma]{data/relative_coraml_poisoning/prognn.csv};
        \addplot +[mark=none] table[x=ptb_rate,y=accuracy,col sep=comma]{data/relative_coraml_poisoning/gnnguard.csv};
        \addplot +[mark=none] table[x=ptb_rate,y=accuracy,col sep=comma]{data/relative_coraml_poisoning/grand.csv};
        \addplot +[mark=none]  table[x=ptb_rate,y=accuracy,col sep=comma]{data/relative_coraml_poisoning/mlp.csv};
        \addplot +[mark=none] table[x=ptb_rate,y=accuracy,col sep=comma]{data/relative_coraml_poisoning/rgcn.csv};
        \addplot +[mark=none] table[x=ptb_rate,y=accuracy,col sep=comma]{data/relative_coraml_poisoning/softmediangdc.csv};

        \legend{CSGNN, GCN, GCN-Jaccard,Pro-GNN, GNNGuard,GRAND,MLP,RGCN,Soft-Median-GDC}        
        \end{axis}
    \end{tikzpicture}
        \caption{Cora-ML}
        \label{fig:relative_cora_poisoning}
    \end{subfigure}
    \hfill
    \begin{subfigure}[t]{.49\linewidth}
    \centering
    \begin{tikzpicture}
      \begin{axis}[
          width=1.0\linewidth, 
          height=0.55\linewidth,
          grid=major,
          grid style={dashed,gray!30},
          %xlabel=Perturbation Rate (\%),
          %ylabel=Accuracy (\%),
          ylabel near ticks,
        legend style={at={(1.2,1+\baselineskip)},anchor=north,scale=1.0, cells={anchor=west}, fill=none},
          legend columns=5,
                  xmax = 3,
          xtick={0,1,2,3},
          xticklabels = {0,1,2,3},
          yticklabel style={
            /pgf/number format/fixed,
            /pgf/number format/precision=3
          },
          ticklabel style = {font=\tiny},
          scaled y ticks=false,
          every axis plot post/.style={thick},
        ]
            \addplot +[mark=none] table[x=ptb_rate,y=rel_acc,col sep=comma]{data/csi_gnn/unit_test_citeseer_newnew.csv};
        \addplot +[mark=none] table[x=ptb_rate,y=accuracy,col sep=comma]{data/relative_citeseer_poisoning/gcn.csv};
        \addplot +[mark=none] table[x=ptb_rate,y=accuracy,col sep=comma]{data/relative_citeseer_poisoning/jaccardgcn.csv};
        \addplot +[mark=none] table[x=ptb_rate,y=accuracy,col sep=comma]{data/relative_citeseer_poisoning/prognn.csv};
        \addplot +[mark=none] table[x=ptb_rate,y=accuracy,col sep=comma]{data/relative_citeseer_poisoning/gnnguard.csv};
        \addplot +[mark=none] table[x=ptb_rate,y=accuracy,col sep=comma]{data/relative_citeseer_poisoning/grand.csv};
        \addplot +[mark=none] table[x=ptb_rate,y=accuracy,col sep=comma]{data/relative_citeseer_poisoning/mlp.csv};
        \addplot +[mark=none] table[x=ptb_rate,y=accuracy,col sep=comma]{data/relative_citeseer_poisoning/rgcn.csv};
        \addplot +[mark=none] table[x=ptb_rate,y=accuracy,col sep=comma]{data/relative_citeseer_poisoning/softmediangdc.csv};

        \end{axis}
    \end{tikzpicture}
                \caption{Citeseer}
        \label{fig:relative_citeseer_poisoning}
    \end{subfigure}
        \vspace{\baselineskip}
    \caption{Node classification accuracy (\%) using unit-tests from \cite{mujkanovicAreDefensesGraph2022}. Results are relative to a baseline GCN. The horizontal axis shows the attack budget (\%). %as in \cite{mujkanovicAreDefensesGraph2022}.
    }
    \label{fig:unitTestsRelative}
               \vspace{\baselineskip}

        \end{figure*}
%%%%%%% END Of RELATIVE POISONING ATTACK %%%%%%

	\subsection{Experimental settings}

	\label{sec:settings}
	\paragraph{Datasets.}
	Following \cite{zugner2018adversarial,zugner2019adversarial}, 
	we validate the proposed approach on four benchmark datasets, including three citation graphs, i.e., Cora, Citeseer, Pubmed, and one blog graph, Polblogs . The statistics of the datasets are shown in Appendix~\ref{appendix:datasets}. Note that in the Polblogs graph, node features are not available.  In this case, we follow Pro-GNN \cite{jin2020graph} and set the input node features to a $n \times n$ identity matrix.

	\paragraph{Baselines.}
	We demonstrate the efficacy of CSGNN by comparing it with popular GNNs and defense models, as follows:
 \textbf{GCN} \cite{kipf2016semi}: Is one of the most commonly used GNN architectures, consisting of feature propagation according to the symmetric normalized Laplacian and channel mixing steps.
 \textbf{GAT} \cite{velickovic2018graph}: Graph Attention Networks (GAT) employ an attention mechanism to learn edge weights for the feature propagation step.
 \textbf{RGCN} \cite{rgcn}: RGCN models node features as samples from Gaussian distributions, and modifies GCN to propagate both the mean and the variance. In the neighborhood aggregation operation, high-variance features are down-weighted to improve robustness.
 \textbf{GCN-Jaccard} \cite{jaccard}: This is an unsupervised pre-processing method that relies on binary input node features, based on the assumption that the true graph is homophilic. Edges between nodes with features whose Jaccard similarity is below a certain threshold are removed.
 \textbf{GCN-SVD} \cite{gcnsvd}: GCN-SVD is also an unsupervised pre-processing step. Based on the observation that nettack tends to generate high-rank perturbations to the adjacency matrix, it is suggested to truncate the SVD of the adjacency matrix before it is used to train a GNN.
 \textbf{Pro-GNN} \cite{jin2020graph}: Pro-GNN attempts to jointly optimize GCN weights and a corrected adjacency matrix using a loss function consisting of a downstream supervised task-related loss function and low-rank and sparsity regularization. In Pro-GNN-fs, an additional feature smoothing regularization is used.
 \textbf{Mid-GCN} \cite{huang2023robust}: Mid-GCN modifies the standard GCN architecture to utilize a mid-pass filter, unlike the typical low-pass filter in GCN.
 \textbf{GNNGuard} \cite{zhang2020gnnguard}: GNNGuard modifies message-passing GNNs to include layer-dependent neighbor importance weights in the aggregation step. The neighbor importance weights are designed to favor edges between nodes with similar features, encoding an assumption of homophily.
 \textbf{GRAND} \cite{feng2020graph}: In this method, multiple random graph data augmentations are generated, which are then propagated through the GNN. The GNN is trained using a task-related loss and a consistency regularization that encourages similar outputs for the different augmented graphs.
 \textbf{Soft-Median-GDC} \cite{geisler2021robustness}: This approach first preprocesses the adjacency matrix using graph diffusion convolution \cite{klicpera2019diffusion}, after which a GNN that uses soft median neighborhood aggregation function is trained.
\textbf{GARNET} \cite{deng2022garnet}: This method suggests wiring the graph using weighted spectral embeddings, which are shown to be related to the original, clean graph. \textbf{HANG} \cite{zhao2024adversarial}: This approach is based on conservative Hamiltonian neural flows, used to process node features and for improved robustness. The comparisons with GNNGuard, GARNET and HANG are reported in Appendix \ref{appendix:experiments}.

	\paragraph{Training and Evaluation.} 
	We follow the same experimental settings as in \cite{jin2020graph}. Put precisely, and unless otherwise specified, for each dataset, we randomly choose 10\% of the nodes for training, 10\% of the nodes for validation, and the remaining 80\% nodes for testing. For each experiment, we report the average node classification accuracy of 10 runs.  The hyperparameters of all the models are tuned based on the validation set accuracy. In all experiments, the objective function to be minimized is the cross-entropy loss, using the Adam optimizer \cite{kingma2014adam}. Note that another benefit of our CSGNN is the use of downstream loss only, compared to other methods that utilize multiple losses to learn adjacency matrix updates.
	In Appendix~\ref{appendix:hyperparams}, we discuss the hyperparameters of CSGNN. A complexity and runtime discussion is given in Appendix~\ref{app:complexity}.

	\subsection{Adversarial Defense Performance}
	\label{sec:exp_results}
	We evaluate the node classification performance of CSGNN against four types of poisoning attacks: (i) non-targeted attack, (ii) targeted attack, (iii) random attack, and, (iv) unit tests. Below we elaborate on the results obtained on each type of attack.

	\paragraph{Robustness to Non-Targeted Adversarial Attacks.}
	We evaluate the node classification accuracy of our CSGNN and compare it with the baseline methods after using the non-targeted adversarial attack metattack \cite{zugner2019adversarial}. We follow the publicly available attacks and splits in \cite{jin2020graph}. 
 We experiment with varying perturbation rates, i.e., the ratio of changed edges, from $0$ to $25\%$ with a step size of $5\%$. We report the average accuracy, as well as the obtained standard deviation over 10 runs in  Table~\ref{table:metattack}. The best-performing method is highlighted in bold. %From Table~\ref{table:metattack}, 
 We can see that except for a few cases, our CSGNN consistently improves or offers on-par performance with other methods.

	\paragraph{Robustness to Targeted Adversarial Attacks.}
	In this experiment, we use nettack \cite{zugner2018adversarial} as a targeted attack. Following \cite{rgcn}, we vary the number of perturbations made on every targeted node from $1$ to $5$ with a step size of $1$. The nodes in the test set with degree larger than $10$ are set as target nodes. Here, we also use the publicly available splits in \cite{jin2020graph}. The node classification accuracy on target nodes is shown in Figure~\ref{fig:nettack_attack}.
	From the figure, we can observe that when the number of perturbations increases, the performance of CSGNN is better than other methods on the attacked target nodes in most cases. 
 \begin{figure*}[t]
    \vspace{\baselineskip}
    \centering
\begin{subfigure}[t]{.49\linewidth}
    \centering
    \begin{tikzpicture}
      \begin{axis}[
          width=1.0\linewidth, 
          height=0.55\linewidth,
          grid=major,
          grid style={dashed,gray!30},
          %xlabel=Perturbation Rate (\%),
          ylabel=Accuracy (\%),
          ylabel near ticks,
        legend style={at={(1.15,1.23)},anchor=north,scale=1.0, cells={anchor=west}, fill=none},
          legend columns=5,
          xtick={0,1,2,3,4,5},
          xticklabels = {0,1,2,3,4,5},
          yticklabel style={
            /pgf/number format/fixed,
            /pgf/number format/precision=3
          },
          ticklabel style = {font=\tiny},
              y tick label style={
        /pgf/number format/.cd,
        fixed,
        fixed zerofill,
        precision=0,
        /tikz/.cd
    },
          scaled y ticks={real:0.01},
          every axis plot post/.style={thick},
          ytick scale label code/.code={}
        ]

        \addplot
        table[x=ptb_rate,y=cora,col sep=comma]{data/feature_attacks/csgnn_feat_attack.csv};

        \addplot
        table[x=ptb_rate,y=cora,col sep=comma]{data/feature_attacks/prognn_feat_attack.csv};
        \addplot
        table[x=ptb_rate,y=cora,col sep=comma]{data/feature_attacks/gcnsvd_feat_attack.csv}; 
        \addplot
        table[x=ptb_rate,y=cora,col sep=comma]{data/feature_attacks/gcn_feat_attack.csv};

        \legend{CSGNN,Pro-GNN,GCN-SVD,GCN}        
        \end{axis}
    \end{tikzpicture}
    %\begin{center}
                \caption{Cora}
        \label{fig:cora_nettack_features}
    %\end{center}

    \end{subfigure}
    \hfill
    \begin{subfigure}[t]{.49\linewidth}
    \centering
    \begin{tikzpicture}
      \begin{axis}[
          width=1.0\linewidth, 
          height=0.55\linewidth,
          grid=major,
          grid style={dashed,gray!30},
          %xlabel=Perturbation Rate (\%),
          %ylabel=Accuracy (\%),
          ylabel near ticks,
          legend style={at={(0.8,0.65)},anchor=north,scale=1.0, draw=none, cells={anchor=west}, font=\tiny, fill=none},
          legend columns=1,
          xtick={0,1,2,3,4,5},
          xticklabels = {0,1,2,3,4,5},
          yticklabel style={
            /pgf/number format/fixed,
            /pgf/number format/precision=3
          },
              y tick label style={
        /pgf/number format/.cd,
        fixed,
        fixed zerofill,
        precision=0,
        /tikz/.cd
    },
          ticklabel style = {font=\tiny},
          %xmode=log,
          scaled y ticks={real:0.01},
          every axis plot post/.style={thick},
          ytick scale label code/.code={}
        ]

        \addplot
        table[x=ptb_rate,y=citeseer,col sep=comma]{data/feature_attacks/csgnn_feat_attack.csv};

        \addplot
        table[x=ptb_rate,y=citeseer,col sep=comma]{data/feature_attacks/prognn_feat_attack.csv};
        
        \addplot
        table[x=ptb_rate,y=citeseer,col sep=comma]{data/feature_attacks/gcnsvd_feat_attack.csv};
        
        \addplot
        table[x=ptb_rate,y=citeseer,col sep=comma]{data/feature_attacks/gcn_feat_attack.csv};
        
        \end{axis}

    \end{tikzpicture}
    %\begin{center}
            \caption{Citeseer}
        \label{fig:citeseer_nettack_features}
        %\end{center}
    \end{subfigure}
    \vspace{\baselineskip}
\caption{Node classification accuracy (\%) under targeted attack with nettack to both node features and adjacency matrix. The horizontal axis describes the number of perturbations per node.}
\label{fig:nettack_features}
        \vspace{\baselineskip}
\end{figure*}
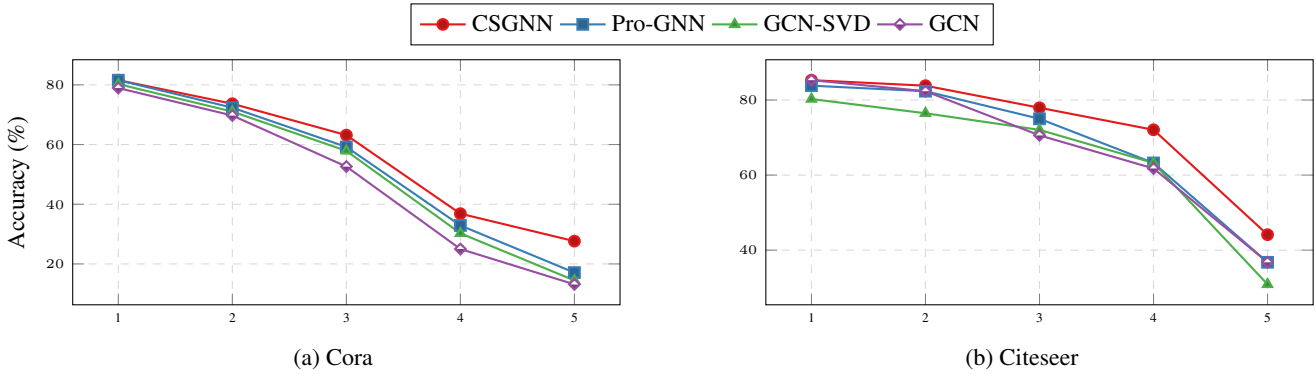

 \paragraph{Robustness to targeted attacks to node features and adjacency matrix.} We now provide additional experiments, where not only the connectivity structure of the graph is attacked, but also the node features, demonstrated on the Cora and Citeseer datasets. To generate the attacked versions of these datasets, we follow the same protocol as in Pro-GNN \cite{jin2020graph}.  The attacks are based on nettack \cite{zugner2018adversarial}, which applies a targeted attack to the test nodes of the clean graph having a degree larger than 10. To attack all these nodes, we iterate through the target nodes and iteratively update the feature and adjacency matrices attacking the previously obtained one at the next target node. We work with different attack intensities, applying 1 to 5 perturbations per targeted node, with a step of 1. The results are reported in Figure \ref{fig:nettack_features}, where we compare the performance of CSGNN, with those of GCN, GCN-SVD, and Pro-GNN. As we can see, CSGNN outperforms all of the compared models on this task.

	\paragraph{Robustness to Random Attacks.}
	In this experimental setting, we evaluate the performance of CSGNN when the adjacency matrix is attacked by adding random fake edges, from $0\%$ to $100\%$ of the number of edges in the true adjacency matrix, with a step size of $20\%$. The results are reported in Figure~\ref{fig:random_attack}. It can be seen, that CSGNN is on par with or better than the considered baselines. 

 \paragraph{Unit tests.} We utilize the recently suggested \emph{unit tests} from \cite{mujkanovicAreDefensesGraph2022}. This is a set of perturbed citation datasets, which are notable for the fact that the perturbations were not generated using standard attack generation procedures that focus only on attacks like nettack or metattack. Instead, 8 adversarial defense methods were studied. Then, bespoke, adaptive attack methods were designed for each of them. These attack methods were applied to the citation datasets to generate the ``unit tests''. We experiment with those attacks as they offer a challenging benchmark, that further highlights the contribution of our CSGNN. We present the results in Figure~\ref{fig:unitTestsRelative}, showing the relative performance of CSGNN and other baselines compared to GCN. We see that our CSGNN performs better than other considered models. This result further highlights the robustness of CSGNN under different adversarial attack scenarios, on several datasets. In Figure~\ref{fig:unitTestsAbsolute} of Appendix~\ref{appendix:experiments}, we also provide absolute performance results.
	\section{Summary and Discussion}
	\label{sec:summary}
 In this paper, we present CSGNN, a novel GNN architecture inspired by contractive dynamical systems for graph adversarial defense. Our CSGNN learns a coupled dynamical system that updates both the node features as well as the adjacency matrix to reduce the impact of input perturbations, thereby defending against graph adversarial attacks. We provide a theoretical analysis of our CSGNN, to gain insights into its characteristics and expected behavior. Our profound experimental study of CSGNN reveals the importance of employing the proposed coupled dynamical system to reduce attack influence on the model's accuracy. Namely, our results verify both the efficacy compared to existing methods, as well as the necessity of each of the dynamical systems in CSGNN. Since our approach presents a novel way to model both the node features and adjacency matrix through the lens of dynamical systems, we believe that our findings and developments will find further use in graph adversarial defense and attacks, as well as other applications of GNNs where being stable to input perturbations is relevant.

\printbibliography[heading=subbibliography, title={References},segment=2]
\end{refsection}

\newpage

\onecolumn
	\appendix

\begin{refsection}
\DeclareFieldFormat{labelnumber}{SM#1}

{\begin{center}    
 \huge \textbf{Supplementary Material: Resilient Graph Neural Networks: A Coupled Dynamical
Systems Approach }\end{center}} 

\paragraph{Remark.}
In the Appendices, we often work with the Jacobian matrix of a piecewise smooth function, like $\sigma(x)=\mathrm{LeakyReLU}(x)$. This is done to bound the Lipschitz constant $\mathrm{Lip}(f)$ of a map, or its one-sided Lipschitz constant $\mathrm{osLip}(f)$. For Lipschitz-continuous maps, by Rademacher's Theorem \cite[Theorem 3.1.6]{federer2014geometric}, the Jacobian is defined almost everywhere, and one can still obtain the convenient relations
\begin{align*}
\mathrm{Lip}(f) \leq L&\iff \|Df(x)\|\leq L \,\,\mathrm{a.e.},\\
\mathrm{osLip}(f) \leq c&\iff \mu(Df(x))\leq c\,\,\mathrm{a.e.}
\end{align*}
for a Lipschitz continuous function $f:\Omega\to\mathbb{R}^n$, where $\Omega$ is an open convex subset of $\mathbb{R}^n$. The second relation can be found, for example, in \cite[Theorem 16]{davydov2021non}. For simplicity, we do not say all the quantities are defined almost everywhere throughout.

 \section{Contractive Systems}
 \label{app:contractiveSystemsDefinition}
This appendix defines continuous contractive dynamical systems and provides a background on the properties of such systems. We focus on contractivity with respect to a norm $\|\cdot\|$ on $\mathbb{R}^k$ induced by an inner product $\langle\cdot,\cdot\rangle:\R^k\times \R^k\to \R$, i.e. so that $\|\bfx\|^2 = \langle \bfx,\bfx\rangle$ for every $\bfx\in\R^k$. This extends thanks to the less restrictive notion of weak pairing considered in \cite{FB-CTDS}. Following this more general approach, the reasoning extends naturally to the $\ell^1$ norm, i.e., the one used in the case of the dynamical system we propose for the adjacency matrix. Let us consider the dynamical system
\begin{equation}\label{eq:exampleContractive}
\begin{cases}
    \dot{x}(t) = f(x(t))\in\R^k,\\
    x(t_0) = \bfx_0\in\R^k.
\end{cases}
\end{equation}
Consider the convex set $\Omega\subset\R^k$. We say $f:\Omega\to\R^k$ satisfies the one-sided Lipschitz inequality on $\Omega$ with constant $\mathrm{osLip}(f)\in\R$ if for every $\bfx,\bfy\in\Omega$
\begin{equation}\label{eq:osl}
\langle f(\bfx)-f(\bfy),\bfx-\bfy\rangle\leq \mathrm{osLip}(f)\|\bfx-\bfy\|^2.
\end{equation}
Additionally, we remark that if $f$ is $\mathrm{Lip}(f)-$Lipschitz continuous, then $\mathrm{osLip}(f)\leq \rm{Lip}(f)$ since one has
\[
\langle f(\bfx)-f(\bfy),\bfx-\bfy\rangle\leq \|f(\bfx)-f(\bfy)\|\cdot\|\bfx-\bfy\|\leq \mathrm{Lip}(f)\|\bfx-\bfy\|^2.
\]
However, while the Lipschitz constant can only be non-negative, the one-sided Lipschitz constant can also be strictly negative. For example, if $f(\bfx)=-\bfx$ we get $\mathrm{osLip}(f)=-1$. 
\begin{de}[Contractive dynamical system]\label{de:contractiveSystem}
Let $\Omega\subset\R^k$ be a convex set. We say the dynamical system in \Cref{eq:exampleContractive} is strictly contractive on $\Omega$ if it satisfies the one-sided Lipschitz inequality \Cref{eq:osl} with $\mathrm{osLip}(f)<0$ for every $\bfx,\bfy\in\Omega$, and contractive if $\mathrm{osLip}(f)\leq 0$.
\end{de}
The motivation behind definition~\ref{de:contractiveSystem} comes from a relatively simple derivation. Let $\bfx(t)$ and $\bfy(t)$ be two analytical solutions to \Cref{eq:exampleContractive}, respectively with $\bfx(t_0)=\bfx_0$ and $\bfy(t_0)=\bfy_0$. Then one has
\[
\begin{split}
\frac{\mathrm d}{\mathrm dt}\left(\frac{1}{2}\left\|\bfx(t)-\bfy(t)\right\|^2\right) &= \frac{\mathrm d}{\mathrm dt}\left(\frac{1}{2}\langle \bfx(t)-\bfy(t),\bfx(t)-\bfy(t)\rangle\right) \\
&= \langle f(\bfx(t))-f(\bfy(t)),\bfx(t)-\bfy(t)\rangle 
\leq \mathrm{osLip}(f)\|\bfx(t)-\bfy(t)\|^2.
\end{split}
\]
By Gronwall's inequality \cite{gronwall1919note}, one hence gets
\[
\left\|\bfx(t)-\bfy(t)\right\| \leq \left\|\bfx_0-\bfy_0\right\|e^{\mathrm{osLip}(f)(t-t_0)},\quad \forall t\geq t_0.
\]
As a consequence, $\|\bfx(t)-\bfy(t)\|$ tends to $0$ exponentially fast as $t\rightarrow +\infty$ if $\mathrm{osLip}(f)< 0$, while it changes in a stable way when $\mathrm{osLip}(f)\leq 0$ since
\[
\left\|\bfx(t)-\bfy(t)\right\| \leq \left\|\bfx_0-\bfy_0\right\|,\quad \forall t\geq t_0.
\] 
It hence follows that $\mathrm{osLip}(f)$ provides a contraction rate of a generic pair of trajectories, one towards the other. To conclude this section, we focus briefly on the dynamical systems considered for the features. This is of the form
\[
\begin{cases}
    \dot{x}(t) = -\nabla V(x(t))\\
    x(t_0)=\bfx_0
\end{cases}
\]
for a convex function $V:\mathbb{R}^k\to\mathbb{R}$. The properties of convex functions guarantee
\[
-\langle \nabla V(\bfx) - \nabla V(\bfy),\bfx-\bfy \rangle \leq 0
\]
and hence that $\mathrm{osLip}(-\nabla V)\leq 0$. This control on the expansivity of the dynamics is the main motivation for our proposed forward update in \Cref{sec:method}.

\section{Expression for the linear equivariant layer in the adjacency dynamics}\label{app:linearLayerA}
We now write the explicit expression for the linear permutation equivariant layer adopted in the updates of the adjacency matrix. This layer depends on $9$ learnable scalar parameters $k_1,\ldots,k_9\in\mathbb{R}$ and takes the form
 \begin{equation}\label{eq:equivariantMap}
	\begin{split}
		M(\bfA) & = k_1 \bfA + k_2 \mathrm{diag}(\mathrm{diag}(\bfA)) + \frac{k_3}{2n}(\bfA\boldsymbol{1}_n\boldsymbol{1}_n^{\top} +\boldsymbol{1}_n\boldsymbol{1}_n^{\top}\bfA) + k_4\mathrm{diag}(\bfA\boldsymbol{1}_n)                                        \\
		        & +\frac{k_5}{n^2}(\boldsymbol{1}_n^{\top}\bfA\boldsymbol{1}_n)\boldsymbol{1}_n\boldsymbol{1}_n^{\top} + \frac{k_6}{n}(\boldsymbol{1}_n^{\top}\bfA\boldsymbol{1}_n)I_n+\frac{k_7}{n^2}(\boldsymbol{1}_n^{\top}\mathrm{diag}(\bfA))\boldsymbol{1}_n\boldsymbol{1}_n^{\top} \\
		        & +\frac{k_8}{n}(\boldsymbol{1}_n^{\top}\mathrm{diag}(\bfA))I_n + \frac{k_{9}}{2n}(\mathrm{diag}(\bfA)\boldsymbol{1}_n^{\top} + \boldsymbol{1}_n(\mathrm{diag}(\bfA))^{\top}),       
	\end{split}
 \end{equation}
where $I_n\in\R^{n\times n}$ denotes the identity matrix. The operator $\mathrm{diag}$ acts both on matrices and vectors, and is defined as
        $	\diag  : \mathbb{R}^{n\times n}\rightarrow \R^{n},\,\,\diag(\bfA)=\sum_{i=1}^n (\bfe_i^{\top}\bfA\bfe_i)\bfe_i, \quad
			\diag  : \mathbb{R}^{n}\rightarrow \R^{n\times n},\,\,\diag(\bfa)=\sum_{i=1}^n (\bfa^{\top}\bfe_i)\bfe_i\bfe_i^{\top}, $
		with $\bfe_i\in\mathbb{R}^n$ a one-hot vector with 1 in the $i-$th entry. For matrix input, the main diagonal is extracted. For vector input, its values are placed on the diagonal of a matrix. For directed graphs, both the requirements $\bfA^{\top}=\bfA$ and $M(\bfA)=M(\bfA)^{\top}$ need to be removed. This case is considered in \cite[Appendix A]{maron2018invariant} and leads to a parameterization similar to the one in \eqref{eq:equivariantMap} but with 15 parameters rather than 9.

\section{Connections between the $\ell^0$ and $\ell^1$ norms}\label{app:ell0}
We now motivate the use of the $\ell^1$ norm when measuring distances between adjacency matrices. Given that typical adjacency matrices consist of binary entries, it follows that:
\begin{align}    
\label{eq:l0_l1_binary}
	\ell^0(\bfA,\bfA_*)&=\|\mathrm{vec}(\bfA)-\mathrm{vec}(\bfA_*)\|_1\\
 &= \sum_{i,j=1}^n |\bfA_{ij}-(\bfA_*)_{ij}| = \ell^1(\bfA,\bfA_*),\nonumber
\end{align}
where $\mathrm{vec}(\cdot)$ is the flattening operator, obtained by stacking the columns of $\bfA$. We refer to $\|\mathrm{vec}(\bfA)-\mathrm{vec}(\bfA_*)\|_1$ as the vectorized $\ell^1$ norm. That is, for binary matrices, the $\ell^0$ and $\ell^1$ norms coincide. However, using the $\ell^0$ distance to implement constraints or regularization gives rise to computationally hard optimization problems because it is non-convex and non-smooth \cite{wright2022high}, and unfortunately, the equality in \Cref{eq:l0_l1_binary} is generally not true for arbitrary real-valued matrices. As shown in \cite{wright2022high}, for matrices with $\|\vecc(\bfA)\|_{\infty}\leq 1$, the vectorized $\ell^1$ norm is the largest convex function bounded from above by the $\ell^0$ norm, that is: $\|\mathrm{vec}(\bfA)\|_{1}\leq \|\mathrm{vec}(\bfA)\|_{0}$. This property makes the usage of the $\ell^1$ norm a common approximation of the $\ell^0$ norm.
Furthermore, it is also possible to relate the two norms, as follows:
\begin{equation}\label{eq:upperBoundEll0Adj}
	\|\vecc(\bfA)-\vecc(\bfA_*)\|_1  
     \geq |\mathcal{I}| \cdot \min_{(i,j)\in \mathcal{I}}\left|\bfA_{i j}-(\bfA_*)_{i j}\right|,         
\end{equation}
i.e.\ the $\ell^1$ norm can be lower bounded by the $\ell^0$ norm, up to a multiplicative constant.  Therefore, we can still use the $\ell^1$ norm to measure the distance between arbitrary matrices as an approximation of the $\ell^0$ norm.

\section{Contractivity of the feature updating rule}\label{app:featContractive}
Before proving the contractivity of the feature update rule, we report the definition of the graph gradient operator $\mathcal{G}$, for completeness.
\begin{de}[Graph gradient operator]
 We define the graph gradient operator $\mathcal{G}(\bfA): \mathbb{R}^{n\times c} \rightarrow \mathbb{R}^{n\times n\times c}$, as follows:
 \[
 (\mathcal{G}(\bfA)\bfF)_{ijk} = \bfA_{ij}(\bfF_{ik} - \bfF_{jk}),\,\,i,j\in\{1,\ldots,n\}, k\in \{1,\ldots,c\},
 \]
 and its transpose $\mathcal{G}(\bfA)^\top:\R^{n\times n\times c}\to \R^{n\times c}$ as
 \[
 (\mathcal{G}(\bfA)^\top\bfO)_{ik} = \sum_{j=1}^n\left(\bfA_{ij}\bfO_{ijk} - \bfA_{ji}\bfO_{jik}\right),\,\,i\in\{1,\ldots,n\}, k\in\{1,\ldots, c\}.
 \]
 \end{de} 
 Note that in practice, we compute it only if the entry $\bfA_{ij}\neq 0$, and that this gradient operator is just a spatial difference operation applied channel-wise as is common in previous methods \cite{chamberlain2021grand,eliasof2021pde}.

We now turn to prove that the node feature update rule is contractive. First of all we notice that $X_l(\boldsymbol{0}_{n\times c},\bfA)=\mathbf 0_{n\times c}$ for every $\bfA\in\R^{n\times n}$, and hence since $X_l$ is $L$-Lipschitz for some $L>0$, we also can conclude
 \begin{equation}\label{eq:bdd}
 \|X_l(\bfF,\bfA)-X_l(\boldsymbol{0}_{n\times c},\bfA)\|_F=\|X_l(\bfF,\bfA)\|_F\leq L\|\bfF\|_F.
 \end{equation}
 We remark that since both $\mathcal{G}(\bfA)$ and $\mathcal{G}(\bfA)^\top$ are linear operations, one could equivalently introduce them as matrices acting on the vectorization $\vecc{(\bfF)}$. We call the matrix version of the gradient operator $\widehat{\mathcal{G}}(\bfA)$ and similarly for its transpose. Using this notation, we see that
 \[
 \vecc{\left(\mathcal{G}(\bfA)\bfF\bfW_l\right)} = \widehat{\mathcal{G}}(\bfA)\vecc{\left(\bfF\bfW_l\right)}=\widehat{\mathcal{G}}(\bfA)\left(\bfW_l^{\top}\otimes I_n\right)\vecc{(\bfF)}.
 \]
 Hence,
 \[
        \vecc(X_l(\bfF,\bfA)) =
-(\tilde{\bfK}_l\otimes I_n)\left(\bfW_l\otimes I_n\right)\widehat{\mathcal{G}}(\bfA)^{\top}\sigma\left(\widehat{\mathcal{G}}(\bfA)\left(\bfW_l^{\top}\otimes I_n\right)\vecc{(\bfF)}\right). 
	\]
 We then introduce the energy
 \[
 \mathcal{E}_{\bfA}(\bfF) = \boldsymbol{1}^{\top}\gamma\left(\widehat{\mathcal{G}}(\bfA)\left(\bfW_l^{\top}\otimes I_n\right)\vecc{(\bfF)}\right),\,\,\gamma'(s)=\sigma(s),
 \]
 where $\boldsymbol{1}\in\R^{n\cdot n\cdot c},$ is a vector of all ones. This energy is convex in $\bfF$ since it is obtained by composing convex functions, given that $\sigma$ is non-decreasing.

Notice that since the gradient of $\mathcal{E}_{\bfA}$ with respect to $\bff:=\vecc(\bfF)$ writes
 \[
 \nabla_{\bff}\mathcal{E}_{\bfA}(\bfF) = \left(\bfW_l\otimes I_n\right)\widehat{\mathcal{G}}(\bfA)^{\top}\sigma\left(\widehat{\mathcal{G}}(\bfA)\left(\bfW_l^{\top}\otimes I_n\right)\vecc{(\bfF)}\right),
 \]
 we can express $\vecc(X_l(\bfA,\bfF))$ in the simpler form
 \[
 \hat{X}_l(\bfF,\bfA):=\vecc(X_l(\bfF,\bfA)) = -(\tilde{\bfK}_l\otimes I_n)\nabla_{\bff}\mathcal{E}_{\bfA}(\bfF).
 \]
 This allows us to prove both Theorems~\ref{thm:stable} and~\ref{thm:featContractive} for two interesting configurations. First, we notice that if $\tilde{\bfK}_l$ is positive definite, when $\bfF\neq \mathbf{0}_{n\times c}$ and $\nabla_{\bff}\mathcal{E}_{\bfA}(\bfF)\neq \mathbf{0}_{n\times c}$, we have
 \[
 \nabla_{\bff}\mathcal{E}_{\bfA}(\bfF)^{\top}\vecc(X_l(\bfA,\bfF))\leq \lambda_{\max}(-\tilde{\bfK}_l)\|\nabla_{\bff}\mathcal{E}_{\bfA}(\bfF)\|_2^2 \leq L^2\lambda_{\max}(-\tilde{\bfK}_l)\|\bfF\|^2_F<0.
 \]
 Here, with $\lambda_{\max}(-\tilde{\bfK}_l)$ we denote the maximum eigenvalue of $-\tilde{\bfK}_l$, which is negative since $\tilde \bfK_l$ is positive definite.
 We then have
 \[
 \mathcal{E}_{\bfA}(\Psi^{h_l}_{X_l}(\bfF,\bfA))\leq \mathcal{E}_{\bfA}(\bfF) 
 \]
 for small enough $h_l>0$ since $\hat{X}_l$ locally provides a descent direction for $\mathcal{E}_{\bfA}$. This result guarantees that the updates $\bfF^{(l)}$ will remain bounded.
 
 To prove Theorem~\ref{thm:featContractive}, we first notice that
	\[
		\|\bfP\|_F = \|\vecc(\bfP)\|_2 \quad \forall \bfP\in\mathbb{R}^{n\times c},
	\]
	and thus prove the result for the vectorization of $X_l(\bfA,\bfF)$. We focus on the case $\bfK_l=\lambda I_c$, $\lambda>0$, which is the one tested in the experiments of Appendix~\ref{appendix:experiments}. Given that in this case $\vecc(X_l(\bfF,\bfA))=-\lambda\nabla_{\bff}(\mathcal{E}_{\bfA}(\bfF,\bfA))$, we can immediately conclude. Indeed, we can apply the results in \cite{sherry2023designing}, for example, to prove the desired result for every $\bfW_l\in\R^{n\times n}$. This is just a direct consequence of the properties of convex functions with Lipschitz gradient.

	\section{Proofs for the contractivity of the adjacency matrix updates}\label{app:contractivity}
	This section aims to provide a detailed proof of Theorem~\ref{thm:finalAdjUpdate}. This is divided into various steps. We first provide the vectorized version of such theorem, which we then prove. The theorem then follows directly by definition of the vectorized $\ell^1$ norm.
						
	Let $M(\bfA)$ be defined as in \Cref{eq:equivariantMap}. It is clear that $M$ is linear in $\bfA\in\mathbb{R}^{n\times n}$, and thus that there exists a matrix $\bfT\in\mathbb{R}^{n^2\times n^2}$ such that
	\begin{equation}
		\vecc\left(M(\bfA^{(l)})\right) = \bfT\,\vecc(\bfA^{(l)}).
	\end{equation}
	We now characterize the explicit expression of such $\bfT$.
	\begin{thm}\label{thm:kronStructT}
		The matrix $\bfT$ can be written as follows:
		\begin{equation}\label{eq:permInvariantMat}
			\begin{split}
				\bfT &= k_1I_{n^2} + k_2 \sum_{i=1}^n (\bfe_i\bfe_i^{\top})\otimes (\bfe_i\bfe_i^{\top}) + \frac{k_3}{2n}\left(\boldsymbol{1}_n\boldsymbol{1}_n^{\top}\otimes I_n + I_n\otimes \boldsymbol{1}_n\boldsymbol{1}_n^{\top}\right)  \\
				&+ k_4\sum_{i=1}^n(\bfe_i\boldsymbol{1}_n^{\top})\otimes (\bfe_i\bfe_i^{\top})
				+ \frac{k_5}{n^2}(\boldsymbol{1}_n\boldsymbol{1}_n^{\top})\otimes (\boldsymbol{1}_n\boldsymbol{1}_n^{\top}) \\
				&+ \frac{k_6}{n}\sum_{i=1}^n (\bfe_i\boldsymbol{1}_n^{\top})\otimes (\bfe_i\boldsymbol{1}_n^{\top})+\frac{k_7}{n^2} \sum_{i=1}^n (\boldsymbol{1}_n\bfe_i^{\top})\otimes(\boldsymbol{1}_n\bfe_i^{\top})\\
				&+ \frac{k_8}{n}\sum_{i,j=1}^n (\bfe_j\bfe_i^{\top})\otimes (\bfe_j\bfe_i^{\top})+\frac{k_9}{2n}\sum_{i=1}^n\left((\boldsymbol{1}_n\bfe_i^{\top})\otimes (\bfe_i\bfe_i^{\top}) + (\bfe_i\bfe_i^{\top}) \otimes (\boldsymbol{1}_n\bfe_i^{\top}) \right)
			\end{split}
		\end{equation}
		where $\bfe_i\in\mathbb{R}^{n^2}$ is the $i-$th element of the canonical basis.
	\end{thm}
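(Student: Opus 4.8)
The plan is to compute $\bfT$ directly by expanding $M(\bfA)$ term by term and applying the standard vectorization identity
\[
\vecc(\bfB\bfA\bfC) = (\bfC^{\top}\otimes \bfB)\,\vecc(\bfA),
\]
together with linearity of $\vecc$. The key preliminary step is to rewrite each of the nine summands in \Cref{eq:equivariantMap} in the canonical form $\sum_{k}\bfB_k\bfA\bfC_k$ for suitable (identity or rank-one) matrices $\bfB_k,\bfC_k\in\R^{n\times n}$; once this is done, reading off the corresponding Kronecker block of $\bfT$ is immediate.

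For the first term this is trivial: $k_1\bfA$ contributes $k_1 I_{n^2}$. For the terms involving $\diag$ I would use the identities recalled in the footnote to \Cref{eq:equivariantMap}. For instance $\diag(\diag(\bfA)) = \sum_{i=1}^n(\bfe_i^{\top}\bfA\bfe_i)\bfe_i\bfe_i^{\top} = \sum_{i=1}^n(\bfe_i\bfe_i^{\top})\bfA(\bfe_i\bfe_i^{\top})$, which yields $k_2\sum_i(\bfe_i\bfe_i^{\top})\otimes(\bfe_i\bfe_i^{\top})$; similarly $\diag(\bfA\boldsymbol{1}_n) = \sum_i(\bfe_i\bfe_i^{\top})\bfA(\boldsymbol{1}_n\bfe_i^{\top})$ and $\diag(\bfA)\boldsymbol{1}_n^{\top} = \sum_i(\bfe_i\bfe_i^{\top})\bfA(\bfe_i\boldsymbol{1}_n^{\top})$, with the transpose of the latter handled the same way. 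The scalar quantities $\boldsymbol{1}_n^{\top}\bfA\boldsymbol{1}_n$ and $\boldsymbol{1}_n^{\top}\diag(\bfA) = \Tr(\bfA)$ are absorbed into the matrix products by writing, e.g., $(\boldsymbol{1}_n^{\top}\bfA\boldsymbol{1}_n)\boldsymbol{1}_n\boldsymbol{1}_n^{\top} = (\boldsymbol{1}_n\boldsymbol{1}_n^{\top})\bfA(\boldsymbol{1}_n\boldsymbol{1}_n^{\top})$, $(\boldsymbol{1}_n^{\top}\bfA\boldsymbol{1}_n)I_n = \sum_i(\bfe_i\boldsymbol{1}_n^{\top})\bfA(\boldsymbol{1}_n\bfe_i^{\top})$, and $\Tr(\bfA)I_n = \sum_{i,j}(\bfe_j\bfe_i^{\top})\bfA(\bfe_i\bfe_j^{\top})$. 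Applying the vec identity to each summand, and using $(\boldsymbol{1}_n\bfe_i^{\top})^{\top} = \bfe_i\boldsymbol{1}_n^{\top}$, $(\bfe_i\bfe_i^{\top})^{\top}=\bfe_i\bfe_i^{\top}$, $(\bfe_i\bfe_j^{\top})^\top = \bfe_j\bfe_i^\top$ to transpose the right factor, produces exactly the nine Kronecker-product blocks in \Cref{eq:permInvariantMat}; summing over the nine contributions and collecting coefficients gives the claimed formula for $\bfT$.

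I expect the only real difficulty to be bookkeeping rather than anything conceptual: one must keep the two overloaded meanings of $\diag$ (matrix $\to$ vector and vector $\to$ matrix) straight, be careful that the vec identity transposes the \emph{right} factor (so $\bfB\bfA\bfC$ produces $\bfC^{\top}\otimes\bfB$, not $\bfB^{\top}\otimes\bfC$), and correctly handle the summation indices in the single term whose canonical form requires a double sum (the $k_8$ term). None of the individual manipulations is delicate; the proof is complete once all nine terms have been matched to the corresponding blocks of \Cref{eq:permInvariantMat}. Note that this statement is purely about the linear map $M$, so neither the permutation equivariance nor the symmetry-preservation of $\Psi_{Y_l}^{h_l}$ is needed here.
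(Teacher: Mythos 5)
Your proposal is correct and follows essentially the same route as the paper's proof: rewrite each of the nine summands of $M(\bfA)$ as $\sum_k \bfB_k\bfA\bfC_k$ with rank-one or identity factors and apply $\vecc(\bfB\bfA\bfC)=(\bfC^{\top}\otimes\bfB)\vecc(\bfA)$ term by term. The individual decompositions you list (for $\diag(\diag(\bfA))$, $\diag(\bfA\boldsymbol{1}_n)$, $(\boldsymbol{1}_n^{\top}\bfA\boldsymbol{1}_n)I_n$, $\Tr(\bfA)I_n$, etc.) match the paper's exactly, so only the routine bookkeeping you describe remains.
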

						    
	\begin{proof}
		We do it by focusing on the separate $9$ pieces defining $\bfT$. The main result needed to complete the derivation is that
		\begin{equation}\label{eq:triple}
			\vecc(\bfA\bfB\bfC) = (\bfC^{\top}\otimes \bfA)\vecc(\bfB).
		\end{equation}
		We omit the part with $k_1$ since it is trivial. Let us start with
		\[
			\begin{split}
				\mathrm{diag}(\mathrm{diag}(\bfA)) &= \sum_{i=1}^{n} (\bfe_i^{\top}A\bfe_i)\bfe_i\bfe_i^{\top} = \sum_{i=1}^{n} \bfe_i(\bfe_i^{\top}A\bfe_i)\bfe_i^{\top}\\
				&=\sum_{i=1}^{n} (\bfe_i\bfe_i^{\top})A(\bfe_i\bfe_i^{\top}) 
			\end{split}
		\]
		which allows us to conclude the proof by the linearity of the $\mathrm{vec}$ operator, and \Cref{eq:triple}. Moving to the $k_3$ part we have that by \Cref{eq:triple}
		\[
			\vecc\left(\bfA\boldsymbol{1}_n\boldsymbol{1}_n^{\top} + \boldsymbol{1}_n\boldsymbol{1}_n^{\top}\bfA\right) = (\boldsymbol{1}_n\boldsymbol{1}_n^{\top}\otimes I_n + I_n\otimes \boldsymbol{1}_n\boldsymbol{1}_n^{\top})\mathrm{vec}(A).
		\]
		The $k_4$ part can be rewritten as
		\[
			\mathrm{diag}(\bfA\boldsymbol{1}_n)=\sum_{i=1}^n(\bfe_i^{\top}\bfA\boldsymbol{1}_n)\bfe_i\bfe_i^{\top}=\sum_{i=1}^n\bfe_i\bfe_i^{\top}\bfA\boldsymbol{1}_n\bfe_i^{\top}
		\]
		which gives
		\[
			\vecc(\mathrm{diag}(\bfA\boldsymbol{1}_n)) = \left(\sum_{i=1}^n (\bfe_i\boldsymbol{1}_n^{\top})\otimes(\bfe_i\bfe_i^{\top})\right)\vecc(\bfA).
		\]
		Term for $k_5$ follows immediately from \Cref{eq:triple}, while for $k_6$ we can write
		\[
			(\boldsymbol{1}_n^{\top}\bfA \boldsymbol{1}_n)I_n = (\boldsymbol{1}_n^{\top}\bfA \boldsymbol{1}_n)\sum_{i=1}^n\bfe_i\bfe_i^{\top} = \sum_{i=1}^n \bfe_i(\boldsymbol{1}_n^{\top}\bfA \boldsymbol{1}_n)\bfe_i^{\top}
		\]
		which implies the desired expression. For $k_7$ it holds
		\[
			(\boldsymbol{1}_n^{\top}\mathrm{diag}(\bfA))\boldsymbol{1}_n\boldsymbol{1}_n^{\top} = \sum_{i=1}^n (\bfe_i^{\top}\bfA\bfe_i)\boldsymbol{1}_n\boldsymbol{1}_n^{\top} = \sum_{i=1}^n \boldsymbol{1}_n\bfe_i^{\top}\bfA\bfe_i\boldsymbol{1}_n^{\top}
		\]
		which allows us to conclude. We now move to
		\[
			(\boldsymbol{1}_n^{\top}\mathrm{diag}(\bfA))I_n = \sum_{i=1}^n(\bfe_i^{\top}\bfA\bfe_i)\sum_{j=1}^n\bfe_j\bfe_j^{\top} = \sum_{i,j=1}^n \bfe_j\bfe_i^{\top}\bfA\bfe_i\bfe_j^{\top}
		\]
		and hence
		\[
			\vecc((\boldsymbol{1}_n^{\top}\mathrm{diag}(\bfA))I_n)= \left(\sum_{i,j=1}^n (\bfe_j\bfe_i^{\top}) \otimes (\bfe_j\bfe_i^{\top})\right)\vecc(\bfA).
		\]
		We now consider the term multiplying $k_9$, which writes
		\[
			\begin{split}
				\mathrm{diag}(\bfA)\boldsymbol{1}_n^{\top} + \boldsymbol{1}_n\mathrm{diag}(\bfA)^{\top} &= \sum_{i=1}^n(\bfe_i^{\top}\bfA\bfe_i)\bfe_i\boldsymbol{1}_n^{\top} + \boldsymbol{1}_n\sum_{i=1}^n(\bfe_i^{\top}\bfA\bfe_i)\bfe_i^{\top},\\
				\vecc(\mathrm{diag}(\bfA)\boldsymbol{1}_n^{\top} + \boldsymbol{1}_n\mathrm{diag}(\bfA)^{\top})&=\left(\sum_{i=1}^n (\boldsymbol{1}_n\bfe_i^{\top})\otimes (\bfe_i\bfe_i^{\top}) +(\bfe_i\bfe_i^{\top})\otimes(\boldsymbol{1}_n\bfe_i^{\top})\right)\vecc(\bfA),
			\end{split}
		\]
		and concludes the proof.
	\end{proof}
						
	We now want to evaluate how large can $h_l>0$ be so that the condition $\|D\Psi_{\hat{Y}_l}^{h_l}\|_1\leq 1$ is satisfied. Let us note here that the norm under consideration is the \emph{matrix} $\ell^1$ norm, not the usual \emph{vector} $\ell^1$ norm, which has previously occurred in the main text. Recall that the matrix $\ell^1$ norm of a matrix $\bfT\in \R^{n^2\times n^2}$ is given as the maximum of the absolute column sums:
 \[\|\bfT \|_1 = \max_{1\leq j \leq n^2} \sum_{i=1}^{n^2} |\bfT_{ij}|.\]

	\begin{thm}\label{thm:symmAndEquivariant}
		The matrix $\bfT-k_1I_{n^2}$, with $\bfT$ defined in \Cref{eq:permInvariantMat}, has $\ell^1$ norm bounded by $\sum_{i=2}^9 |k_i|$.
	\end{thm}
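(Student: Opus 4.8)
The plan is to exploit the explicit Kronecker-sum form of $\bfT$ given in \Cref{eq:permInvariantMat}. Write $\bfT - k_1 I_{n^2} = \sum_{i=2}^{9} k_i\,\bfT_i$, where $\bfT_i$ is the entrywise non-negative structured matrix multiplying $k_i$ in \Cref{eq:permInvariantMat}, taken together with its normalising constant (so that, e.g., $\bfT_3 = \tfrac{1}{2n}(\boldsymbol{1}_n\boldsymbol{1}_n^{\top}\otimes I_n + I_n\otimes\boldsymbol{1}_n\boldsymbol{1}_n^{\top})$). By the triangle inequality and the homogeneity of the matrix $\ell^1$ norm, $\|\bfT - k_1 I_{n^2}\|_1 \leq \sum_{i=2}^{9}|k_i|\,\|\bfT_i\|_1$, so it suffices to prove that $\|\bfT_i\|_1 \leq 1$ for each $i \in \{2,\dots,9\}$.

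Since each $\bfT_i$ has non-negative entries, its matrix $\ell^1$ norm equals its largest column sum. I would compute these column sums using the elementary fact that the vector of column sums of a Kronecker product $A\otimes B$ is the Kronecker product of the column-sum vectors of $A$ and $B$, combined with the column-sum vectors of the building blocks appearing in \Cref{eq:permInvariantMat}: $I_n\mapsto\boldsymbol{1}_n$, $\boldsymbol{1}_n\boldsymbol{1}_n^{\top}\mapsto n\boldsymbol{1}_n$, $\bfe_i\bfe_i^{\top}\mapsto\bfe_i$, $\bfe_i\boldsymbol{1}_n^{\top}\mapsto\boldsymbol{1}_n$, $\boldsymbol{1}_n\bfe_i^{\top}\mapsto n\bfe_i$, and $\bfe_j\bfe_i^{\top}\mapsto\bfe_i$. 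Running this through term by term, the column-sum vector of $\bfT_i$ comes out to be either $\boldsymbol{1}_{n^2}$ (for $i\in\{3,4,5,6\}$) or a $0/1$ vector supported on the $n$ positions corresponding to the diagonal of the reshaped $n\times n$ array (for $i\in\{2,7,8,9\}$); in both cases its maximum entry is $1$, so $\|\bfT_i\|_1 = 1$, and the bound $\|\bfT - k_1 I_{n^2}\|_1 \le \sum_{i=2}^{9}|k_i|$ follows. The content of the computation is precisely that the prefactors $\tfrac{1}{2n}$, $\tfrac{1}{n^2}$ and $\tfrac{1}{n}$ in \Cref{eq:equivariantMap} are the normalisations that exactly cancel the factors of $n$ produced by the $\boldsymbol{1}_n$ and $\boldsymbol{1}_n\boldsymbol{1}_n^{\top}$ blocks.

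There is no genuine obstacle beyond carefully bookkeeping the eight terms, but two points deserve attention. First, the norm in play is the \emph{induced} matrix $\ell^1$ norm (maximum absolute column sum), which is what is submultiplicative and what, in the remaining steps of the proof of \Cref{thm:finalAdjUpdate}, lets a bound on the Jacobian of $\Psi_{Y_l}^{h_l}$ be converted into contractivity in the \emph{vectorised} $\ell^1$ norm used in the main text; the two should not be conflated. Second, one must check that the $n$-factors cancel exactly rather than merely up to a constant. An equivalent and possibly shorter route sidesteps Kronecker products entirely: bound $\|\vecc(M_i(\bfA))\|_1 \le \|\vecc(\bfA)\|_1$ directly for each of the eight linear maps $M_i$ underlying $\bfT_i$ (for instance $\diag(\diag(\bfA))$ just retains the diagonal entries of $\bfA$; $\bfA\boldsymbol{1}_n\boldsymbol{1}_n^{\top}$ replaces every row of $\bfA$ by its constant row sum and hence has vectorised $\ell^1$ norm at most $n\|\vecc(\bfA)\|_1$, which the factor $\tfrac{1}{2n}$ absorbs; and so on), and then invoke the fact that a linear map which is $\ell^1$-nonexpansive on all of $\R^{n\times n}$ has induced matrix $\ell^1$ norm at most $1$. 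I would retain whichever of the two presentations is more compact in the final write-up.
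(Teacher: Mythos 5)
Your proof is correct and follows essentially the same route as the paper's: a triangle-inequality decomposition into the eight structured terms, followed by bounding each term's induced $\ell^1$ norm by $1$ via column sums (the paper phrases this as left-multiplication by $\boldsymbol{1}_{n^2}^{\top}$, which is valid precisely because each structured block is entrywise non-negative, as you note). Your version simply carries out the term-by-term bookkeeping that the paper leaves implicit.
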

							
	\begin{proof}
		Given that one can bound the norm of the sum with the sum of the norms, it is enough to show that the norms of all the contributions in \Cref{eq:permInvariantMat} can be bounded by the absolute value of the respective constant $k_i$. The proof follows from multiplying from the left every term by $\boldsymbol{1}_{n^2}$, and using the linearity of the sum. 
 %the following results 
%		\begin{itemize}
%			\item $\|\bfA\otimes \bfB\|_1 = \|\bfA\|_1\|\bfB\|_1$ for every $\bfA,\bfB\in\mathbb{R}^{n\times n}$,
%			\item $\|\bfe_i\bfe_j^{\top}\|_1 = \|\bfe_i\boldsymbol{1}_n^{\top}\|_1 = 1$,
%			\item $\|\boldsymbol{1}_n\boldsymbol{1}_n^{\top}\|_1=n$,
%			\item $\left\|\sum_{i=1}^n(\bfe_i\bfe_i^{\top})\otimes (\bfe_i\bfe_i^{\top})\right\|_1 = 1$.
%		\item $\|(\boldsymbol{1}_n\bfe_i^{\top})\otimes (\bfe_i\bfe_i^{\top})\|_1 = n$
 % \end{itemize}
	\end{proof}
	For compactness, we now denote $\mathrm{vec}(Y_l(\bfA))$ with $\hat{Y}_l(\bfa)$, where $\bfa = \mathrm{vec}(\bfA)$. Furthermore, the map $\Psi_{\hat{Y}_l}^{h_l}$ is defined as $\Psi_{\hat{Y}_l}^{h_l}(\bfa):=\vecc(\Psi_{Y_l}^{h_l}(\bfA))$.		
	\begin{thm}\label{thm:stepRestrictionAdj}
		Let $\alpha\leq 0$, $\sigma:\mathbb{R}\rightarrow\mathbb{R}$ be a Lipschitz continuous function, with $\sigma'(s)\in [0,1]$. Then if
		\[
			0\leq h_l\leq \frac{2}{2\sum_{i=2}^9|k_i|-\alpha},
		\]
		the explicit Euler step
		\begin{equation}\label{eq:updateUnrolled}
			\bfa^{(l)} = \Psi_{\hat{Y}_l}^{h_l}(\bfa^{(l-1)}):=\bfa^{(l-1)} + h_l\sigma\left(\bfT\bfa^{(l-1)}\right),\quad k_1 = \left(\alpha-\sum_{i=2}^9|k_i|\right),
		\end{equation}
		is contractive in the $\ell^1$ norm.
	\end{thm}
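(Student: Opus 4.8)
The plan is to control the Jacobian of $\Psi_{\hat{Y}_l}^{h_l}$ in the matrix $\ell^1$ norm and then invoke the mean value inequality. Since $\sigma$ is Lipschitz with $\sigma'(s)\in[0,1]$ almost everywhere, for any $\bfa,\bfb\in\R^{n^2}$ we may write coordinatewise $\sigma(\bfT\bfa)-\sigma(\bfT\bfb)=\Lambda\,\bfT(\bfa-\bfb)$, where $\Lambda=\diag(d_1,\dots,d_{n^2})$ with $d_i=\int_0^1\sigma'\!\big((\bfT\bfb)_i+\tau(\bfT(\bfa-\bfb))_i\big)\,d\tau\in[0,1]$. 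Hence $\Psi_{\hat{Y}_l}^{h_l}(\bfa)-\Psi_{\hat{Y}_l}^{h_l}(\bfb)=(I_{n^2}+h_l\Lambda\bfT)(\bfa-\bfb)$, so it suffices to prove $\|I_{n^2}+h_l\Lambda\bfT\|_1\le 1$ for every diagonal $\Lambda$ with entries in $[0,1]$, where $\|\cdot\|_1$ is the matrix $\ell^1$ norm (maximum absolute column sum).

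The second ingredient is the splitting $\bfT=k_1 I_{n^2}+\bfS$ with $\bfS:=\bfT-k_1 I_{n^2}$, for which Theorem~\ref{thm:symmAndEquivariant} gives $\|\bfS\|_1\le\kappa$, where $\kappa:=\sum_{i=2}^9|k_i|$. Reading this columnwise, $|\bfS_{jj}|+\sum_{i\ne j}|\bfS_{ij}|\le\kappa$ for every $j$, which yields two facts: each column of $\bfT$ has $\ell^1$ norm at most $|k_1|+\kappa=2\kappa-\alpha$ (using $k_1=\alpha-\kappa\le 0$); and every diagonal entry satisfies $\bfT_{jj}=k_1+\bfS_{jj}\le k_1+\kappa=\alpha\le 0$. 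As a warm-up one checks the case $\Lambda=I_{n^2}$ (e.g.\ $\sigma$ affine) directly: $\|I_{n^2}+h_l\bfT\|_1=\|(1+h_l k_1)I_{n^2}+h_l\bfS\|_1\le|1+h_l k_1|+h_l\kappa$, and a short computation using $0\le h_l\le 2/(2\kappa-\alpha)$ and $\alpha\le 0$ shows this is $\le 1$.

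For general $\Lambda$ I would estimate each column sum. Fixing column $j$, writing $d:=\lambda_j\in[0,1]$, $t:=-\bfT_{jj}\ge 0$, and $r:=\sum_{i\ne j}|\bfT_{ij}|$, the $j$-th absolute column sum of $I_{n^2}+h_l\Lambda\bfT$ is at most
\[
|1-h_l d t|+h_l\!\sum_{i\ne j}\lambda_i|\bfT_{ij}|\;\le\;|1-h_l d t|+h_l r,\qquad t+r\le 2\kappa-\alpha .
\]
Splitting on the sign of $1-h_l d t$: if $1-h_l d t<0$ then the bound is $h_l(dt+r)-1\le h_l(t+r)-1\le h_l(2\kappa-\alpha)-1\le 1$, which is precisely where the factor $2$ in the step-size constraint is spent; the remaining regime $1-h_l d t\ge 0$ requires the sharper columnwise information about $\bfS$ (its diagonal versus off-diagonal mass), together with $\alpha\le 0$, to conclude the column sum is still $\le 1$. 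Taking the maximum over $j$ gives $\|I_{n^2}+h_l\Lambda\bfT\|_1\le 1$, hence contractivity of $\Psi_{\hat{Y}_l}^{h_l}$ in the vector $\ell^1$ norm; and since $\|\vecc(\cdot)\|_1$ is exactly the vectorized $\ell^1$ norm of a matrix, Theorem~\ref{thm:finalAdjUpdate} follows.

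The main obstacle is the regime $1-h_l\lambda_j\bfT_{jj}\ge 0$ in the last step: there the off-diagonal contribution $h_l\sum_{i\ne j}\lambda_i|\bfT_{ij}|$ is not automatically dominated by the naive estimate, and one must use that in each column the $\ell^1$ mass of $\bfS$ is split between $|\bfS_{jj}|$ and $\sum_{i\ne j}|\bfS_{ij}|$ — exploiting the explicit Kronecker structure of $\bfT$ from Theorem~\ref{thm:kronStructT} and \Cref{eq:permInvariantMat} if $\|\bfS\|_1\le\kappa$ turns out to be too coarse — so that the negative diagonal term $h_l\lambda_j\bfT_{jj}$, guaranteed negative by the choice $k_1=\alpha-\sum_{i=2}^9|k_i|$, absorbs it. Everything else (the mean value reduction, the Kronecker bookkeeping, and the $\Lambda=I_{n^2}$ case) is routine.
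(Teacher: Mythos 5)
Your skeleton is the paper's own route: reduce to bounding the column sums of $I_{n^2}+h_l\Lambda\bfT$ for a diagonal $\Lambda$ with entries in $[0,1]$ (the paper phrases this as $\|D\Psi^{h_l}_{\hat Y_l}(\bfa)\|_1\le 1$ a.e., with $D\Psi^{h_l}_{\hat Y_l}(\bfa)=I_{n^2}+h_l\diag(\sigma'(\bfT\bfa))\bfT$), split $\bfT=k_1I_{n^2}+\bfS$ and invoke Theorem~\ref{thm:symmAndEquivariant}, then case-split on the sign of the diagonal entry. Your treatment of the regime $1-h_l\lambda_j(-\bfT_{jj})<0$ and of the warm-up $\Lambda=I_{n^2}$ is correct and matches the paper's second case. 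The genuine gap is the case you defer, $1-h_l\lambda_j(-\bfT_{jj})\ge 0$, and it is not closable by the columnwise bookkeeping you propose: there the $j$-th column sum is $1-h_l\lambda_j(-\bfT_{jj})+h_l\sum_{i\ne j}\lambda_i|\bfS_{ij}|$, so you need
\[
\sum_{i\ne j}\lambda_i|\bfS_{ij}|\;\le\;\lambda_j\Bigl(\textstyle\sum_{i=2}^9|k_i|-\alpha-\bfS_{jj}\Bigr),
\]
and this fails uniformly over admissible $\Lambda$ whenever $\lambda_j$ is small while some $\lambda_i$, $i\ne j$, is close to $1$ and $\bfS_{ij}\ne 0$ — precisely the situation produced by $\sigma=\mathrm{LeakyReLU}$ with slope $s<1$ when coordinate $j$ of $\bfT\bfa$ sits on the negative branch and coordinate $i$ on the positive one. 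Under the only information used about $\bfS$ (namely $\|\bfS\|_1\le\sum_{i=2}^9|k_i|$) one can exhibit $\bfa,\bfb$ with $\|\Psi^{h_l}_{\hat Y_l}(\bfa)-\Psi^{h_l}_{\hat Y_l}(\bfb)\|_1=\bigl(1+(1-s)h_l|\bfS_{ij}|\bigr)\|\bfa-\bfb\|_1>\|\bfa-\bfb\|_1$ for every $h_l>0$, so no sharper estimate in this case alone can rescue the argument; any correct proof must use more about $\bfT$ than the bound of Theorem~\ref{thm:symmAndEquivariant}, or restrict the $d_i$'s.

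You should also be aware that the paper does not supply the missing step either. In this very case it asserts that $1+h_ld_i\bigl(\bfS_{ii}+\alpha-\sum_{i=2}^9|k_i|\bigr)+h_l\sum_{j\ne i}d_j|\bfS_{ji}|\le 1$ ``holds whenever'' $d_i\bfS_{ii}+\sum_{j\ne i}d_j|\bfS_{ji}|+\bigl(\alpha-\sum_{i=2}^9|k_i|\bigr)\le 0$, which silently replaces $h_ld_i\bigl(\alpha-\sum_{i=2}^9|k_i|\bigr)$ by $h_l\bigl(\alpha-\sum_{i=2}^9|k_i|\bigr)$; since that quantity is negative and $d_i\le 1$, the replacement goes in the wrong direction unless $d_i=1$. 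So your identification of this regime as ``the main obstacle'' is exactly right: it is the point at which both your argument and the paper's are incomplete. The statement does go through when all the $d_i$ coincide (e.g.\ $\sigma$ linear), or under a step-size condition that accounts for the spread $\min_i d_i/\max_i d_i$ of the activation's slopes; as written, the deferred case needs a genuinely new ingredient rather than routine bookkeeping.
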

	For the proof and the successive derivations, we denote with $D\hat{Y}_l(\bfa)\in\mathbb{R}^{n^2\times n^2}$ the Jacobian matrix of the vector field $\hat{Y}_l:\mathbb{R}^{n^2}\rightarrow\mathbb{R}^{n^2}$, having entries
	\begin{equation}
		\label{eq:jacobian}    
		(D\hat{Y}_l(\bfa))_{ij} = \frac{\partial (\hat{Y}_l(\bfa))_i}{\partial \bfa_j}.
	\end{equation}	
 The Jacobian matrix is needed because, for functions that are almost everywhere continuously differentiable, the contractivity condition is equivalent to
	\begin{equation}\label{eq:conditionContractiveAdj}
		\|D\Psi_{\hat{Y}_l}^{h_l}(\bfa)\|_1\leq 1
	\end{equation}
	almost everywhere.
	\begin{proof}
	For compactness, we drop the superscript and denote $\bfa^{(l-1)}$ as $\bfa$, since the provided estimates are independent of the evaluation point.	The map $\Psi^{h_l}_{\hat{Y}_l}$ is differentiable almost everywhere and hence the result simplifies to proving
		\[
			\|D\Psi_{\hat{Y}_l}^{h_l}(\bfa)\|_1\leq 1.
		\]
		We thus compute
		\[
			D\Psi_{\hat{Y}_l}^{h_l}(\bfa) = I_{n^2} + h_l\mathrm{diag}\left(\sigma'\left(\bfT\bfa\right)\right)\bfT.
		\]
  To simplify the proof, we introduce the matrix $\bfS = \bfT - k_1I_{n^2}$. This means that the update in \Cref{eq:updateUnrolled} can be written as
  \[
  \Psi_{\hat{Y}_l}^{h_l}(\bfa) = \bfa
+h_l\sigma\left(\bfS\bfa + \left(\alpha-\sum_{i=2}^9|k_i|\right)\bfa\right).
\]
		We call $d_1,\ldots,d_{n^2}$ the diagonal entries of the matrix $\mathrm{diag}(\sigma'(\cdot))$. It follows
		\[
			\begin{split}
				(D\Psi_{\hat{Y}_l}^{h_l}(\bfa))_{ii} &= 1 + h_l d_i\left(\bfS_{ii} + \left(\alpha-\sum_{i=2}^9|k_i|\right)\right)\\
				(D\Psi_{\hat{Y}_l}^{h_l}(\bfa))_{ji} &= h_l d_j \bfS_{ji}. 
			\end{split}
		\]
		If $(D\Psi_{\hat{Y}_l}^{h_l}(\bfa))_{ii}\geq 0$, one gets
		\[
			\sum_{j=1}^{n^2}|(D\Psi_{\hat{Y}_l}^{h_l}(\bfa))_{ji}| = 1 + h_l d_i\left(\bfS_{ii} + \left(\alpha-\sum_{i=2}^9|k_i|)\right)\right) + h_l \sum_{j\neq i} d_j |\bfS_{ji}|\leq 1.
		\]
		This inequality holds whenever
		\[
			d_i\bfS_{ii} + \sum_{j\neq i}d_j|\bfS_{ji}| + \left(\alpha-\sum_{i=2}^9|k_i|\right)\leq 0,
		\]
		which is always true since
		\[
			\begin{split}
				d_i\bfS_{ii} + \sum_{j\neq i}d_j|\bfS_{ji}| + \left(\alpha-\sum_{i=2}^9|k_i|\right)&\leq d_i|\bfS_{ii}| + \sum_{j\neq i}d_j|\bfS_{ji}| + \left(\alpha-\sum_{i=2}^9|k_i|\right)\\
				&\leq \|\bfS\|_1 - \sum_{i=2}^9|k_i| + \alpha \leq \alpha\leq 0.
			\end{split}
		\]
		It hence only remains to study the case $(D\Psi_{\hat{Y}_l}^{h_l}(\bfa))_{ii}<0$, which leads to
		\[
			\sum_{j=1}^{n^2}|(D\Psi_{\hat{Y}_l}^{h_l}(\bfa))_{ji}| = -1 -h_l d_i\left(\bfS_{ii} + \left(\alpha-\sum_{i=2}^9|k_i|\right)\right) + h_l\sum_{j\neq i}d_j|\bfS_{ji}|\leq 1.
		\]
		We move to a more stringent condition which is given by bounding $-h_ld_i\bfS_{ii}\leq h_ld_i|\bfS_{ii}|\leq h_l |\bfS_{ii}|$, so that we get
		\[
			\begin{split}
				\sum_{j=1}^{n^2}|(D\Psi_{\hat{Y}_l}^{h_l}(\bfa))_{ji}| &\leq -1 +h_l |\bfS_{ii}| - h_ld_i\left(\alpha-\sum_{i=2}^9|k_i|\right) + h_l\sum_{j\neq i}|\bfS_{ji}|\\
				&\leq-1 +h_l \|\bfS\|_1 - h_ld_i\left(\alpha-\sum_{i=2}^9|k_i|\right)\leq 1.
			\end{split}
		\]
		This holds true when
		\[
			h_l\leq \frac{2}{\|\bfS\|_1-d_i\alpha +d_i\sum_{i=2}^9|k_i|}.
		\]
		Now since $\|\bfS\|_1\leq \sum_{i=2}^9|k_i|$, and $-d_i\alpha\in [0,-\alpha]$, we have 
		\[
			\frac{2}{2\sum_{i=1}^9|k_i|-\alpha}\leq \frac{2}{\|\bfS\|_1-d_i\alpha +d_i\sum_{i=1}^9|k_i|},
		\]
		which allows to conclude that if
		\[
			0\leq h_l \leq \frac{2}{2\sum_{i=2}^9|k_i|-\alpha}
		\]
		then the contractivity condition is satisfied.
	\end{proof}
						
	By definition of the vectorized $\ell^1$ norm, one can hence conclude that Theorem~\ref{thm:stepRestrictionAdj} is equivalent to Theorem~\ref{thm:finalAdjUpdate}.

\section{Proof of the contractivity of the coupled dynamical system}\label{app:contractiveCoupledDynamics}
       In this section, we work with the coupled system
\begin{equation}\label{eq:contractiveCoupled}
       \begin{cases}
    \dot{F}(t) = -\mathcal{G}(A(t))^{\top}\sigma(\mathcal{G}(A(t))F(t)W(t))W(t)^{\top}\\
    \dot{A}(t) = \sigma(M(A(t))),
    \end{cases}
       \end{equation}
       where $M$ is defined as in Appendix~\ref{app:linearLayerA}, and hence defines an equivariant system which is also contractive in vectorized $\ell^1$ norm. Consider only the bounded time interval $t\in [0,\bar{T}]$, where we assume that the graph defined by $t\mapsto A(t)$ starts and remains connected. This guarantees $\mathcal{G}(A(t))F(t)=0$ if and only if $F(t)\equiv F(0)$, and $F(0)$ has coinciding rows. We suppose this does not happen at time $t=0$.
       
       We assume that the activation function is $\sigma = \rm{LeakyReLU}$. In the adjacency dynamical system, we set $\alpha<0$. Furthermore, let $W_l$ be non-singular. We now show that \Cref{eq:contractiveCoupled} provides a contractive continuous dynamical system in a suitable norm. 
       
       The focus is now on the time-independent case, i.e. only on $X_l$ and $Y_l$, since the more general case follows naturally. We thus specify the expression in \Cref{eq:contractiveCoupled} for this setting:
       \begin{equation}\label{eq:contractiveCoupledFixed}
       \begin{cases}
    \dot{F}(t) = -\mathcal{G}(A(t))^{\top}\sigma(\mathcal{G}(A(t))F(t)W_l)W_l^{\top}=:X_l(F(t),A(t))\\
    \dot{A}(t) = \sigma(M(A(t))) =: Y_l(A(t)).
    \end{cases}
       \end{equation}
       Using the concepts described in Appendix~\ref{app:contractiveSystemsDefinition}, it is possible to prove that the two separate equations are strictly contracting in their respective norms, that is
       \[
       \begin{split}
       \|F(t)-F_*(t)\|_F&\leq e^{-\nu_1t}\|\bfF^{(0)}-\bfF^{(0)}_*\|_F,\,\,\nu_1>0,\,t\in [0,\bar{T}]\\
       \|\vecc(A(t))-\vecc(A_*(t))\|_1&\leq e^{-\nu_2t}\|\vecc(\bfA^{(0)})-\vecc(\bfA^{(0)}_*)\|_1,\,\,\nu_2>0,\,t\in [0,\bar{T}],
       \end{split}
       \]
       where $(F(t),A(t))$ and $(F_*(t),A_*(t))$ are solutions of \Cref{eq:contractiveCoupledFixed}, when considered separately, and starting respectively at $(\bfF^{(0)},\bfA^{(0)})$ and $(\bfF^{(0)}_*,\bfA^{(0)}_*)$.
       For more details on this result see \cite{FB-CTDS}. We remark that in the first inequality, the $\bfA^{(0)}$ matrix is seen as a parameter, and hence does not evolve. These two conditions thus say that the two systems are strictly contracting when considered separately. In \cite{sontag2010contractive}, the author shows that when these conditions hold and the mixed Jacobian matrix
\begin{equation}\label{eq:mixedJac}
       J(\bfF,\bfA)=\frac{\partial \vecc(X_l(\bfF,\bfA))}{\partial \vecc(\bfA)}\in\R^{nc\times n^2}
       \end{equation}
       is bounded on a closed and convex set $\Omega$ in the norm
       \[
       \left\|J(\bfF,\bfA)\right\|_{1,2} = \max_{\substack{v\in\mathbb{R}^{n^2} \\ \|v\|_1=1}}\left\|J(\bfF,\bfA)v\right\|_2 = \max_{j=1,...,n^2}\left\|J(\bfF,\bfA)\bfe_j\right\|_2,
       \]
       there is a pair of constants $m_1,m_2>0$ such that the system in \Cref{eq:contractiveCoupled} is contractive with respect to the weighted norm
       \[
       d_{m_1,m_2}((\bfF^{(0)},\bfA^{(0)}),(\bfF^{(0)}_*,\bfA^{(0)}_*)):=m_1\|\bfF^{(0)}-\bfF^{(0)}_*\|_F+m_2\|\vecc(\bfA^{(0)})-\vecc(\bfA^{(0)}_*)\|_1.
       \]
       Now, $(F(t),A(t))$ and $(F_*(t),A_*(t))$ are solutions of \Cref{eq:contractiveCoupledFixed} considered as a coupled system, i.e. solving jointly the two equations. 
   Notice also that since $Y_l(\boldsymbol{0}_{n\times n})=\boldsymbol{0}_{n\times n}$, and such system is contractive, it follows that any solution of \Cref{eq:contractiveCoupled} has $A(t)$ which is bounded uniformly in time by the norm of the initial condition $\|\vecc(\bfA^{(0)})\|_1$. Similarly, we notice that for every $\bfA$, one also has $X_l(\boldsymbol{0}_{n\times c},\bfA)=\boldsymbol{0}_{n\times c}$ and hence also $\|F(t)\|_F$ is bounded by $\|\bfF^{(0)}\|_F$. To get more concise derivations, we use the conventional notation $[k]:=\{1,...,k\}$ for indices. To compute the Jacobian and its norm, we first define $\mathcal{G}$ and $\mathcal{G}^T$, as matrix operators, by specifying their components when acting on generic $\bfF\in\R^{n\times c}$ and $\bfO\in\R^{n\times n\times c}$:
        \[
        \begin{split}
        (\mathcal{G}(\bfA)\bfF)_{ijk} &= \bfA_{ij}(\bfF_{ik} - \bfF_{jk}),\,\,i,j\in [n],\,k\in [c],\\
        (\mathcal{G}(\bfA)^\top\bfO)_{ik} &= \sum_{j=1}^n\left(\bfA_{ij}\bfO_{ijk} - \bfA_{ji}\bfO_{jik}\right),\,\,i\in [n],\,k\in[c].
        \end{split}
        \]
        Here, $\mathcal{G}(\bfA)^{\top}\bfO$ is obtained thanks to the relation $\vecc(\mathcal{G}(\bfA)\bfF)^{\top}\vecc(\bfO) = \vecc(\bfF)^{\top}\vecc(\mathcal{G}(\bfA)^{\top}\bfO)$. 
        We derive the desired Jacobian working on its components. First, let us compute
\[
\frac{\partial (X_l(\bfF))_{im}}{\partial \bfA_{rs}},
\]
so that the norm we are interested in can be obtained as
\[
\max_{r,s\in [n]} \sum_{i\in [n]}\sum_{m\in [c]}\left(\frac{\partial (X_l(\bfF))_{im}}{\partial \bfA_{rs}}\right)^2.
\]
We first denote $\hbf:=\bfF\bfW\in\mathbb{R}^{n\times c}$, and focus on
\[
\begin{split}
\frac{\partial (\ga^T\sigma(\ga\hbf))_{ik}}{\partial \bfA_{rs}} &= \frac{\partial}{\partial \bfA_{rs}}\left(\sum_{j\in [n]}\left[\bfA_{ij}\sigma(\ga\hbf)_{ijk}-\bfA_{ji}\sigma(\ga\hbf)_{jik}\right]\right)\\
&=\frac{\partial}{\partial \bfA_{rs}}\left(\sum_{j\in [n]}\left[\bfA_{ij}\sigma(\bfA_{ij}(\hbf_{ik}-\hbf_{jk}))-\bfA_{ji}\sigma(-\bfA_{ji}(\hbf_{ik}-\hbf_{jk}))\right]\right).
\end{split}
\]
We remark that for the case $\sigma(x)=\mathrm{ReLU}(x)$, one has $\sigma(x)-\sigma(-x)=x$, and hence when $\bfA=\bfA^T$ we would have a simpler setting to deal with from now on. However, we aim to be general so we do not restrict to this case throughout the derivation. We can now proceed with the differentiation, and get
\[
\begin{split}
&\delta_{ir}\sigma(\bfA_{is}(\hbf_{ik}-\hbf_{sk}))-\delta_{is}\sigma(-\bfA_{ri}(\hbf_{ik}-\hbf_{rk})) \\
&+ \delta_{ir}\bfA_{is}\sigma'(\bfA_{is}(\hbf_{ik}-\hbf_{sk}))(\hbf_{ik}-\hbf_{sk}) + \delta_{is}\bfA_{ri}\sigma'(-\bfA_{ri}(\hbf_{ik}-\hbf_{rk}))(\hbf_{ik}-\hbf_{rk}),
\end{split}
\]
where $\delta_{ij}$ is the Kronecker delta, which is $1$ when $i=j$, and $0$ otherwise. This means that the Jacobian matrix
\[
\partial_{\bfA_{rs}}(\ga^T\sigma(\ga\hbf))\in\mathbb{R}^{n\times c},
\]
has all zero entries but those in the two rows $r$ and $s$ that are of the form
\[
\begin{split}
\bfe_r^T\partial_{\bfA_{rs}}(\ga^T\sigma(\ga\hbf)) &=	\sigma(\ga_{rs}(\bfe_r-\bfe_s)^T\hbf) + \bfA_{rs}\mathrm{diag}(\sigma'(\bfA_{rs}(\bfe_r-\bfe_s)^T\hbf))(\bfe_r-\bfe_s)^T\hbf  \\
\bfe_s^T\partial_{\bfA_{rs}}(\ga^T\sigma(\ga\hbf)) &= - \sigma(\bfA_{rs}(\bfe_r-\bfe_s)^T\hbf) - \bfA_{rs}\mathrm{diag}(\sigma'(\bfA_{rs}(\bfe_r-\bfe_s)^T\hbf))(\bfe_r-\bfe_s)^T\hbf\\
& = -\bfe_r^T\partial_{\bfA_{rs}}(\ga^T\sigma(\ga\hbf)).
\end{split}
\]
As a consequence, when $r=s$, the matrix is the zero matrix. We conclude that
\[
\begin{split}
\left\|\frac{\partial \mathrm{vec}(X_l(\bfF\bfA))}{\partial \mathrm{vec}(\bfA)}\right\|_{1,2}
&= \sqrt{2} \max_{r,s\in [n]}\left\|\bfe_r^T\partial_{\bfA_{rs}}(\ga^T\sigma(\ga\hbf)\bfW^T)\right\|_2\\
&\leq \sqrt{2}\|\bfW\|_2\max_{r,s\in [n]}\left\|\bfe_r^T\partial_{\bfA_{rs}}(\ga^T\sigma(\ga\hbf))\right\|_2\\
&\leq 2\sqrt{2}\|\bfW\|_2^2\max_{r,s\in [n]}\|\bfA_{rs}(\bfe_r-\bfe_s)^T\bfF\|_2\\
&=2\sqrt{2}\|\bfW\|_2^2\max_{r,s\in [n]}\|[\ga\bfF]_{rs}\|_2,
\end{split}
\]
where we assumed $\sigma(0)=0$ and $\mathrm{Lip}(\sigma)\leq 1$, as quite typical for neural network activation functions.

Following the proof of \cite[Theorem 3]{sontag2010contractive}, if we take $m_1,m_2>0$ such that
\[
-\nu_2 - \frac{m_2}{m_1}2\sqrt{2}\|\bfW\|_2^2\max_{r,s\in [n]}\|[\ga\bfF]_{rs}\|_2 >0
\] 
the coupled system in \Cref{eq:contractiveCoupled} is contractive with respect to the norm
\[
\|(\bfA,\bfF)\|_{m_1,m_2} := m_1\|\mathrm{vec}(\bfA)\|_1 + m_2 \|\bfF\|_F.
\]
\section{Related works: adversarial robustness via dynamical systems and Lipschitz regularity}\label{app:relatedRobustness}
The approach of improving the robustness of neural networks through Lipschitz constraints and techniques typical of the stability theory of dynamical systems has attracted much interest in recent years. This research direction has been investigated especially for classification tasks based on structured grids, i.e.\ images. For completeness in the presentation, we mention a few relevant contributions based on these insights and briefly comment on them. In \cite{yang2023certifiably}, the authors work with evasion attacks and start by observing that not all input perturbations lead to changes in the predicted class. They then provide robustness guarantees based on the stability theory of dynamical systems. In \cite{kang2021stable}, the authors again propose to enhance the robustness of a neural network based on Lyapunov stability. More precisely, they design a loss function promoting the closeness of output predictions to stable equilibria of the differential equation ruling the network, and also that these equilibria are as far as possible when different classes are considered. In \cite{huang2022fi}, the authors work with neural-controlled ODEs and introduce a framework based on forward invariance. They propose a strategy to turn a desired function into a Lyapunov function for the ODE driving the neural network, hence having its sublevel sets be forward invariant. This technique and some sampling strategies allow them to get certifiably robust models. In \cite{sherry2023designing, zakwan2022robust}, the authors exploit the connection between ResNet architectures and numerical methods, as in this manuscript, to design contractive residual neural networks based on the discretization of contractive dynamical systems. The experimental setup of both these works is again based on evasion attacks for image-based classification problems. Together with these mentioned works, which rely on dynamical systems theory, a big body of literature proposes to constrain the Lipschitz constant of the network as a way of reducing its sensitivity to input perturbations, see \cite{tsuzuku2018lipschitz, pauli2021training, liu2021convolutional,huster2019limitations}. 
\section{Lipschitz constant of the map $\mathcal{D}$}\label{app:LipschitzD}
    We now consider the bound on the Lipschitz constant of the network part $\mathcal{D}$ obtained by composing explicit Euler steps of dynamical systems. We consider the setting described in Appendix~\ref{app:contractiveCoupledDynamics}, and assume all the steps $h_1,\ldots ,h_L$ to be small enough so that these explicit Euler steps are also contractive, see Theorems~\ref{thm:featContractive} and~\ref{thm:finalAdjUpdate}. These maps are contractive when considered in isolation, but, in general, their composition is not. We recall the system of differential equations to consider:
\begin{equation}\label{eq:genericCoupled}
   \begin{cases}
    \dot{F}(t) = X(t,F(t),A(t))\\
    \dot{A}(t) = Y(t,A(t)).
    \end{cases}
    \end{equation}
    To specify the Lipschitz constant of the map $\mathcal{D}$, we introduce the two following maps:
    \begin{align}
    X_{l,\bfA} &: \R^{n\times c}\rightarrow\R^{n\times c},\,\,X_{l,\bfA}(\bfF) := X_l(\bfF,\bfA) = X(\tau_{l-1},\bfF,\bfA),\\
    X_{l,\bfF} &: \R^{n\times n}\rightarrow\R^{n\times c},\,\,X_{l,\bfF}(\bfA) := X_l(\bfF,\bfA) = X(\tau_{l-1},\bfF,\bfA).
    \end{align}
Let us first recall the expression for $\mathcal{D}$, which is
    \[
    \mathcal{D} = \mathcal{D}_L\circ\ldots \circ\mathcal{D}_1,
    \]
    where
    \[
    \mathcal{D}_l((\bfF^{(l-1)},\bfA^{(l-1)})) =\left(\Psi^{h_l}_{X_{l,\bfA^{(l-1)}}}(\bfF^{(l-1)}), \Psi^{h_l}_{Y_l}(\bfA^{(l-1)})\right),\quad l=1,\ldots,L.
    \]
    We consider the two pairs of initial conditions $(\bfF^{(0)},\bfA^{(0)})$, $(\bfF_*^{(0)},\bfA_*^{(0)})$, and update first the $\bfF^{(0)}$ component with $\Psi_{X_{1,\bfA^{(0)}}}^{h_1}$, and $\bfF^{(0)}_*$ with $\Psi_{X_{1,\bfA^{(0)}_*}}^{h_1}$ to get
    \begin{align}
    \bfF^{(1)} &= \Psi_{X_{1,\bfA^{(0)}}}^{h_1}(\bfF^{(0)})=\bfF^{(0)} + h_1X_{1,\bfA^{(0)}}(\bfF^{(0)}) \\
    \bfF^{(1)}_* &=\Psi_{X_{1,\bfA^{(0)}_*}}^{h_1}(\bfF^{(0)}_*)= \bfF^{(0)}_* + h_1X_{1,\bfA^{(0)}_*}(\bfF^{(0)}_*)\\
    \|\bfF^{(1)}-\bfF^{(1)}_*\|_F &= \|\Psi_{X_{1,\bfA^{(0)}}}^{h_1}(\bfF^{(0)})+(\Psi_{X_{1,\bfA^{(0)}_*}}^{h_1}(\bfF^{(0)})-\Psi_{X_{1,\bfA^{(0)}_*}}^{h_1}(\bfF^{(0)}))-\Psi_{X_{1,\bfA^{(0)}_*}}^{h_1}(\bfF^{(0)}_*)\|_F \nonumber\\
   &\leq \|\Psi_{X_{1,\bfA^{(0)}}}^{h_1}(\bfF^{(0)})-\Psi_{X_{1,\bfA^{(0)}_*}}^{h_1}(\bfF^{(0)})\|_F + \|\bfF^{(0)}-\bfF^{(0)}_*\|_F\nonumber\\
   &\leq h_1\|X_{1,\bfF^{(0)}}(\bfA^{(0)})-X_{1,\bfF^{(0)}}(\bfA^{(0)}_*)\|_F + \varepsilon_1\nonumber\\
   &\leq h_1\mathrm{Lip}(X_{1,\bfF^{(0)}})\|\vecc(\bfA^{(0)})-\vecc(\bfA^{(0)}_*)\|_1 + \varepsilon_1.\nonumber\\
   &\leq h_1\mathrm{Lip}(X_{1,\bfF^{(0)}})\varepsilon_2 + \varepsilon_1.
    \end{align}
    By $\rm{Lip}(X_{1,\bfF^{(0)}})$ we refer to the Lipschitz constant of the map $X_{1,\bfF^{(0)}}:\R^{n\times n}\rightarrow\R^{n\times c}$, where the first space has the vectorized $\ell^1$ norm, and the second has the Frobenius norm.
    
    Then one can update the adjacency matrices $\bfA^{(0)}$ and $\bfA^{(0)}_*$ to
    \[
    \bfA^{(1)}= \Psi_{Y_1}^{h_1}(\bfA^{(0)}),\,\,\bfA^{(1)}_* = \Psi_{Y_1}^{h_1}(\bfA^{(0)}_*),
    \]
    for which we have already proven $\|\vecc(\bfA^{(1)})-\vecc(\bfA^{(1)}_*)\|_1\leq \varepsilon_2$. To get the general form of the Lipschitz constant of $\mathcal{D}$, we update again the features so it is easier to generalize:
    \begin{align}
    \bfF^{(2)} &= \Psi_{X_{2,\bfA^{(1)}}}^{h_2}(\bfF^{(1)})=\bfF^{(1)} + h_2X_{2,\bfA^{(1)}}(\bfF^{(1)}) \\
    \bfF^{(2)}_* &=\Psi_{X_{2,\bfA^{(1)}_*}}^{h_2}(\bfF^{(1)}_*)= \bfF^{(1)}_* + h_2X_{2,\bfA^{(1)}_*}(\bfF^{(1)}_*)\\
    \|\bfF^{(2)}-\bfF^{(2)}_*\|_F &\leq h_2\mathrm{Lip}(X_{2,\bfF^{(1)}})\varepsilon_2 + h_1\mathrm{Lip}(X_{1,\bfF^{(0)}})\varepsilon_2 + \varepsilon_1. 
    \end{align}
    This leads to the general bound on the expansivity of $\mathcal{D}$ when it is composed of $L$ layers, which is
    \begin{equation}\label{eq:lipBoundD}
    \begin{split}
    d(\mathcal{D}(\bfF^{(0)},\bfA^{(0)}),\mathcal{D}(\bfF^{(0)}_*,\bfA^{(0)}_*)) &:=\|\vecc(\bfA^{(L)}) - \vecc(\bfA^{(L)}_*)\|_1 + \|\bfF^{(L)} - \bfF^{(L)}_*\|_F \\
    &\leq \varepsilon_1 + \varepsilon_2\left(1+\sum_{i=1}^L \mathrm{Lip}(X_{i,\bfF^{(i-1)}})h_i \right)\\
    &=:\varepsilon_1 + c(h_1,\ldots ,h_L)\varepsilon_2.
    \end{split}
    \end{equation}
    We remark that in case the adjacency matrix is not perturbed, i.e., $\varepsilon_2=0$, the map $\mathcal{D}$ can be controlled by the perturbation magnitude to the feature matrix, i.e., $\varepsilon_1$. On the other hand, even if the features are not perturbed, i.e. $\varepsilon_1=0$, the feature updates can not be bounded simply with $\varepsilon_2$, since their update depends on different adjacency matrices. We have already commented on this aspect in Section~\ref{sec:method}, since this interconnection is also the reason why we have proposed to jointly update the feature and the adjacency matrices. The important aspect of this analysis is that constraining the map $\mathcal{D}$ so this contractive setup occurs allows getting the bound in \Cref{eq:lipBoundD}, which quantifies how sensitive the network is to perturbations.
    We also remark that the derivation in the previous section allows one to get practical bounds on the Lipschitz constants $\mathrm{Lip}(X_{i,\bfF^{(i)}})$.

\section{List of abbreviations and notations}\label{app:tableNotation}

\begin{table}
\centering
\begin{tabular}{cc}
\toprule
\multicolumn{2}{c}{Nomenclature} \\
\midrule
$\|\cdot \|_2$ & Euclidean norm of a vector or induced matrix norm \\
$\|\cdot\|_1$ & $\ell^1$-norm of a vector or induced matrix norm \\
$\|\cdot\|_F$ & Frobenius matrix norm\\
$\bfF,\bfF_*\in\mathbb{R}^{n\times c_{\text{in}}}$ & Clean and attacked feature matrices, where $c_{\text{in}}$ is the number of features per node \\
$\delta \bfF\in\mathbb{R}^{n\times c_{\text{in}}} = \bfF_*-\bfF$ & Perturbation to the feature matrix \\
$\varepsilon_1$ & Budget on the feature perturbation, i.e., $\|\delta \bfF\|_F\leq \varepsilon_1$ \\
$\bfA,\bfA_*\in\mathbb{R}^{n\times n}$ & Clean and attacked adjacency matrices\\
$\delta \bfA = \bfA_*-\bfA\in\mathbb{R}^{n\times n}$ & Perturbation to the adjacency matrix\\
$\varepsilon_2$ & Budget on the adjacency matrix perturbation, i.e., $\|\mathrm{vec}(\delta\bfA)\|_1\leq \varepsilon_2$\\
$\mathrm{vec}:\mathbb{R}^{n\times n}\to\mathbb{R}^{n^2}$ & Flattening operator mapping a matrix into the vector with its columns stacked \\
$h_l$ & Step size \\
$\dot{F}(t)=\frac{d}{dt}F(t)$ & Time derivative of the function $t\mapsto F(t)$\\
$\mathcal{K}:\mathbb{R}^{c_{\text{in}}}\to\mathbb{R}^c$ & Linear lifting layer applied to each feature vector separately: $\mathcal{K}\left(\bfF\right) = \bfF K$, $K\in\mathbb{R}^{c_{\text{in}}\times c}$\\
$0=\tau_0<\tau_1<\cdots<\tau_L=T$ & Partitioning of the time interval $[0,T]$ into subintervals where the dynamics is autonomous \\
$X:\mathbb{R}\times\mathbb{R}^{n\times c}\times\mathbb{R}^{n\times n}\to\mathbb{R}^{n\times c}$ & Vector field for the feature matrix updates \\
$X_l:\mathbb{R}^{n\times c}\times\mathbb{R}^{n\times n}\to\mathbb{R}^{n\times c}$ & Vector field on the time interval $[\tau_{l-1},\tau_l)$, i.e. $X_l(\cdot,\cdot)=X(\tau_{l-1},\cdot,\cdot)$\\
$Y:\mathbb{R}\times\mathbb{R}^{n\times n}\to\mathbb{R}^{n\times n}$ & Vector field for the adjacency matrix updates \\
$Y_l:\mathbb{R}^{n\times n}\to\mathbb{R}^{n\times n}$ & Vector field on the time interval $[\tau_{l-1},\tau_l)$, i.e. $Y_l(\cdot)=X(\tau_{l-1},\cdot)$\\
$\Psi_{X_l}^{h_l}(\bfF,\bfA)=\bfF + h_lX_l(\bfA,\bfF)$ & One Euler step of size $h_l>0$ for the vector field $X_l$ \\
$\Psi_{Y_l}^{h_l}(\bfA)=\bfA + h_lY_l(\bfF)$ & One Euler step of size $h_l>0$ for the vector field $Y_l$ \\
$D_l:\mathbb{R}^{n\times c}\times\mathbb{R}^{n\times n}\to\mathbb{R}^{n\times c}\times\mathbb{R}^{n\times n}$ & One layer updating both $\bfA$ and $\bfF$ as $D_l((\bfF,\bfA))=(\Psi^{h_l}_{X_l}(\bfF,\bfA),\Psi^{h_l}_{Y_l}(\bfA))$\\
$\mathcal{D}:\mathbb{R}^{n\times c}\times\mathbb{R}^{n\times n}\to\mathbb{R}^{n\times c}\times\mathbb{R}^{n\times n}$ & Part of the CSGNN based on the coupled dynamical systems \\
$(\bfF^{(L)},\bfA^{(L)})=\mathcal{D}((\mathcal{K}(\bfF),\bfA))$ & Resulting clean feature and adjacency matrices before the final projection \\
$(\bfF_*^{(L)},\bfA_*^{(L)})=\mathcal{D}((\mathcal{K}(\bfF_*),\bfA_*))$ & Resulting perturbed feature and adjacency matrices before the final projection\\
$\sigma(x)=\max\{ax,x\}$, $a\in(0,1)$ & LeakyReLU activation function applied componentwise \\
$\mathcal{G}(\bfA^{(l)}):\mathbb{R}^{n\times c}\to\mathbb{R}^{n\times n\times c}$ & Graph-gradient operator defined by the matrix $\bfA^{(l)}\in\mathbb{R}^{n\times n}$ \\
$I_c\in\mathbb{R}^{c\times c}$ & $c\times c$ identity matrix \\
$M:\mathbb{R}^{n\times n}\to\mathbb{R}^{n\times n}$ & Linear permutation equivariant map defined by $9$ parameters $k_1,\cdots,k_9$\\
$\alpha\leq 0$ & Parameter determining the contraction rate of the adjacency matrix updates\\
$\bfe_i\in\mathbb{R}^{n}$ & $i-$th vector of the canonical basis of $\mathbb{R}^{n}$ \\
$\otimes : \mathbb{R}^{a\times b}\times \mathbb{R}^{c\times d}\to\mathbb{R}^{ac\times bd}$ & Kronecker product between matrices \\
\bottomrule
\end{tabular}
\caption{Main abbreviations and notations used throughout the manuscript}
\label{tab:my_label}
\end{table}

\section{Architecture}
\label{appendix:architecture}
We describe the architecture of CSGNN in \Cref{alg:csgnn}.

    \begin{algorithm*}[ht]
    \caption{CSGNN Architecture}    \label{alg:csgnn}
    \hspace*{\algorithmicindent} \textbf{Input:} Attacked node features $\bfF_{*} \in \mathbb{R}^{n \times c_{\mathrm{in}}}$ and adjacency matrix $\bfA_{*} \in \{0,1\}^{n\times n}$.\\
    \hspace*{\algorithmicindent} \textbf{Output:} Predicted node labels $\tilde{\bfY}\in \mathbb{R}^{n \times  c_{\mathrm{out}}}$. 
    \begin{algorithmic}[1]
    \Procedure{CSGNN}{}
         \State $\bfF_* \gets {\rm{Dropout}}(\bfF_{*}, p)$

        \State $ \bfF^{(0)}_* = \mathcal{K}(\bfF_*);  \quad \bfA^{(0)}_* = \bfA_{*}$
    \For {$l = 1 \ldots L$}   
       \State $\bfF^{(l-1)}_* \gets {\rm{Dropout}}(\bfF^{(l-1)}_*, p)$
\State Node Feature Dynamical System Update: $ \bfF^{(l)}_* = \Psi_{X_l}^{h_l}(\bfF^{(l-1)}_*, \bfA^{(l-1)}_*) $
\State Adjacency Dynamical System Update: $\bfA^{(l)}_* = \Psi_{Y_l}^{h_l}(\bfA^{(l-1)}_*)$
    \EndFor
         \State $\bfF^{(L)}_* \gets {\rm{Dropout}}(\bfF^{(L)}_*, p)$

    \State $\tilde{\bfY} = \mathcal{P}(\bfF^{(L)}_*)$
    \State Return $\tilde{\bfY}$
    \EndProcedure
    \end{algorithmic}
    \end{algorithm*}

\section{Datasets}
\label{appendix:datasets}
We provide the statistics of the datasets used in our experiments in Table~\ref{dataset}.

					%\cite{mccallum2000automating} , \cite{sen2008collective}		, \cite{namata2012query}			, \cite{adamic2005political}				
	\begin{table}[ht]
        \vspace{\baselineskip}

		\centering
		\caption{Datasets Statistics. Following \cite{zugner2018adversarial, zugner2019adversarial}, we consider only the largest connected component (LCC).}
          \vspace{\baselineskip}

		\begin{tabular}{c|cccc}
			\toprule
			        Dataset & \textbf{$\mathbf{N_{LCC}}$} & \textbf{$\mathbf{E_{LCC}}$} & \textbf{Classes} & \textbf{Features} \\ \midrule
			Cora \cite{mccallum2000automating}    & 2,485                       & 5,069                       & 7                & 1,433             \\ 
			Citeseer \cite{sen2008collective} & 2,110                       & 3,668                       & 6                & 3,703             \\ 
			Polblogs \cite{adamic2005political} & 1,222                       & 16,714                      & 2                & /                 \\ 
			Pubmed \cite{namata2012query}   & 19,717                      & 44,338                      & 3                & 500               \\ \bottomrule
		\end{tabular}
		\label{dataset}
          \vspace{\baselineskip}
	\end{table}

 \section{Hyperparameters}
 \label{appendix:hyperparams}
All the hyperparameters were determined by grid search, and the ranges and sampling distributions are provided in Table~\ref{tab:hyperparams}.

 \begin{table*}[ht]
\centering
        \vspace{\baselineskip}

\caption{Hyperparameter ranges}
        \vspace{\baselineskip}

{
\label{tab:hyperparams}
\begin{tabular}{ccc}
\toprule
Hyperparameter   & Range & Distribution \\
\midrule
     input/output embedding learning rate &  $[10^{-5}, 10^{-2}]$ & uniform  \\
     node dynamics learning rate & $[10^{-5}, 10^{-2}]$ & uniform \\
     adjacency dynamics learning rate & $[10^{-5}, 10^{-2}]$ & uniform \\
       input/output embedding weight decay &  $[5\cdot10^{-8}, 5\cdot 10^{-2}]$ & log uniform \\
       node dynamics weight decay & $[5\cdot10^{-8}, 5\cdot 10^{-2}]$ & log uniform \\
       adjacency dynamics weight decay & $[5\cdot10^{-8}, 5\cdot 10^{-2}]$ & log uniform\\
      input/output embedding dropout &  $[0, 0.6]$ & uniform \\
      node dynamics dropout & $[0, 0.6]$ & uniform \\
      share weights between time steps & $\{\mathrm{yes}, \mathrm{no}\}$ & discrete uniform\\
      step size $h$ &  $[10^{-2}, 1]$ & log uniform \\
      adjacency contractivity parameter $\alpha$ & $[-2, 0]$ & uniform\\
      \#layers $L$ & $\{ 2,3,4,5\}$ & discrete uniform \\
      \#channels $c$ &  $\{ 8,16,32,64,128 \}$ & discrete uniform  \\
 \bottomrule
\end{tabular}
}
\end{table*}

\section{Experimental Results}
\label{appendix:experiments}

\textbf{Results on Pubmed.} We now provide our results on the Pubmed \cite{namata2012query} dataset, with three types of attacks: (i) non-targeted using metattack, reported in Table~\ref{table:pubmedMeta}, (ii) targeted attack using nettack in Figure~\ref{fig:pubmed_nettack}, and (iii) random adjacency matrix attack in Figure~\ref{fig:random_attack_pubmed}. Those experiments are done under the same settings in as Section~\ref{sec:experiments} in the main paper. Overall, we see that our CSGNN achieves similar or better results compared with other baselines. Specifically, we see that CSGNN outperforms all the considered baselines under targeted attack (using nettack), and similar performance when no perturbations occur, as can be depicted in Figure~\ref{fig:pubmed_nettack}.

	\begin{table}[ht]
		\small
		\centering
		\caption{Node classification performance (accuracy$\pm$std) under non-targeted attack (metattack) on the Pubmed dataset with varying perturbation rates.}
		\label{table:pubmedMeta}
		\begin{center}    
  \setlength{\tabcolsep}{3pt}
			\begin{tabular}{c|c|cccccc}
				\toprule
				Dataset                   & Ptb Rate (\%)          & 0                       & 5                       & 10                      & 15                      & 20                      & 25                      \\ 
				\midrule
    				\multirow{7}{*}{Pubmed}   & GCN                    & 87.19$\pm$0.09          & 83.09$\pm$0.13          & 81.21$\pm$0.09          & 78.66$\pm$0.12          & 77.35$\pm$0.19          & 75.50$\pm$0.17          \\  
				                          & GAT                    & 83.73$\pm$0.40          & 78.00$\pm$0.44          & 74.93$\pm$0.38          & 71.13$\pm$0.51          & 68.21$\pm$0.96          & 65.41$\pm$0.77          \\  
				                          & RGCN                   & 86.16$\pm$0.18          & 81.08$\pm$0.20          & 77.51$\pm$0.27          & 73.91$\pm$0.25          & 71.18$\pm$0.31          & 67.95$\pm$0.15          \\  
				                          & GCN-Jaccard            & 87.06$\pm$0.06          & 86.39$\pm$0.06          & 85.70$\pm$0.07          & 84.76$\pm$0.08          & 83.88$\pm$0.05          & 83.66$\pm$0.06          \\  
				                          & GCN-SVD                & 83.44$\pm$0.21          & 83.41$\pm$0.15          & 83.27$\pm$0.21          & 83.10$\pm$0.18          & 83.01$\pm$0.22          & 82.72$\pm$0.18          \\  
				                          & Pro-GNN-fs             & 87.33$\pm$0.18 & \textbf{87.25$\pm$0.09} & \textbf{87.25$\pm$0.09} & \textbf{87.20$\pm$0.09} & 87.09$\pm$0.10          & 86.71$\pm$0.09          \\  
				                          & Pro-GNN                & 87.26$\pm$0.23          & 87.23$\pm$0.13          & 87.21$\pm$0.13          & 87.20$\pm$0.15          & \textbf{87.15$\pm$0.15} & \textbf{86.76$\pm$0.19} \\ 
				& CSGNN   & \textbf{87.36$\pm$0.02}  & 87.16$\pm$0.10 & 87.08$\pm$0.09 & 87.06$\pm$0.08 & 86.59$\pm$0.18 & 86.63$\pm$0.08\\  
				\bottomrule
			\end{tabular}
		\end{center}
	\end{table}

 \begin{figure}[ht]
    \centering

            \begin{subfigure}[ht]{.6\linewidth}
    \centering
    \begin{tikzpicture}
      \begin{axis}[
          width=1.0\linewidth, 
          height=0.6\linewidth,
          grid=major,
          grid style={dashed,gray!30},
          %xlabel= Perturbation Rate (\%),
          ylabel=Accuracy (\%),
          ylabel near ticks,
          legend style={at={(1.2,0.95)},anchor=north,scale=1.0, draw=none, cells={anchor=west}, font=\tiny, fill=none},
          legend columns=1,
          xtick={0,1,2,3,4,5},
          xticklabels = {0,1,2,3,4,5},
          yticklabel style={
            /pgf/number format/fixed,
            /pgf/number format/precision=3
          },
              y tick label style={
        /pgf/number format/.cd,
        fixed,
        fixed zerofill,
        precision=0,
        /tikz/.cd
    },
          ticklabel style = {font=\tiny},
          %xmode=log,
          scaled y ticks={real:0.01},
          every axis plot post/.style={thick},
          ytick scale label code/.code={}
        ]
        \addplot
        table[x=ptb_rate
,y=pubmed_mean,col sep=comma]{data/csi_gnn/nettack_attack_csi_results.csv};

        \addplot
        table[x=ptb_rate,y=pubmed_mean,col sep=comma]{data/gcn/nettack_attack_gcn_results.csv};

        \addplot
        table[x=ptb_rate,y=pubmed_mean,col sep=comma]{data/gcnjaccard/nettack_attack_gcnjaccard_results.csv};

        \addplot
        table[x=ptb_rate,y=pubmed_mean,col sep=comma]{data/prognn/nettack_attack_prognn_results.csv};

        \addplot
        table[x=ptb_rate,y=pubmed_mean,col sep=comma]{data/prognnfs/nettack_attack_prognnfs_results.csv};
        
        \addplot
        table[x=ptb_rate,y=pubmed_mean,col sep=comma]{data/gat/nettack_attack_gat_results.csv};

        \addplot
        table[x=ptb_rate,y=pubmed_mean,col sep=comma]{data/gcnsvd/nettack_attack_gcnsvd_results.csv};
        \addplot
        table[x=ptb_rate,y=pubmed_mean,col sep=comma]{data/rgcn/nettack_attack_rgcn_results.csv};

                \legend{CSGNN,GCN,GCN-Jaccard,Pro-GNN,Pro-GNN-fs,GAT,GCN-SVD,RGCN}

        \end{axis}
    \end{tikzpicture}
    \begin{center}
         %\caption{Pubmed}
        %\label{fig:pubmed_nettack}
        \end{center}
    \end{subfigure}%

\caption{Node classification accuracy (\%) on the Pubmed dataset using nettack as attack method. The horizontal axis describes the number of perturbations per node.}
\label{fig:pubmed_nettack}
        \vspace{\baselineskip}
                \vspace{\baselineskip}

\end{figure}
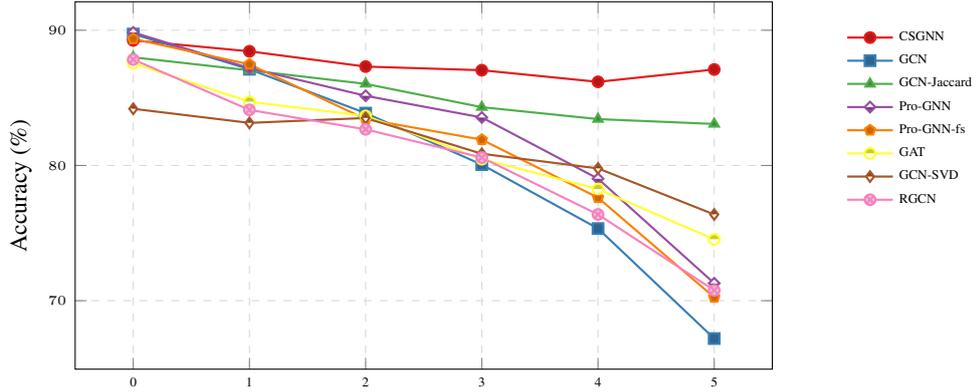

 \begin{figure}[ht]
    \centering
            \begin{subfigure}[ht]{.6\linewidth}
    \centering
    \begin{tikzpicture}
      \begin{axis}[
          width=1.0\linewidth, 
          height=0.6\linewidth,
          grid=major,
          grid style={dashed,gray!30},
          %xlabel= Perturbation Rate (\%),
          ylabel=Accuracy (\%),
          ylabel near ticks,
          legend style={at={(1.2,0.95)},anchor=north,scale=1.0, draw=none, cells={anchor=west}, font=\tiny, fill=none},
          legend columns=1,
          xtick={0.0, 0.2, 0.4, 0.6, 0.8,1.0},
          xticklabels = {0, 20,40,60,80,100},
          yticklabel style={
            /pgf/number format/fixed,
            /pgf/number format/precision=3
          },
              y tick label style={
        /pgf/number format/.cd,
        fixed,
        fixed zerofill,
        precision=0,
        /tikz/.cd
    },
          ticklabel style = {font=\tiny},
          %xmode=log,
          scaled y ticks={real:0.01},
          every axis plot post/.style={thick},
          ytick scale label code/.code={}
        ]
        \addplot
        table[x=ptb_rate,y=pubmed_mean,col sep=comma]{data/csi_gnn/random_attack_csi_results.csv};

        \addplot
        table[x=ptb_rate,y=pubmed_mean,col sep=comma]{data/gcn/random_attack_gcn_results.csv};

        \addplot
        table[x=ptb_rate,y=pubmed_mean,col sep=comma]{data/gcnjaccard/random_attack_gcnjaccard_results.csv};

        \addplot
        table[x=ptb_rate,y=pubmed_mean,col sep=comma]{data/prognn/random_attack_prognn_results.csv};

        \addplot
        table[x=ptb_rate,y=pubmed_mean,col sep=comma]{data/prognnfs/random_attack_prognnfs_results.csv};

        \addplot
        table[x=ptb_rate,y=pubmed_mean,col sep=comma]{data/gat/random_attack_gat_results.csv};

        \addplot
        table[x=ptb_rate,y=pubmed_mean,col sep=comma]{data/gcnsvd/random_attack_gcnsvd_results.csv};

        \addplot
        table[x=ptb_rate,y=pubmed_mean,col sep=comma]{data/rgcn/random_attack_rgcn_results.csv};

                \legend{CSGNN,GCN,GCN-Jaccard,Pro-GNN,Pro-GNN-fs,GAT,GCN-SVD,RGCN, Mid-GCN}

        \end{axis}
    \end{tikzpicture}
    \begin{center}
            \end{center}

    \end{subfigure}%

\caption{Node classification accuracy (\%) on the Pubmed dataset with a random adjacency matrix attack. The horizontal axis describes the attack percentage.}

\label{fig:random_attack_pubmed}
        \vspace{\baselineskip}

\end{figure}
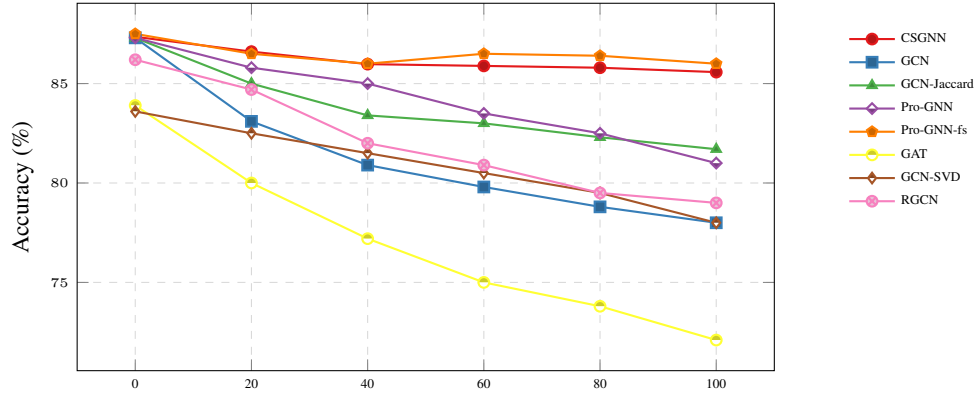

\textbf{Absolute performance results on Unit tests.}
In Section~\ref{sec:exp_results}, we provide the obtained relative node classification accuracy (\%) with respect to the accuracy of GCN. Here, we also provide the absolute results, for an additional perspective on the performance of CSGNN. Our results are reported in Figure~\ref{fig:unitTestsAbsolute}.
%%%%%%% ABSOLUTE POISONING ATTACK %%%%%%%
\begin{figure}[ht]
    \centering
\begin{subfigure}[t]{.48\linewidth}
    \centering
    \begin{tikzpicture}
      \begin{axis}[
          width=1.0\linewidth, 
          height=0.6\linewidth,
          grid=major,
          grid style={dashed,gray!30},
          %xlabel=Perturbation Rate (\%),
          ylabel=Accuracy (\%),
          ylabel near ticks,
        legend style={at={(1.2,1.45)},anchor=north,scale=1.0, cells={anchor=west}, fill=none},
          legend columns=5,
          xtick={0,2,4,6,8,10,12,14},
          xmax=7.5,
          xticklabels = {0,2,4,6,8},
          ytick={40,50,60,70,80},
          yticklabels = {40,50,60,70,80},
          yticklabel style={
            /pgf/number format/fixed,
            /pgf/number format/precision=3
          },
          ticklabel style = {font=\tiny},
          scaled y ticks=false,
          every axis plot post/.style={thick},
        ]
                \addplot +[mark=none, black] table[x=ptb_rate,y=acc,col sep=comma]{data/csi_gnn/unit_test_cora_new.csv};
        \addplot +[mark=none] table[x=ptb_rate,y=accuracy,col sep=comma]{data/absolute_coraml-poisoning/gcn.csv};
        \addplot +[mark=none] table[x=ptb_rate,y=accuracy,col sep=comma]{data/absolute_coraml-poisoning/gcnjaccard.csv};
        \addplot +[mark=none] table[x=ptb_rate,y=accuracy,col sep=comma]{data/absolute_coraml-poisoning/gcnsvd.csv};
        \addplot +[mark=none] table[x=ptb_rate,y=accuracy,col sep=comma]{data/absolute_coraml-poisoning/gnnguard.csv};
        \addplot +[mark=none] table[x=ptb_rate,y=accuracy,col sep=comma]{data/absolute_coraml-poisoning/grand.csv};
        \addplot +[dashed, mark=none] table[x=ptb_rate,y=accuracy,col sep=comma]{data/absolute_coraml-poisoning/mlp.csv};
        \addplot +[mark=none] table[x=ptb_rate,y=accuracy,col sep=comma]{data/absolute_coraml-poisoning/prognn.csv};
        \addplot +[mark=none] table[x=ptb_rate,y=accuracy,col sep=comma]{data/absolute_coraml-poisoning/rgcn.csv};
        \addplot +[mark=none] table[x=ptb_rate,y=accuracy,col sep=comma]{data/absolute_coraml-poisoning/softmediangcd.csv};

        \legend{CSGNN,GCN,GCN-Jaccard,GCN-SVD,GNNGuard,GRAND,MLP,Pro-GNN,RGCN,Soft-Median-GDC}
        \end{axis}
    \end{tikzpicture}
                \caption{Cora-ML}
        \label{fig:absolute_cora_poisoning}
    \end{subfigure}
    \hfill
    \begin{subfigure}[t]{.48\linewidth}
    \centering
    \begin{tikzpicture}
      \begin{axis}[
          width=1.0\linewidth, 
          height=0.6\linewidth,
          grid=major,
          grid style={dashed,gray!30},
          %xlabel=Perturbation Rate (\%),
          ylabel=Accuracy (\%),
          ylabel near ticks,
        legend style={at={(1.2,1.35)},anchor=north,scale=1.0, cells={anchor=west}, fill=none},
          legend columns=5,
          xtick={0,2,4,6,8,10,12,14},
          xticklabels = {0,2,4,6,8,10,12,14},
          ytick={40,50,60,70,80},
          yticklabels = {40,50,60,70,80},
          xmax = 8,
          yticklabel style={
            /pgf/number format/fixed,
            /pgf/number format/precision=3
          },
          ticklabel style = {font=\tiny},
          scaled y ticks=false,
          every axis plot post/.style={thick},
        ]
        \addplot +[mark=none, black] table[x=ptb_rate,y=acc,col sep=comma]{data/csi_gnn/unit_test_citeseer_newnew.csv};
        \addplot +[mark=none] table[x=ptb_rate,y=accuracy,col sep=comma]{data/absolute_citeseer-poisoning/gcn.csv};
        \addplot +[mark=none] table[x=ptb_rate,y=accuracy,col sep=comma]{data/absolute_citeseer-poisoning/gcnjaccard.csv};
        \addplot +[mark=none] table[x=ptb_rate,y=accuracy,col sep=comma]{data/absolute_citeseer-poisoning/gcnsvd.csv};
        \addplot +[mark=none] table[x=ptb_rate,y=accuracy,col sep=comma]{data/absolute_citeseer-poisoning/gnnguard.csv};
        \addplot +[mark=none] table[x=ptb_rate,y=accuracy,col sep=comma]{data/absolute_citeseer-poisoning/grand.csv};
        \addplot +[dashed, mark=none] table[x=ptb_rate,y=accuracy,col sep=comma]{data/absolute_citeseer-poisoning/mlp.csv};
        \addplot +[mark=none] table[x=ptb_rate,y=accuracy,col sep=comma]{data/absolute_citeseer-poisoning/prognn.csv};
        \addplot +[mark=none] table[x=ptb_rate,y=accuracy,col sep=comma]{data/absolute_citeseer-poisoning/rgcn.csv};
        \addplot +[mark=none] table[x=ptb_rate,y=accuracy,col sep=comma]{data/absolute_citeseer-poisoning/softmediangdc.csv};
        \end{axis}
    \end{tikzpicture}
                \caption{Citeseer}
        \label{fig:absolute_citeseer_poisoning}
    \end{subfigure}
    \vspace{\baselineskip}
    \caption{Absolute results version of Figure~\ref{fig:unitTestsRelative}.  The horizontal axis describes the attack budget (\%) as defined in \cite{mujkanovicAreDefensesGraph2022}.}
\label{fig:unitTestsAbsolute}
\vspace{\baselineskip}
        \end{figure}
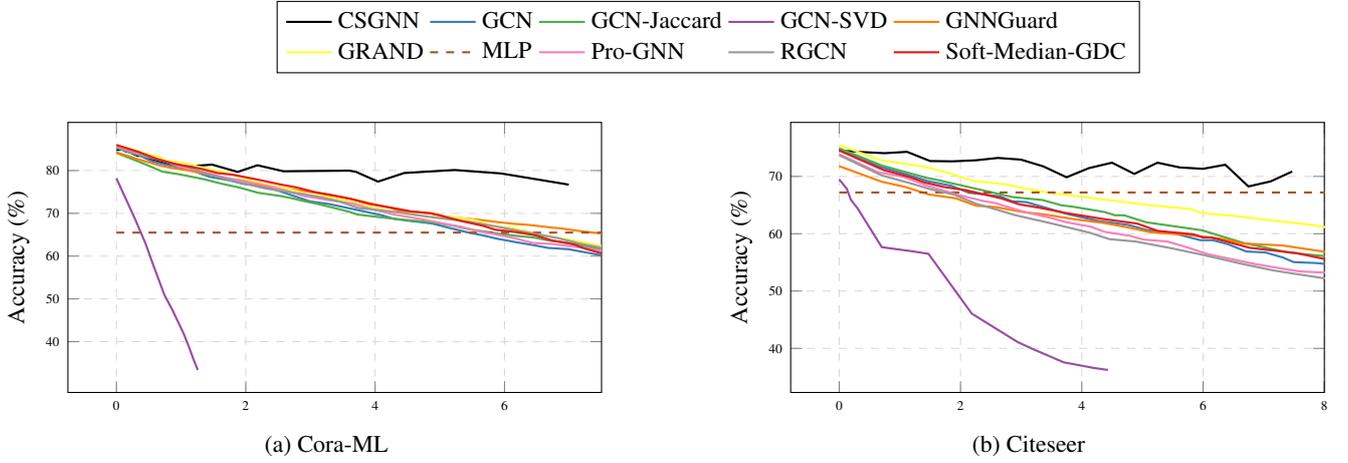

\textbf{Enforcing contractive node dynamics improves baseline performance.} As discussed in Section~\ref{subsec:stableFeatures}, we draw inspiration from contractive dynamical systems, and therefore propose a contractivity-inspired node feature dynamical system. For our results in the main paper, in Section~\ref{sec:experiments}, we use a more general definition of the node dynamical system, that can admit both contractive and non-contractive dynamics, in a data-driven fashion that generalizes the contractive behavior described in Theorem~\ref{thm:featContractive}. To motivate our choice and inspiration from such systems, we now show that by enforcing contractive dynamics only (i.e., ensuring $\tilde{\bfK}_l$ is positive definite), improved results are also achieved, in addition to our results with CSGNN in the main paper. To this end, we will denote the \emph{Enforced} CSGNN variant of our method by ECSGNN. We present the performance of ECSGNN under the non-targeted metattack in Table~\ref{table:metattack_contract}, targeted nettack in Figure~\ref{fig:nettack_attack_contract}, and random attacks in Figure~\ref{fig:random_attack_contract}. Overall, we see that ECSGNN typically offers improved performance compared to several baselines, and in some cases outperforms all of the considered models. Also, we find that its extension, non-enforced CSGNN tends to yield further performance improvements, as shown in the main paper. Our conclusion from this experiment is that node feature contractivity helps to improve robustness to adversarial attacks.

	\begin{table}[t]
         \vspace{\baselineskip}

		\small
		\centering
		\caption{Node classification performance (accuracy$\pm$std) of ECSGNN and other baselines, under non-targeted attack (metattack) with varying perturbation rates.}
		\label{table:metattack_contract}
		\begin{center}    
  \setlength{\tabcolsep}{3pt}
			\begin{tabular}{c|c|cccccc}
				\toprule
				Dataset                   & Ptb Rate (\%)          & 0                       & 5                       & 10                      & 15                      & 20                      & 25                      \\ 
				\midrule
				\multirow{7}{*}{Cora}     & GCN                    & 83.50$\pm$0.44          & 76.55$\pm$0.79          & 70.39$\pm$1.28          & 65.10$\pm$0.71          & 59.56$\pm$2.72          & 47.53$\pm$1.96          \\ 
                      & GAT                    & 83.97$\pm$0.65 & 80.44$\pm$0.74          & 75.61$\pm$0.59          & 69.78$\pm$1.28          & 59.94$\pm$0.92          & 54.78$\pm$0.74          \\  
                      & RGCN                   & 83.09$\pm$0.44          & 77.42$\pm$0.39          & 72.22$\pm$0.38          & 66.82$\pm$0.39          & 59.27$\pm$0.37          & 50.51$\pm$0.78          \\  
                      & GCN-Jaccard            & 82.05$\pm$0.51          & 79.13$\pm$0.59          & 75.16$\pm$0.76          & 71.03$\pm$0.64          & 65.71$\pm$0.89          & 60.82$\pm$1.08          \\  
                      & GCN-SVD                & 80.63$\pm$0.45          & 78.39$\pm$0.54          & 71.47$\pm$0.83          & 66.69$\pm$1.18          & 58.94$\pm$1.13          & 52.06$\pm$1.19          \\  
                      & Pro-GNN-fs             & 83.42$\pm$0.52          & 82.78$\pm$0.39 & 77.91$\pm$0.86          & 76.01$\pm$1.12          & 68.78$\pm$5.84          & 56.54$\pm$2.58          \\  
                      & Pro-GNN                & 82.98$\pm$0.23          & 82.27$\pm$0.45          & 79.03$\pm$0.59 & 76.40$\pm$1.27 & 73.32$\pm$1.56 & 69.72$\pm$1.69 \\
                    
                    %& LipReLU                & 84.11$\pm$N/A          & 80.50$\pm$N/A         & 77.49$\pm$N/A & 76.43$\pm$N/A & 75.22$\pm$N/A & 74.59$\pm$N/A \\
                    & Mid-GCN                & {84.61$\pm$0.46}          & {82.94$\pm$0.59}         & 80.14$\pm$0.86 & 77.77$\pm$0.75 & 76.58$\pm$0.29 & 72.89$\pm$0.81 \\
& CSGNN   & 84.12$\pm$0.31 & 82.20$\pm$0.65 & {80.43$\pm$0.74} & {79.32$\pm$1.04} & {77.47$\pm$1.22} & 74.46$\pm$0.99 \\

& ECSGNN   & 82.79$\pm$0.33 & 80.59$\pm$0.61 & 79.19$\pm$0.81 & 76.29$\pm$0.96 & 73.88$\pm$0.84 & 72.27$\pm$0.78

\\ \midrule
				\multirow{7}{*}{Citeseer} & GCN                    & 71.96$\pm$0.55          & 70.88$\pm$0.62          & 67.55$\pm$0.89          & 64.52$\pm$1.11          & 62.03$\pm$3.49          & 56.94$\pm$2.09          \\ 
                  & GAT                    & 73.26$\pm$0.83          & 72.89$\pm$0.83          & 70.63$\pm$0.48          & 69.02$\pm$1.09          & 61.04$\pm$1.52          & 61.85$\pm$1.12          \\  
                  & RGCN                   & 71.20$\pm$0.83          & 70.50$\pm$0.43          & 67.71$\pm$0.30          & 65.69$\pm$0.37          & 62.49$\pm$1.22          & 55.35$\pm$0.66          \\  
                  & GCN-Jaccard            & 72.10$\pm$0.63          & 70.51$\pm$0.97          & 69.54$\pm$0.56          & 65.95$\pm$0.94          & 59.30$\pm$1.40          & 59.89$\pm$1.47          \\  
                  & GCN-SVD                & 70.65$\pm$0.32          & 68.84$\pm$0.72          & 68.87$\pm$0.62          & 63.26$\pm$0.96          & 58.55$\pm$1.09          & 57.18$\pm$1.87          \\  
                  & Pro-GNN-fs             & 73.26$\pm$0.38          & 73.09$\pm$0.34 & 72.43$\pm$0.52          & 70.82$\pm$0.87          & 66.19$\pm$2.38          & 66.40$\pm$2.57          \\  
                  & Pro-GNN                & 73.28$\pm$0.69 & 72.93$\pm$0.57          & 72.51$\pm$0.75 & 72.03$\pm$1.11 & 70.02$\pm$2.28 & 68.95$\pm$2.78 \\
                & Mid-GCN                & 74.17$\pm$0.28 & 74.31$\pm$0.42          & 73.59$\pm$0.29 & 73.69$\pm$0.29 & 71.51$\pm$0.83 & 69.12$\pm$0.72 \\
				& CSGNN & {74.93$\pm$0.52} & {74.91$\pm$0.33}   & {73.95$\pm$0.35} & {73.82$\pm$0.61} & {73.01$\pm$0.77} & {72.94$\pm$0.56} \\
    
& ECSGNN   & 75.01$\pm$0.28 & 74.97$\pm$0.38 & 73.97$\pm$0.29 & 73.67$\pm$0.45 & 72.92$\pm$0.97 & 72.89$\pm$0.90 \\
				\bottomrule
			\end{tabular}
		\end{center}
	\end{table}

 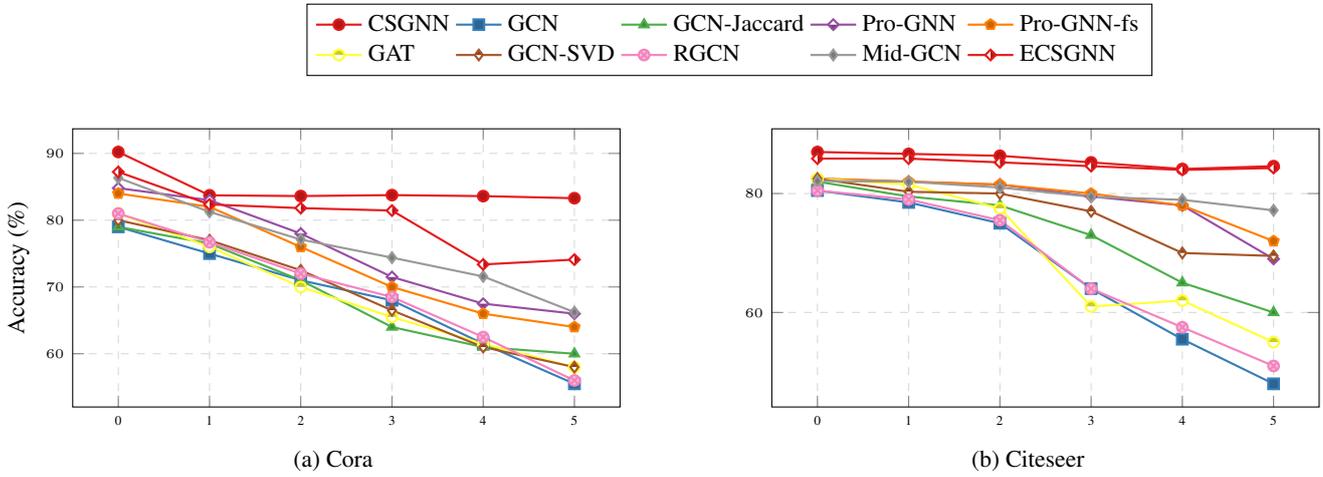
\begin{figure}[t]
    \centering
\begin{subfigure}[t]{.49\linewidth}
    \centering
    \begin{tikzpicture}
      \begin{axis}[
          width=1.0\linewidth, 
          height=0.6\linewidth,
          grid=major,
          grid style={dashed,gray!30},
          %xlabel=Perturbation Rate (\%),
          ylabel=Accuracy (\%),
          ylabel near ticks,
        legend style={at={(1.2,1.45)},anchor=north,scale=1.0, cells={anchor=west}, fill=none},
          legend columns=5,
          xtick={0,1,2,3,4,5},
          xticklabels = {0,1,2,3,4,5},
          yticklabel style={
            /pgf/number format/fixed,
            /pgf/number format/precision=3
          },
          ticklabel style = {font=\tiny},
              y tick label style={
        /pgf/number format/.cd,
        fixed,
        fixed zerofill,
        precision=0,
        /tikz/.cd
    },
          scaled y ticks={real:0.01},
          every axis plot post/.style={thick},
          ytick scale label code/.code={}
        ]
        \addplot
        table[x=ptb_rate
,y=cora_mean,col sep=comma]{data/csi_gnn/nettack_attack_csi_results.csv};

        \addplot table[x=ptb_rate,y=cora_mean,col sep=comma]{data/gcn/nettack_attack_gcn_results.csv};

        \addplot
        table[x=ptb_rate,y=cora_mean,col sep=comma]{data/gcnjaccard/nettack_attack_gcnjaccard_results.csv};

        \addplot
        table[x=ptb_rate,y=cora_mean,col sep=comma]{data/prognn/nettack_attack_prognn_results.csv};

        \addplot
        table[x=ptb_rate,y=cora_mean,col sep=comma]{data/prognnfs/nettack_attack_prognnfs_results.csv};

        \addplot
        table[x=ptb_rate,y=cora_mean,col sep=comma]{data/gat/nettack_attack_gat_results.csv};

        \addplot
        table[x=ptb_rate,y=cora_mean,col sep=comma]{data/gcnsvd/nettack_attack_gcnsvd_results.csv};
        \addplot
        table[x=ptb_rate,y=cora_mean,col sep=comma]{data/rgcn/nettack_attack_rgcn_results.csv};
        
        \addplot
        table[x=ptb_rate,y=cora_mean,col sep=comma]{data/midgcn/nettack_attack_midgcn_results.csv};

        \addplot
        table[x=ptb_rate,y=cora_mean,col sep=comma]{data/csi_gnn/contractive_nettack_results.csv};

        \legend{CSGNN,GCN,GCN-Jaccard,Pro-GNN,Pro-GNN-fs,GAT,GCN-SVD,RGCN, Mid-GCN, ECSGNN}          
        \end{axis}
    \end{tikzpicture}
    %\begin{center}
                \caption{Cora}
        \label{fig:cora_nettack_app}
            %\end{center}

    \end{subfigure}
    \hfill
    \begin{subfigure}[t]{.49\linewidth}
    \centering
    \begin{tikzpicture}
      \begin{axis}[
          width=1.0\linewidth, 
          height=0.6\linewidth,
          grid=major,
          grid style={dashed,gray!30},
          %xlabel=Perturbation Rate (\%),
          %ylabel=Accuracy (\%),
          ylabel near ticks,
          legend style={at={(0.8,0.65)},anchor=north,scale=1.0, draw=none, cells={anchor=west}, font=\tiny, fill=none},
          legend columns=1,
          xtick={0,1,2,3,4,5},
          xticklabels = {0,1,2,3,4,5},
          yticklabel style={
            /pgf/number format/fixed,
            /pgf/number format/precision=3
          },
              y tick label style={
        /pgf/number format/.cd,
        fixed,
        fixed zerofill,
        precision=0,
        /tikz/.cd
    },
          ticklabel style = {font=\tiny},
          %xmode=log,
          scaled y ticks={real:0.01},
          every axis plot post/.style={thick},
          ytick scale label code/.code={}
        ]
        \addplot
        table[x=ptb_rate
,y=citeseeer_mean,col sep=comma]{data/csi_gnn/nettack_attack_csi_results.csv};

        \addplot
        table[x=ptb_rate,y=citeseer_mean,col sep=comma]{data/gcn/nettack_attack_gcn_results.csv};

        \addplot
        table[x=ptb_rate,y=citeseer_mean,col sep=comma]{data/gcnjaccard/nettack_attack_gcnjaccard_results.csv};

        \addplot
        table[x=ptb_rate,y=citeseer_mean,col sep=comma]{data/prognn/nettack_attack_prognn_results.csv};

        \addplot
        table[x=ptb_rate,y=citeseer_mean,col sep=comma]{data/prognnfs/nettack_attack_prognnfs_results.csv};

        \addplot
        table[x=ptb_rate,y=citeseer_mean,col sep=comma]{data/gat/nettack_attack_gat_results.csv};

        \addplot
        table[x=ptb_rate,y=citeseer_mean,col sep=comma]{data/gcnsvd/nettack_attack_gcnsvd_results.csv};
        
        \addplot
        table[x=ptb_rate,y=citeseer_mean,col sep=comma]{data/rgcn/nettack_attack_rgcn_results.csv};
        
        \addplot
        table[x=ptb_rate,y=citeseer_mean,col sep=comma]{data/midgcn/nettack_attack_midgcn_results.csv};

        \addplot
        table[x=ptb_rate,y=citeseeer_mean,col sep=comma]{data/csi_gnn/contractive_nettack_results.csv};
        \end{axis}

    \end{tikzpicture}
    %\begin{center}
            \caption{Citeseer}
        \label{fig:citeseer_nettack_app}
        %\end{center}
    \end{subfigure}%
    \vspace{\baselineskip}
\caption{Node classification accuracy (\%) of ECSGNN and other baselines, under a targeted attack generated by nettack. The horizontal axis describes the number of perturbations per node.}
\label{fig:nettack_attack_contract}
\end{figure}

 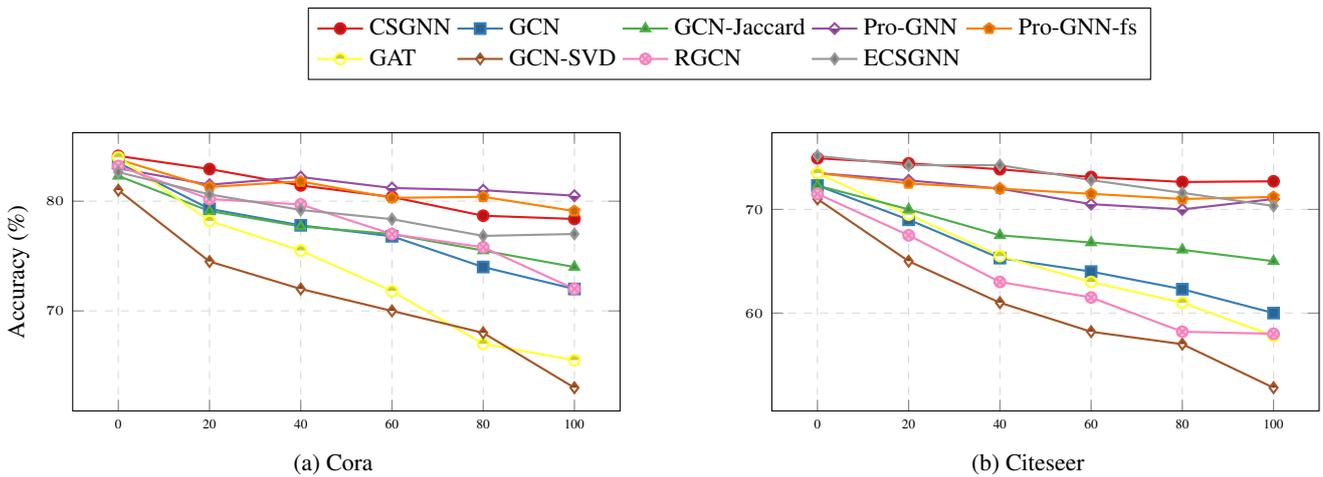
\begin{figure}[t]
    \centering
\begin{subfigure}[t]{.49\linewidth}
    \centering
    \begin{tikzpicture}
      \begin{axis}[
          width=1.0\linewidth, 
          height=0.6\linewidth,
          grid=major,
          grid style={dashed,gray!30},
          %xlabel=Perturbation Rate (\%),
          ylabel=Accuracy (\%),
          ylabel near ticks,
        legend style={at={(1.2,1.45)},anchor=north,scale=1.0, cells={anchor=west}, fill=none},
          legend columns=5,
          xtick={0, 0.2,0.4,0.6,0.8,1.0},
          xticklabels = {0, 20,40,60,80,100},
          yticklabel style={
            /pgf/number format/fixed,
            /pgf/number format/precision=3
          },
          ticklabel style = {font=\tiny},
              y tick label style={
        /pgf/number format/.cd,
        fixed,
        fixed zerofill,
        precision=0,
        /tikz/.cd
    },
                    scaled y ticks={real:0.01},
          every axis plot post/.style={thick},
          ytick scale label code/.code={}
        ]
        \addplot
        table[x=ptb_rate
,y=cora_mean,col sep=comma]{data/csi_gnn/random_attack_csi_results.csv};

        \addplot
        table[x=ptb_rate,y=cora_mean,col sep=comma]{data/gcn/random_attack_gcn_results.csv};

        \addplot
        table[x=ptb_rate,y=cora_mean,col sep=comma]{data/gcnjaccard/random_attack_gcnjaccard_results.csv};

        \addplot
        table[x=ptb_rate,y=cora_mean,col sep=comma]{data/prognn/random_attack_prognn_results.csv};

        \addplot
        table[x=ptb_rate,y=cora_mean,col sep=comma]{data/prognnfs/random_attack_prognnfs_results.csv};

        \addplot
        table[x=ptb_rate,y=cora_mean,col sep=comma]{data/gat/random_attack_gat_results.csv};

        \addplot
        table[x=ptb_rate,y=cora_mean,col sep=comma]{data/gcnsvd/random_attack_gcnsvd_results.csv};
        
        \addplot
        table[x=ptb_rate,y=cora_mean,col sep=comma]{data/rgcn/random_attack_rgcn_results.csv};

        \addplot
        table[x=ptb_rate,y=cora_mean,col sep=comma]{data/csi_gnn/contractive_random_results.csv};

        \legend{CSGNN,GCN,GCN-Jaccard,Pro-GNN,Pro-GNN-fs,GAT,GCN-SVD,RGCN, ECSGNN}

        \end{axis}
    \end{tikzpicture}
    %\begin{center}
                \caption{Cora}
        \label{fig:cora_random_app}
            %\end{center}
    \end{subfigure}
    \hfill
    \begin{subfigure}[t]{.49\linewidth}
    \centering
    \begin{tikzpicture}
      \begin{axis}[
          width=1.0\linewidth, 
          height=0.6\linewidth,
          grid=major,
          grid style={dashed,gray!30},
          %xlabel=Perturbation Rate (\%),
          %ylabel=Accuracy (\%),
          ylabel near ticks,
          legend style={at={(0.8,0.65)},anchor=north,scale=1.0, draw=none, cells={anchor=west}, font=\tiny, fill=none},
          legend columns=1,
          xtick={0, 0.2,0.4,0.6,0.8,1.0},
          xticklabels = {0, 20,40,60,80,100},
          yticklabel style={
            /pgf/number format/fixed,
            /pgf/number format/precision=3
          },
              y tick label style={
        /pgf/number format/.cd,
        fixed,
        fixed zerofill,
        precision=0,
        /tikz/.cd
    },
          ticklabel style = {font=\tiny},
          %xmode=log,
                    scaled y ticks={real:0.01},
          every axis plot post/.style={thick},
          ytick scale label code/.code={}
        ]
               \addplot
        table[x=ptb_rate
,y=citeseeer_mean,col sep=comma]{data/csi_gnn/random_attack_csi_results.csv};

        \addplot
        table[x=ptb_rate,y=citeseer_mean,col sep=comma]{data/gcn/random_attack_gcn_results.csv};

\pgfplotsset{cycle list shift=+4}
        \addplot
        table[x=ptb_rate,y=citeseer_mean,col sep=comma]{data/gcnsvd/random_attack_gcnsvd_results.csv};
\pgfplotsset{cycle list shift=+0}
        \addplot
        table[x=ptb_rate,y=citeseer_mean,col sep=comma]{data/prognn/random_attack_prognn_results.csv};

        \addplot
        table[x=ptb_rate,y=citeseer_mean,col sep=comma]{data/prognnfs/random_attack_prognnfs_results.csv};
        
        \addplot
        table[x=ptb_rate,y=citeseer_mean,col sep=comma]{data/gat/random_attack_gat_results.csv};
        
\pgfplotsset{cycle list shift=-4}
        \addplot
    table[x=ptb_rate,y=citeseer_mean,col sep=comma]{data/gcnjaccard/random_attack_gcnjaccard_results.csv};

\pgfplotsset{cycle list shift=+0}
        \addplot
        table[x=ptb_rate,y=citeseer_mean,col sep=comma]{data/rgcn/random_attack_rgcn_results.csv};

        \addplot
        table[x=ptb_rate,y=citeseeer_mean,col sep=comma]{data/csi_gnn/contractive_random_results.csv};

        \end{axis}
    \end{tikzpicture}
    %\begin{center}
            \caption{Citeseer}
        \label{fig:citeseer_random_app}
    %\end{center}
    \end{subfigure}%
\vspace{\baselineskip}
\caption{Node classification accuracy (\%) of ECSGNN and other baselines, under a random adjacency matrix attack. The horizontal axis describes the attack percentage.}
\label{fig:random_attack_contract}
\end{figure}

\clearpage

\textbf{Learning contractive adjacency dynamics is beneficial.} As our CSGNN is composed of two learnable coupled dynamical systems that evolve both the node features and the adjacency matrix, it is interesting to quantify the importance of learning the adjacency matrix dynamics. Therefore, we now report the node classification accuracy (\%) on the Cora and Citeseer datasets, under three different attack settings - non-targeted with metattack, targeted with nettack, and a random adjacency matrix attack.
We choose the strongest attack under each setting, out of our experiments in Section~\ref{sec:exp_results}. Namely, for metattack, we choose the highest perturbation rate of 25\%, for nettack we choose the maximal number of node perturbations of 5, and for random adjacency matrix attack we randomly add edges that amount to 100\% of the edges in the original, clean graph. We report the results in Table~\ref{tab:noAdj}. We denote the CSGNN variant that does not employ an adjacency dynamical system by CSGNN\textsubscript{\text{noAdj}}. As can be seen from the table, there is a positive impact when adding the learnable adjacency matrix component to our CSGNN, highlighting its importance to the learned dynamical system.

\begin{table}[ht]
    \centering
                \vspace{\baselineskip}

        \caption{The influence of learning the adjacency dynamical system $\Psi_{Y_{l}}^{h_l}$. The results show the node classification accuracy (\%) with and without learning the adjacency dynamical system.}
            \vspace{\baselineskip}

    \begin{tabular}{c|ccc|ccc}
        \toprule
      \multirow{2}{*}{Method}   & \multicolumn{3}{|c}{Cora} & \multicolumn{3}{|c}{Citeseer}\\ 
      & 
      nettack & metattack & random & 
      nettack & metattack & random
      \\
      \midrule
       CSGNN\textsubscript{noAdj} &81.90 &	70.25 &	77.19 & 82.20 &	70.17 &	71.28  \\ 
       CSGNN & 83.29 & 	74.46 &	78.38 & 84.60	& 72.94 &	72.70\\
        \bottomrule
    \end{tabular}
    \label{tab:noAdj}
\end{table}

\textbf{Experimental analysis of the expansivity of CSGNN.} We include an experimental analysis of $350$ Optuna runs for the CSGNN model, where only the adjacency dynamics are built to be contractive. The experiment is performed for the Cora dataset, attacked with nettack having a perturbation rate of $3$. Even though the whole network is not forced to be contractive, we notice that out of these runs, only around $9\%$ expands the distances with a factor larger than $1.2$. Furthermore, none of these expansive runs leads to a test accuracy larger than $40\%$. This implies that the Optuna hyperparameter search does not prioritize them. This experiment suggests that the contractivity or mild-expansivity of the network are properties favored by the hyperparameter search because they lead to improved robustness, i.e., higher test accuracy. The results can be seen in Figure \ref{fig:expansivePlot}, where each dot represents one of the $350$ runs. We restrict the $y$-axis only to display values up to $1.5$, based on the previously presented reasoning.
\begin{figure}
    \centering
    \includegraphics[width=0.5\linewidth]{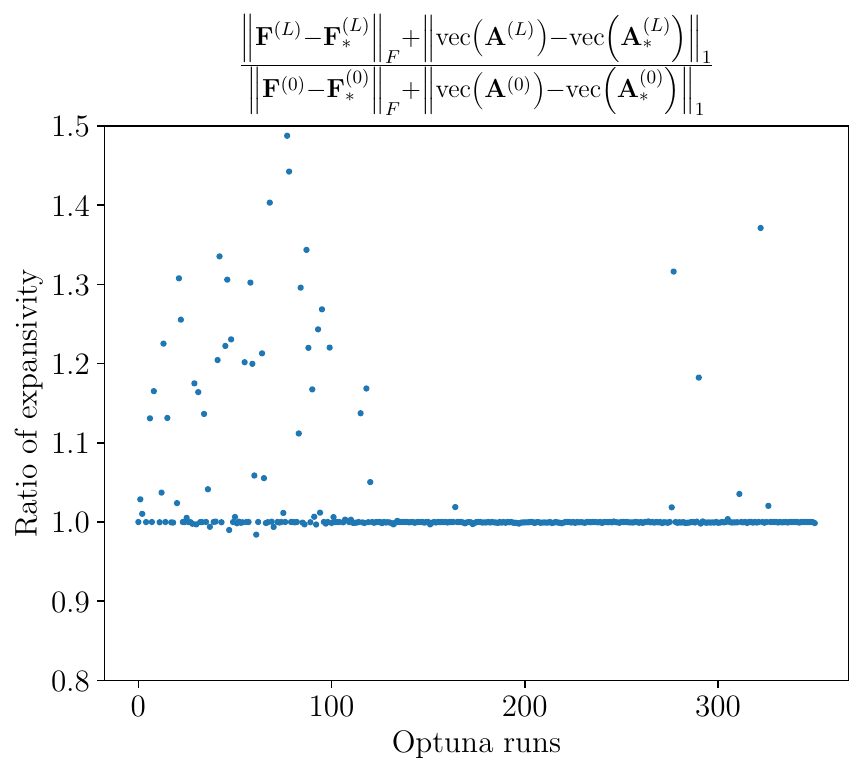}
    \caption{Ratio of expansivity of CSGNN over the Cora dataset, with adjacency matrix attacked with a perturbation rate of $3$.}
    \label{fig:expansivePlot}
    \vspace{1cm}
\end{figure}

{\textbf{Comparison with GNNGuard.} We now provide a comparison of our CSGNN with GNNGuard \cite{zhang2020gnnguard}. We report results, both on Cora and Citeseer, for Metattack with a 20\% perturbation rate and for 5 targeted nodes using Nettack, as reported in \cite{zhang2020gnnguard}. These results further highlight the significance of our method, given that in most cases, CSGNN outperforms GNNGuard.

\begin{table}[ht]
    \centering
                \vspace{\baselineskip}

        \caption{Comparison of our proposed GNN architecture with GNNGuard, based on the experimental setup proposed in \cite{zhang2020gnnguard}. The results show the node classification accuracy (\%).}
            \vspace{\baselineskip}

    \begin{tabular}{c|cc|cc}
        \toprule
      \multirow{2}{*}{Method}   & \multicolumn{2}{|c}{Cora} & \multicolumn{2}{|c}{Citeseer}\\ &
      metattack, 20\%  & nettack, 5 targets & metattack, 20\%  & nettack, 5 targets
      \\
      \midrule
       GNNGuard & 72.20 & 77.50 &	71.10 & 86.50  \\ 
       CSGNN (Ours) & 77.47  & 83.20 &	73.00 & 84.60\\
        \bottomrule
    \end{tabular}
    \label{tab:comparisonGNNGuard}
\end{table}

\textbf{Ablation study for the number of network layers.}
One of the hyperparameters of the networks we consider is the number of network layers. We now report an ablation study of the performance of CSGNN versus  the number of layers on the Cora dataset. The study has a varying number of layers, between 1 and 32. To provide a comprehensive understanding, we provide results both on 5\% and 25\% perturbation rate attacks using metattack.  Our results are reported in Table \ref{tab:ablationLayers}, and they show that CSGNN achieves optimal results in the range of 2-5 layers, which is also what we used as a hyperparameter range in our experiments. 
\begin{table}[!ht]
            \vspace{\baselineskip}

    \centering
        \caption{Ablation study over the number of layers for CSGNN. The tests are done over the Cora dataset, attacked with metattack.}
            \vspace{\baselineskip}

    \begin{tabular}{c|c|c|c|c|c|c|c|c}
    \toprule
        Number of Layers & 1 & 2 & 3 & 4 & 5 & 8 & 16 & 32 \\ \midrule
        5\% Ptb. Rate & 82.57 & 83.98 & 84.12 & 84.10 & 84.12 & 84.08 & 83.99 & 84.05 \\ 
        25\% Ptb. Rate & 70.06 & 71.88 & 73.93 & 74.46 & 74.33 & 74.21 & 74.30 & 74.01 \\ \bottomrule
    \end{tabular}
    \label{tab:ablationLayers}
\end{table}

\textbf{Comparison with GARNET.} We now provide a comparison of the CSGNN with GARNET proposed in \cite{deng2022garnet}. The reported results correspond to the best results shown in \cite{deng2022garnet}. The tables report test node classification accuracies, in $\%$, where they were available in the original paper. As can be seen from Tables \ref{tab:pubmedGARNET} and \ref{tab:coraGARNET}, our method performs competitively with GARNET on both datasets.

\begin{table}[!ht]
    \centering
                \vspace{\baselineskip}

        \caption{Metattack on Pubmed. Comparison between CSGNN and GARNET \cite{deng2022garnet}.}
            \vspace{\baselineskip}

    \begin{tabular}{c|c|c|c|c|c|c}
    \toprule
        Ptb rate (\%) & 0 & 5 & 10 & 15 & 20 & 25 \\ \midrule
        CSGNN & 87.36 & 87.16 & 87.08 & 87.06 & 86.59 & 86.63 \\ 
        GARNET & 86.86 & N/A & 86.24 & N/A & 85.69 & N/A \\ \bottomrule
    \end{tabular}
    \label{tab:pubmedGARNET}
\end{table}

\begin{table}[!ht]
    \centering
                \vspace{\baselineskip}

        \caption{Metattack on Cora. Comparison between CSGNN and GARNET \cite{deng2022garnet}.}
                    \vspace{\baselineskip}

    \begin{tabular}{c|c|c|c|c|c|c}
    \toprule
        Ptb rate (\%) & 0 & 5 & 10 & 15 & 20 & 25 \\ \midrule
        CSGNN & 84.12 & 82.20 & 80.43 & 79.32 & 77.47 & 74.46 \\ 
        GARNET & 82.67 & N/A & 82.17 & N/A & 81.34 & N/A \\ \bottomrule
    \end{tabular}
    \label{tab:coraGARNET}
\end{table}

\textbf{Comparison with HANG.} This paragraph reports the comparison of CSGNN with HANG, presented in \cite{zhao2024adversarial}. As for GARNET, we only include the results available in the original paper of HANG. Tables \ref{tab:pubmetHANG} and \ref{tab:polblogsHANG} display competitive results of CSGNN with respect to the best ones reported in \cite{zhao2024adversarial}.

\begin{table}[!ht]
    \centering
                \vspace{\baselineskip}

        \caption{Metattack on Pubmed. Comparison between CSGNN and HANG \cite{zhao2024adversarial}.}
                    \vspace{\baselineskip}

    \begin{tabular}{c|c|c|c|c|c|c}
    \toprule
        Ptb rate (\%) & 0 & 5 & 10 & 15 & 20 & 25 \\ \midrule
        CSGNN & 87.36 & 87.16 & 87.08 & 87.06 & 86.59 & 86.63 \\ 
        HANG & 85.23 & 85.12 & 85.17 & 85.15 & 85.20 & 85.06 \\ \bottomrule
    \end{tabular}
    \label{tab:pubmetHANG}
\end{table}

\begin{table}[!ht]
    \centering
                \vspace{\baselineskip}

        \caption{Metattack on Polblogs. Comparison between CSGNN and HANG \cite{zhao2024adversarial}.}
                    \vspace{\baselineskip}

    \begin{tabular}{c|c|c|c|c|c|c}
    \toprule
        Ptb rate (\%) & 0 & 5 & 10 & 15 & 20 & 25 \\ \midrule
        CSGNN & 95.87 & 95.79 & 93.21 & 92.08 & 90.10 & 87.37 \\ 
        HANG & 94.63 & 94.38 & 92.46 & 90.85 & 89.19 & 86.89 \\ \bottomrule
    \end{tabular}
    \label{tab:polblogsHANG}
\end{table}
\textbf{Evaluating on injection attacks.}
So far, we have considered the performance of CSGNN on poisoning attacks that modify the original adjacency matrices and node features. Another popular class of attacks on graph data is the class of injection attacks, which add nodes to the graph, and which are generally studied in the setting of evasion (i.e.\ a model is trained on the clean graph and evaluated on the attacked graph). Here, we will study the performance of CSGNN when applied to graphs that have been attacked using such injection attacks. We will use the Graph Robustness Benchmark (GRB) \cite{zhengGraphRobustnessBenchmark2021}, which includes standard datasets attacked using both the SPEIT method and the TDGIA method. Let us note that GRB has its own versions of standard graph datasets including Cora and Citeseer, with its own choice of train/validation/test splits. Furthermore, it divides the test set into three parts of equal size, named ``easy'', ``medium'' and ``hard'': the degrees of the nodes contained in these subsets increase as the difficulty level increases (the rationale being that it should be easier to influence the prediction for a node that has a low degree). In Table \ref{tab:grb_cora} and Table \ref{tab:grb_citeseer}, we report the performance of CSGNN on each of these subsets, and on the full test set. As can be seen in these tables, the performance of the proposed method suffers quite badly under the considered injection attacks. It is worth noting that CSGNN has only been trained with the classification task loss on the clean data, and it may be necessary to adapt the training procedure to better deal with potential evasion attacks, something which can be investigated in more detail in future work.

\begin{table}[ht]
    \centering
                \vspace{\baselineskip}

        \caption{Node classification accuracy (\%) of CSGNN on GRB's version of Cora, and corresponding graphs that have been attacked with the SPEIT and TDGIA injection attacks. For each attack, there are 10 different attacked graphs and we report the average and standard deviation of the accuracy.}
            \vspace{\baselineskip}

    \begin{tabular}{c|cccc}
        \toprule
      \multirow{2}{*}{Method}   & \multicolumn{4}{|c}{Test Set}\\ 
      & 
      Hard & Medium & Easy & Full 
      \\
      \midrule      
      Data & 82.09 &	72.39 & 66.79 & 73.76 \\ SPEIT & 48.84$\pm$2.70 &	51.79$\pm$1.76 & 49.10$\pm$1.72 &	49.65$\pm$0.79 \\
      TDGIA & 40.34$\pm$1.88 & 41.01$\pm$4.00 & 52.87$\pm$4.24 & 49.28$\pm$2.39 \\ 
        \bottomrule
    \end{tabular}
    \label{tab:grb_cora}
\end{table}

\begin{table}[ht]
    \centering
                \vspace{\baselineskip}

        \caption{Node classification accuracy (\%) of CSGNN on GRB's version of Citeseer, and corresponding graphs that have been attacked with the SPEIT and TDGIA injection attacks. For each attack, there are 10 different attacked graphs and we report the average and standard deviation of the accuracy.}
            \vspace{\baselineskip}

    \begin{tabular}{c|cccc}
        \toprule
      \multirow{2}{*}{Method}   & \multicolumn{4}{|c}{Test Set}\\ 
      & 
      Hard & Medium & Easy & Full 
      \\
      \midrule      
      Data & 78.68 &	70.22 & 70.53 & 73.15 \\ SPEIT & 27.59$\pm$2.48 &	32.81$\pm$2.28 & 35.89$\pm$2.74 &	32.52$\pm$2.43 \\
      TDGIA & 29.03$\pm$2.24 & 32.26$\pm$3.29 & 36.87$\pm$4.10 & 30.32$\pm$3.11 \\ 
        \bottomrule
    \end{tabular}
    \label{tab:grb_citeseer}
\end{table}
\section{Complexity and Runtimes}
\label{app:complexity}
This appendix provides an analysis of the complexity and runtimes of our proposed graph neural network, accounting for the additional overhead cost due to the updates of the adjacency matrix. Inference times and memory consumption are based on experiments run on the Cora dataset with an Nvidia RTX-3090 GPU with 24GB of memory.
In theory, the added runtime complexity for a CSGNN adjacency matrix learning layer is of $O(9 \cdot n^2)$ where $n$ is the number of nodes. Thus, assuming $L$ layers, the overall complexity of a CSGNN network is $O(L(n \cdot c^2 + m\cdot c + 9\cdot n^2))$, whereas node-based ODE systems usually are of runtime complexity $O(L(n \cdot c^2 + m\cdot c))$. We recall that $n$ is the number of nodes, $m$ is the number of edges in the graph, and the term of $c^2$ stems from the channel mixing term in \Cref{eq:nodeFeatureDynamics}. The factor of $9$ stems from the $9$ parameters to be learned in Equation \Cref{eq:equivariantMap}. 
Below, we report the runtimes and memory consumption of our CSGNN and compare it with CSGNN$_{\textrm{noAdj}}$ (that is, CSGNN without the adjacency matrix update), and Pro-GNN for reference. These experiments were run on the Cora dataset with networks with 64 channels and 2 layers. It can be seen that both CSGNN and Pro-GNN require more resources. However, such an approach offers improved performance.

\begin{table}[ht]
    \centering
                \vspace{\baselineskip}

        \caption{Comparison of our proposed GNN architecture with Pro-GNN in terms of runtime in milliseconds (measured per epoch) and memory consumption in megabytes, as well as the accuracy (\%) of the models under different attacks. The cost of the training refers to one epoch.}
            \vspace{\baselineskip}

    \begin{tabular}{c|cc|c|cc}
        \toprule 
        \multirow{2}{*}{Method} & \multicolumn{2}{|c|}{Time (ms)} & \multirow{2}{*}{Memory (MB)} & \multicolumn{2}{|c}{Classification accuracy (\%)}\\
        &Training & Inference& & metattack, 25\% & nettack, 5 targets\\
      \midrule
      Pro-GNN & 1681.17 & 1.01 & 1989 & 69.72 & 66.21\\
      CSGNN$_\textrm{noAdj}$ & 3.29 & 1.32 & 891 & 70.25 & 81.90\\
      CSGNN & 9.24 & 5.96 & 1623 & 74.46 & 83.29\\
        \bottomrule
    \end{tabular}
    \label{tab:runtimeComparison}
\end{table}

Also, we note that while both our CSGNN and Pro-GNN propose methods to evolve the adjacency matrix, our CSGNN shows significantly lower training computational time, due to our solution taking the form of a learned neural ODE system for the adjacency matrix, while Pro-GNN solves an optimization problem with feedback to the downstream task (which is also an end-to-end solution, although different than ours). For inference, Pro-GNN uses a fixed adjacency matrix that is found by training, while our CSGNN evolves the input adjacency matrix using the learned network. This can also be seen as an advantage of CSGNN, as it can be used for inference on different kinds of attacks and, therefore, can potentially generalize to different attacks, and this is a future research direction. 

To conclude, we remark that the investigation of techniques that allow for a reduction of the computational costs in the updates of the adjacency matrices is an important and interesting topic on its own that is left for future work.

\printbibliography[heading=subbibliography, title={Supplementary Material References},segment=1]
}
\end{refsection}

\end{document}